\definecolor{greyC}{RGB}{180,180,180}
\definecolor{greyL}{RGB}{235,235,235}
\newcommand*{\addFileDependency}[1]{
  \typeout{(#1)}
  \@addtofilelist{#1}
  \IfFileExists{#1}{}{\typeout{No file #1.}}
}
\theoremstyle{plain}
\newtheorem{theorem}{Theorem}[section]
\newtheorem{proposition}[theorem]{Proposition}
\newtheorem{lemma}[theorem]{Lemma}
\newtheorem{corollary}[theorem]{Corollary}
\theoremstyle{definition}
\newtheorem{definition}{Definition}[section]
\newtheorem{assumption}{Assumption}[section]
\newtheorem{condition}[theorem]{Condition}
\theoremstyle{remark}
\newtheorem{remark}{Remark}[section]
\newcommand{\cmark}{\ding{51}}%
\newcommand{\xmark}{\ding{55}}%
\newcommand{\lambdamax}{\lambda_{\text{max}}}
\newcommand{\lambdamin}{\lambda_{\text{min}}}
\newcommand{\fisherinfj}{I_{j,\tau^{\perp}}}
\newcommand{\fisherinfk}{I_{k,\tau^{\perp}}}
\newcommand{\conditionnum}{\gamma_{\tau^{\perp}}}
\newcommand{\hiddensegstart}{h^{\text{start}}}
\newcommand{\promptseq}{S_n}
\newcommand{\Lzeroone}{L_{\text{0-1}}}
\newcommand{\indicator}{\mathbf{1}}
\newcommand{\badset}{\sB}
\newcommand{\errstart}{\epsilon^\theta_{\text{start}}}
\newcommand{\errdelim}{\epsilon^\theta_{\text{delim}}}
\newcommand{\obs}{o}
\newcommand{\obsset}{\sO}
\newcommand{\obsseg}{O}
\newcommand{\obsdelim}{o^{\text{delim}}}
\newcommand{\y}{y}
\newcommand{\Xtest}{x_{\text{test}}}
\newcommand{\ytest}{y_{\text{test}}}
\newcommand{\pprompt}{p_{\text{prompt}}}
\newcommand{\ppromptstart}{p_{\text{prompt}}}
\newcommand{\minv}{{-1}}
\DeclareMathOperator*{\argmin}{arg\,min}
\DeclareMathOperator*{\argmax}{arg\,max}
\newlength{\widebarargwidth}
\newlength{\widebarargheight}
\newlength{\widebarargdepth}
\newcommand\sB{\ensuremath{\mathcal{B}}}
\newcommand\sO{\ensuremath{\mathcal{O}}}
\newcommand{\E}{\ensuremath{\mathbb{E}}} 
\newcommand{\Lcal}{\mathcal{L}}
\newcommand{\Mcal}{\mathcal{M}}
\newcommand{\Ncal}{\mathcal{N}}
\newcommand{\Tcal}{\mathcal{T}}
\DeclareMathAlphabet{\mathsfit}{\encodingdefault}{\sfdefault}{m}{sl}
\SetMathAlphabet{\mathsfit}{bold}{\encodingdefault}{\sfdefault}{bx}{n}
\def\sB{{\mathbb{B}}}
\def\sO{{\mathbb{O}}}
\newcommand{\BlackBox}{\rule{1.5ex}{1.5ex}}  
\def\QED{~\rule[-1pt]{5pt}{5pt}\par\medskip}
\newenvironment{proof}{\par\noindent{\em Proof:\ }}{\hfill\BlackBox\\[.0mm]}
\newtheorem{theorem}{Theorem}[section]
\newtheorem{lemma}{Lemma}[section]
\newtheorem{definition}{Definition}[section]
\newcommand{\benr}{\begin{eqnarray}}
\newcommand{\eenr}{\end{eqnarray}}
\newcommand{\benrr}{\begin{eqnarray*}}
\newcommand{\eenrr}{\end{eqnarray*}}
\newcommand{\ben}{\begin{equation}}
\newcommand{\een}{\end{equation}}
\newcommand{\benn}{\begin{equation*}}
\newcommand{\eenn}{\end{equation*}}
\definecolor{mygray}{gray}{.92}
\definecolor{baselinecolor}{rgb}{1, 1, 1}
\definecolor{ourmethodcolor}{rgb}{0.94, 0.97, 1}
\newcommand{\noisy}[1]{\textcolor[rgb]{0.70, 0.29, 0.27}{#1}}
\title{Mediator: Memory-efficient LLM Merging with Less Parameter Conflicts and Uncertainty Based Routing}
\author{\hspace{-1mm}
Kunfeng Lai\textsuperscript{1,*} \quad
Zhenheng Tang\textsuperscript{2,*} \quad
Xinglin Pan\textsuperscript{1} \quad
Peijie Dong\textsuperscript{1} \\
\bf Xiang Liu\textsuperscript{1} \quad
Haolan Chen\textsuperscript{3} \quad
Huacan Wang\textsuperscript{4} \quad
Li Shen\textsuperscript{5} \\
\bf Bo Li\textsuperscript{2} \quad
Xiaowen Chu\textsuperscript{1,2} \\
\textsuperscript{1} The Hong Kong University of Science and Technology (Guangzhou) \\
\textsuperscript{2} The Hong Kong University of Science and Technology \\
\textsuperscript{3} Platform and Content Group, Tencent \\
\textsuperscript{4} UCAS \quad
\textsuperscript{5} Sun Yat-sen University
}
\begin{document}

\maketitle

\begin{abstract}
Model merging aggregates Large Language Models (LLMs) finetuned on different tasks into a stronger one. However, parameter conflicts between models leads to performance degradation in averaging. While model routing addresses this issue by selecting individual models during inference, it imposes excessive storage and compute costs, and fails to leverage the common knowledge from different models. 
In this work, we observe that different layers exhibit varying levels of parameter conflicts. Building on this insight, we  average layers with minimal parameter conflicts and use a novel task-level expert routing for layers with significant conflicts.
To further reduce storage costs, inspired by task arithmetic sparsity, we decouple multiple fine-tuned experts into a dense expert and several sparse experts. Considering the out-of-distribution samples, we select and merge appropriate experts based on the task uncertainty of the input data. 
We conduct extensive experiments on both LLaMA and Qwen with varying parameter scales, and evaluate on real-world reasoning tasks. Results demonstrate that our method consistently achieves significant performance improvements while requiring less system cost.\footnote{$^{*}$Equal contribution. Correspondence to: xwchu@hkust-gz.edu.cn}
\end{abstract}

\section{Introduction}
\label{sec:Introduction}

\begin{wrapfigure}{r}{0.5\textwidth}
\setlength{\abovedisplayskip}{-1pt}
    \subfigbottomskip=0pt
    \subfigcapskip=1pt
    \setlength{\abovecaptionskip}{-1pt}
    \centering
    \includegraphics[width=1.0\linewidth]{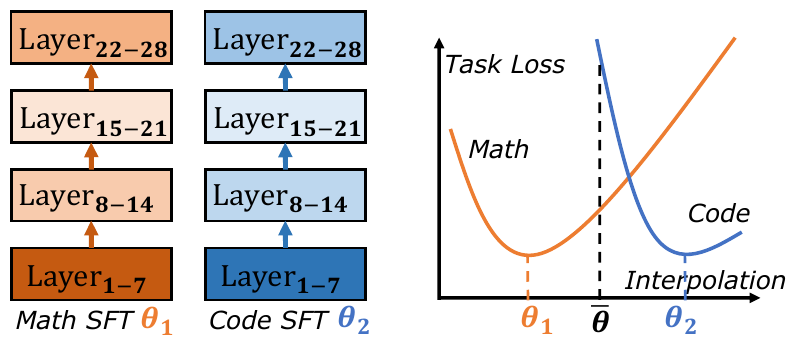}
    \caption{\small Knowledge conflict across finetuned LLMs and math and code dataset. Deeper color means larger parameter conflicts. And it is difficult for the linear averaged model to achieve low loss of both tasks.}
    \label{fig:knowledge-conflict}
\end{wrapfigure}

Finetuning Large Language Models (LLMs) enables them to adapt to downstream applications including sentiment analysis~\citep{sun2023sentiment}, text summarization~\citep{fang2024multillmtextsummarization}, mathematical reasoning~\citep{ruis2024proceduralknowledgepretrainingdrives}, code writing~\citep{jiang2024surveylargelanguagemodels},  roleplay chatting \citep{chen2025rolplay} so on.  Open-source platforms such as Huggingface~\citep{wolf2019huggingface} facilitate access to a diverse array of highly trained expert models with varying capabilities. Considering the computational resources are scarce and implementing green computing~\citep{samsi2023wordswattsbenchmarkingenergy,stojkovic2024greenerllmsbringingenergyefficiency,bai2024beyond}, the community is increasingly interested in how to merge these models to create a superior LLM that retains the strengths of finetuned ones without retraining~\citep{yang2024model,lu2024twin,Du:2024aa,Yadav:2023aa}.







One predominant merging strategy is model averaging~\citep{yang2024model,FisherMerging_NeurIPS2022,thennal2024fisher,DARE_Arxiv2023}, which computes weighted averages of parameters to synthesize collective knowledge~\citep{FisherMerging_NeurIPS2022,Yadav:2023aa}. However, model averaging faces challenges from parameter conflicts arising from diverse finetuning tasks, leading to performance degradation as shown in Figure~\ref{fig:knowledge-conflict}. Another direction is model routing~\citep{lu2024twin,muqeeth2024soft,Yang:2023aa,Du:2024aa,Lu:2024aa,He:2024aa,Wei:2024aa,Chen:2024aa}, which aggregates models and performs model selection during inference. This method avoids parameter conflicts but incurs significant computing and storage (system) costs due to maintaining all finetuned models. This motivates us to rethink the following questions:
\vspace{-0.0cm}
\begin{center}
\vspace{-0.0cm}
\emph{How to merge common and unique knowledge from various finetuned models while simultaneously avoiding parameter conflicts and minimizing system costs?}
\vspace{-0.0cm}
\end{center}
\vspace{-0.0cm}


To study the noise influences, we first quantify the parameter conflicts between fine-tuned LLMs using the sign consistency of their task arithmetics. Our analysis reveals a distinct layer-wise distribution of these conflicts: the front and last layers exhibit the highest levels of conflict, suggesting that they are particularly sensitive to simple averaging. In contrast, the central layers demonstrate comparatively lower conflict levels, indicating that they successfully retain shared common knowledge across different tasks. 

Motivated by these observations, we introduce Mediator, an adaptive model merging framework designed to enhance LLM merging with minimal storage and computational overhead. First, Mediator employs adaptive merging by averaging the low-conflict layers to capture common knowledge~\citep{Yadav:2023aa,he2024localize}, while treating high-conflict layers as dynamically routed experts to preserve task-specific knowledge without dilution. Second, we leverage the high sparsity of task arithmetics~\citep{Yadav:2023aa} to decompose models into base and task-specific components~\citep{TaskArithmetic_ICLR2023,he2024localize,Yang:2023aa,Tang:2024aa}, radically reducing storage requirements from 50\% to 7\%. Third, to align with how LLMs are fine-tuned on complete downstream sentences, we propose task-level expert routing rather than token-level routing~\citep{Lepikhin:2020aa,Sukhbaatar:2024aa,Zhou:2022aa,Jiang:2024aa}, and incorporate an uncertainty-based expert selection mechanism to handle out-of-distribution (OOD) samples. Ultimately, Mediator successfully preserves complex reasoning abilities~\citep{zelikman2022star,kojima2022large,guo2025deepseek} and enables running a model comparable to a 7B $\times$ 4 LLM ensemble on a single RTX 4090 GPU with negligible performance degradation (0.06\% $\sim$ 0.3\%) (Appendix~\ref{appx:system-performance-consumer}).

\begin{itemize}[leftmargin=*]
\item \noindent We investigate and demonstrate that different layers of fine-tuned models exhibit varying levels of parameter conflicts (Section~\ref{sec:Understanding-Conflict}). Then we propose Mediator, an adaptive layer-wise model merging approach to average layers with minimal conflicts and use task-level expert routing for layers with significant conflicts (Section~\ref{sec:AdaptiveMerging}).

\item \noindent We propose a method in Mediator to decouple fine-tuned experts into one dense expert and several sparse experts (Section~\ref{sec:expert-decomposition}), achieving high compression ratio while maintaining accuracy. Our approach enables dynamic expert selection based on task uncertainty (Section~\ref{sec:expert-routing}), effectively handling OOD data.

\item \noindent We conduct experiments based on modern LLMs including LLaMA and Qwen with CoT enhanced finetuning the real-world cutting-edge LLM evaluation tasks. Results show that Mediator achieves performance improvements and less system cost compared to existing methods. (Section~\ref{sec:Experiments}).
\end{itemize}

\vspace{-0.0cm}
\section{Preliminary and Related Works}
\vspace{-0.0cm}
\label{sec:Preliminary}

\subsection{Language Modeling and LLM Finetuning}\label{sec:LanguageModel}

\textbf{Task Data Distribution.}
Given a set of different downstream tasks $\Tcal $, based on the sampling task $\tau \in \Tcal$, the pretraining data sample is a sequence $x_{1:T}$ of tokens with the maximum length $T$ generated from a distribution $p_{\tau} = p(x_{1:T} \vert \tau)  = p(\obs_1, \dots, \obs_T \vert \tau)$~\citep{xie2022an,wies2023learnability,hahn2023theory,li2024language}. We define the pretraining data is sampled from $p(x|\Tcal^\star) = \int_{\tau^\star \in \Tcal^\star} p(\obs_1, \dots, \obs_T \vert \tau)p(\tau^\star)d \tau^\star$. Each token $\obs$ is sampled from a vocabulary $\obsset$. And both ($\Tcal$ and $\Tcal^\star$ belong to a large task family $\Omega$, i.e. $\Tcal,\Tcal^\star \subset \Omega$.




\textbf{Language Modeling.}
Current LLMs~\citep{gpt3_2020,touvron2023llama2,xie2022an} usually utilize the next word prediction as the language modelling, which predicts the next token $x_t$ given the previous tokens $x_{1:t-1}$ for all $t=1,\dots, T$. Formally, a LLM parameterized by $\theta$ is a distribution $f_{\theta}(x_t \vert x_{1:t-1})$. And it is pretrained on a huge corpus sampled from the pretraining distribution $p(x|\Tcal^\star)$~\citep{xie2022an}.



\textbf{Finetuning LLM.} Normally, for each downstream task $\tau \in \Tcal$, finetuning LLM is to minimize the cross-entropy loss function as $L_\text{CE}(\theta, \tau)= -\sum_{t=1}^T \mathbb{E} [p_{\tau}(x_t|x_{1:t-1})\cdot\log f_{\theta}(x_t|x_{1:t-1})]$. After finetuning, the model parameters $\theta$ are updated to $\theta_{\tau}$. 




\vspace{-0.0cm}
\subsection{Model Merging}\label{sec:Model Merging}


Given finetuned task-specific LLMs $\{\theta_1, \theta_2, \dots, \theta_{n_{\tau}}\}$ finetuned on task set $\Tcal$, where $n_{\tau}=|\Tcal|$, model merge aims to find a unified model parameterized by $\phi$ that can achieve the low loss on all tasks $\Tcal$ as following

\begin{align}\label{eq:model-merging-goal}
    \min_{\phi}  L_\text{CE}(\phi, \Tcal) = \frac{1}{n_{\tau}} \sum_{\tau \in \Tcal} L_\text{CE}(\phi, \tau).
\end{align}
Different from training $\phi$ that can be optimized towards any direction, model merging aims to exploit combining $\{\theta, \theta_1, \theta_2, \dots, \theta_{n_{\tau}}\}$ to obtain the $\phi$. The current model merging methods include following two categories.

\textbf{Model Averaging.} Averaging parameters to fuse the knowledge from different finetuned models is straightforward. Mathematically, averaged model is $\phi = \sum_{\tau \in \Tcal} w_\tau \theta_{\tau}$, in which $w_\tau$ is the averaging weight and $\sum_{\tau \in \Tcal} w_\tau = 1$. Considering different model parameters have different importance on downstream tasks~\citep{kirkpatrick2017overcoming,sun2024a,dong2024pruner}, assigning larger weights to more important parameters. Current methods usually utilize Taylor expansion~\citep{lee2018snip,FisherMerging_NeurIPS2022} to measure the importance of the parameters. However, the knowledge conflicts still exist for parameters that have high importance simultaneously.


\textbf{Model Routing.} To completely avoid conflicts, another way is to select the most relevant model for each task. This approach typically employs a selection mechanism to activate the most relevant model based on the input task $\tau$~\citep{Yang:2023aa} or sequence $x_{1:t}$ at $t$-th token like the Mixture of Experts (MoE)~\citep{tang2024moemerging}. Current methods propose different routing and re-training mechanisms to improve the performance~\citep{He:2024aa, Wei:2024aa,sukhbaatar2024branchtrainmix}. However, these methods fail to consider merging parameters to find the common knowledge that can be shared across different tasks, and cause large memory and computational costs. We leave detailed discussions about related works in Appendix~\ref{appx:more-related-works}).

\textbf{Out-of-distribution Data.} In real-world deployment, the test data $x$ may come from other distributions instead of the $p(x|\tau)_{\tau \in \Tcal}$. To this end, we need to consider how to handle OOD data $x$ within merging LLMs. In this work, we mainly consider two OOD cases and tackle them in Section~\ref{sec:Method}.



\vspace{-0.0cm}
\section{Understanding Conflict between LLMs}\label{sec:Understanding-Conflict}
\vspace{-0.0cm}

\begin{wraptable}{r}{0.5\textwidth}
    \centering
    \setlength{\abovedisplayskip}{-2pt}
    \subfigbottomskip=-1pt
    \subfigcapskip=1pt
    \setlength{\abovecaptionskip}{-2pt}
    \caption{Accuracy of finetuning Llama 3.2 3B.}
    \resizebox{0.9\linewidth}{!}{
    \begin{tabular}{c|cccc}
    \toprule
    Model & GSM8K Math  & TriviaQA & H.Eval Code & All tasks \\
    \midrule
    $\theta$ (Pretrained) & 27.52 & 57.71 & 22.56 & 35.93 \\
    $\theta_1$ (Math SFT) & \textbf{46.47} & 54.59 & 25.00 & 42.02 \\
    $\theta_2$ (QA SFT) & 32.75 & \textbf{61.45} & 28.05 & 40.75 \\
    $\theta_3$  (Coding SFT) & 33.13 & 57.71 & \textbf{40.85} & 43.90 \\
    $\phi_{\text{AVG}}$ & 42.61 & 60.99 & 31.30 & 44.97 \\
    $\phi_{\text{SEL}}$ & \textbf{46.47} & \textbf{61.45} & \textbf{40.85} & \textbf{49.59} \\
    \bottomrule
\end{tabular}}
\label{tab:preliminary-experiments}
\end{wraptable}
\vspace{-0.0cm}

\textbf{Preliminary Experiments.} We finetune Llama-3.2-3B on three datasets and evaluation with according tasks (details in Section~\ref{sec:Experiments}). Table~\ref{tab:preliminary-experiments} shows the performance of the merged model and  individual finetuned models. We use $P(\theta, \tau)$ to represent the performance of the $\theta$ on $\tau$.


\textbf{Comparing Performance on All Tasks.} We write $P_{\text{ORI}} = P(\theta, \Tcal)$ as the performance of the original model $\theta$ on all tasks $\Tcal$, the $P_{\text{AVG}} = P(\phi_{\text{AVG}}, \Tcal)$ as the performance of the averaged model $\phi_{\text{AVG}}$ on all tasks $\Tcal$. We write the $P_{\text{SEL}} = \frac{1}{n_{\tau}}  \sum_{\tau \in \Tcal} P(\phi_{\text{SEL}}, \tau)$ as the performance of the selection based model $\phi_{\text{SEL}}$ on all tasks $\Tcal$. The results show that the
\begin{align}\label{eq:SEL-AVG-ORI}
    P_{\text{ORI}} < P_{\text{AVG}} < P_{\text{SEL}},
\end{align}
which means that finetuning and averaging can successfully increase the model performance. However, simply averaged model cannot recover the task performance of the corresponding finetuned models.


\textbf{Comparing Performance on Individual Models.} Table~\ref{tab:preliminary-experiments} shows that while the model $\theta_{\tau}$ has the best performance on its according task $\tau$, its performance on other tasks $\Tcal \setminus \tau$ is lower than $\phi_{\text{AVG}}$.

\begin{figure*}[!th]
    \centering
    \includegraphics[width=0.8\linewidth]{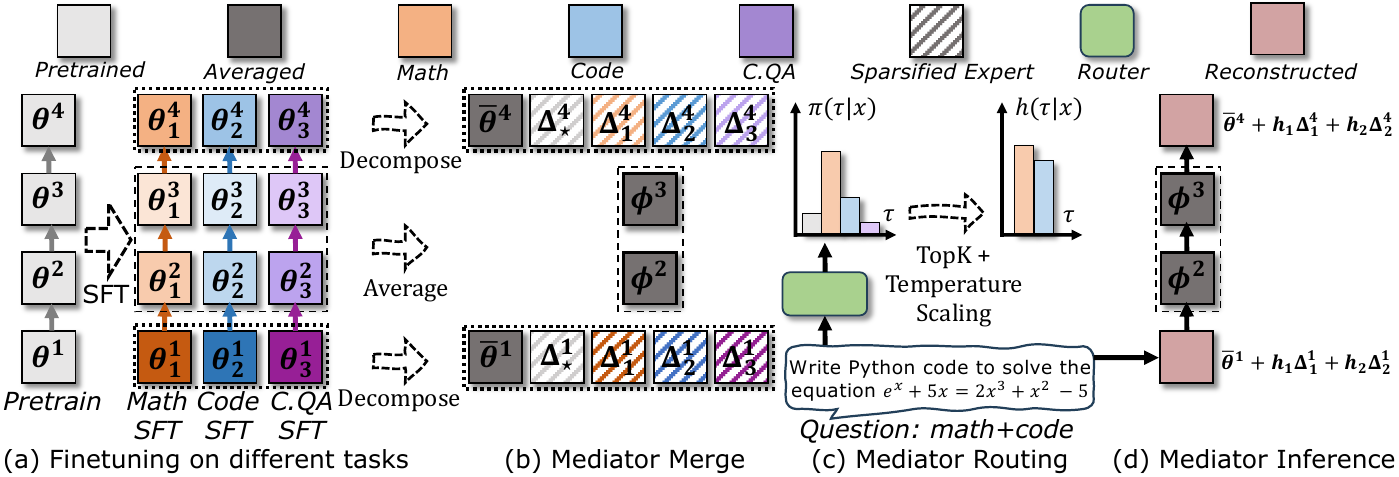}
    \vspace{-0.0cm}
    \caption{The framework of Mediator.}
\vspace{-0.0cm}
\label{fig:Mediator}
\end{figure*}

Table~\ref{tab:preliminary-experiments} indicates that the averaged model can still benefit from merging knowledge from different finetuned models. Thus, a better merging strategy should be able to average the parameters that have less parameter conflict to find the common knowledge that can be shared across different tasks and avoid the parameter conflict that degrades the performance of the finetuned model.

\begin{wrapfigure}{r}{0.5\textwidth}
    \centering
    \includegraphics[width=1.0\linewidth]{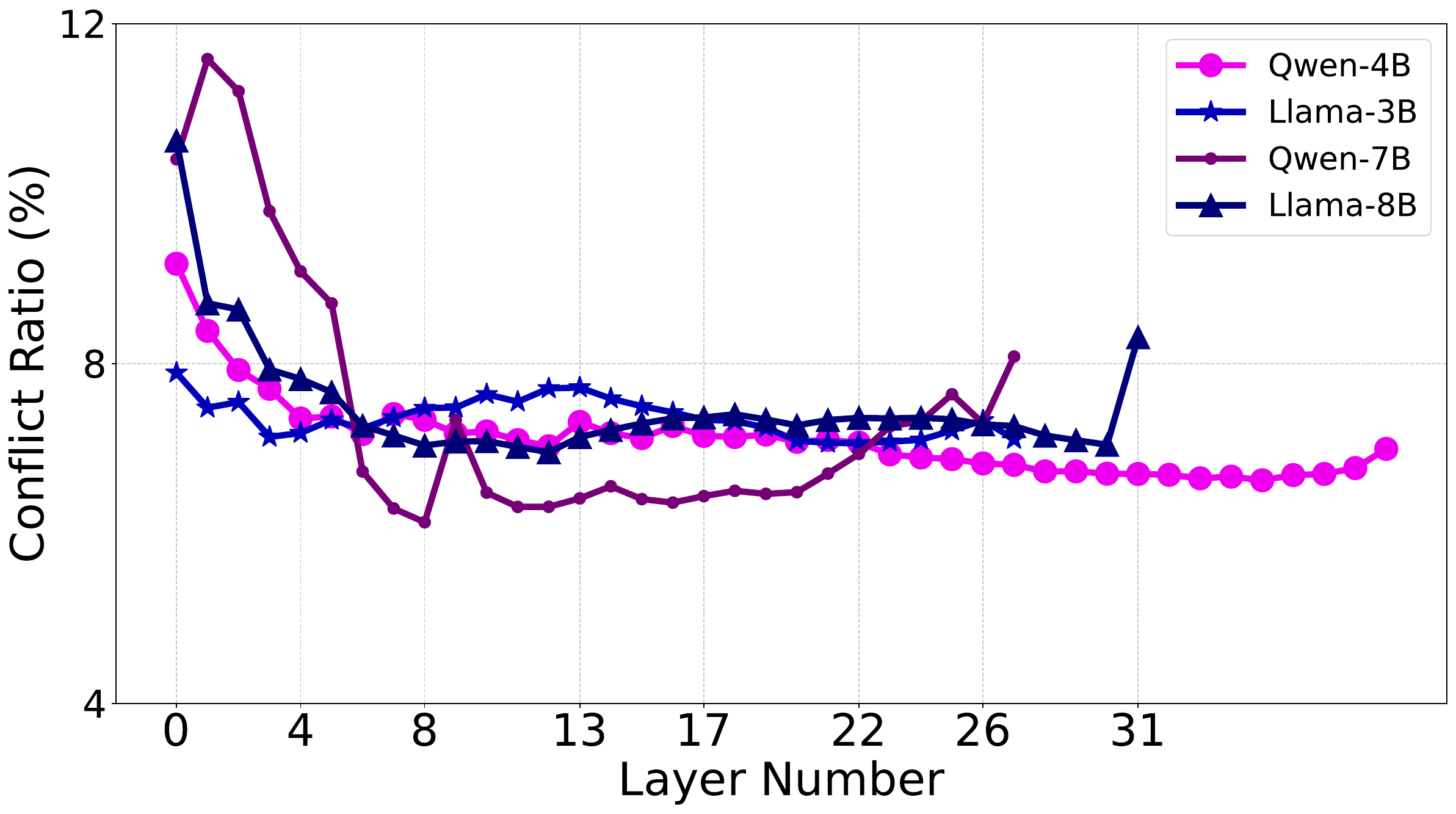}
    \vspace{-0.0cm}
    \caption{Parameter conflict distribution across different layers of finetuned models (Qwen 2.5 7B).}
\label{fig:parameter-conflict}
\vspace{-0.0cm}
\end{wrapfigure}
\vspace{-0.0cm}

\begin{definition}[Task Arithmetic]\label{def:task-arithmetic}
    A task arithmetic on task $\tau$ is the parameter difference between the finetuned LLM $\theta_{\tau}$ and the pre-trained LLM $\theta$, i.e., $\Delta_{\tau} = \theta_{\tau} - \theta$. 
\end{definition}


\textbf{Denoising Parameters.}
Due to the stochastic optimization process in finetuning, some elements in $\Delta_\tau$ are noisy and do not influence the performance~\citep{Yadav:2023aa,he2024localize}. Thus, before measuring the parameter conflicts~\citep{Yadav:2023aa,he2024localize}, we firstly denoise the parameters by removing the elements in $\Delta_\tau$ that have have small magnitude. To this end, we model the update directions of different elements as the Gaussian distribution $\Ncal_{\text{UPD}}(\mu_{\text{UPD}}, \sigma_{\text{UPD}}^2)$, where $\mu_{\text{UPD}}$ is the mean of the update direction and $\sigma_{\text{UPD}}^2$ is the variance. Then, we denoise the parameters by removing the elements within range ($\mu_{\text{UPD}} - \sigma_{\text{UPD}}$, $\mu_{\text{UPD}} + \sigma_{\text{UPD}}$) and obtain the new parameter arithmetic $ \hat{\theta}_{\tau} = \theta + \hat{\Delta}_{\tau}$ (details in Appendix~\ref{appx:measuring-parameter-conflicts}).



\textbf{Measuring Parameter Conflict.} Following~\citep{Yadav:2023aa}, we measure the parameter conflict between different finetuned models using a sign-based approach. For layer $l \in \Lcal$, given parameters $w_i^l \in \theta_{\tau_i}^l$ and $w_j^l \in \theta_{\tau_j}^l$ from two different models, we consider them conflicting if they have opposite signs, i.e., $sgn(w_i^l w_j^l) = -1$. 
We define the conflict ratio of layer $l$ as the proportion of conflicting parameters in that layer:
\begin{align}
    d_l = (\sum_{i,j} \mathbb{I}(sgn(w_i^l w_j^l) = -1))/|\theta^l|,
\end{align}
where $|\theta^l|$ is the total number of parameters in layer $l$, and $\mathbb{I}(\cdot)$ is the indicator function.

Figure~\ref{fig:parameter-conflict} shows that the parameter conflict is higher in the front and last layers, and lower in the central layers. This phenomenon indicates that central layers share more common knowledge, while the front and last layers capture more task-specific knowledge.



\vspace{-0.0cm}
\section{The Design of Mediator}\label{sec:Method}
The high-level idea of Mediator is to hybridly combine averaging and routing to preserve the downstream knowledge and avoid the parameter conflict (Sectino~\ref{sec:AdaptiveMerging}) according to parameter conflict distribution across different layers. We decompose the downstream models as the base model and experts of task arithmetics, which brings into the opportunity to sparsify the task arithmetics to prune out the noisy parameters to reduce the memory costs (Section~\ref{sec:expert-decomposition}). Lastly, considering the OOD inputs, we propose a Bayesian expert routing (Section~\ref{sec:expert-routing}) to better combine knowledge from different experts. 


\subsection{Adaptive Layer-wise Model Averaging and Routing}\label{sec:AdaptiveMerging}

\begin{wrapfigure}{R}{0.5\textwidth}
\begin{minipage}{0.5\textwidth}
\begin{algorithm}[H]
	\caption{Adaptive Merging and Routing with Sparsified Expert Decomposition in Mediator}
	\label{algo:adaptive-merging}
	\textbf{Input: } Different finetuned models $\theta_1, \theta_2, \dots, \theta_{n_{\tau}}$.\\
    \textbf{Output:} The merged layers $\Phi$.\\
    \begin{algorithmic}[1]
        \STATE Calculate the conflict distribution $\left\{d_l\right\}_{l=1, \cdots, |\Lcal|}$;
        \STATE Estimate $\mu, \sigma$ based on $\left\{d_l\right\}_{l=1, \cdots, |\Lcal|}$;
        \FOR{layer $ l=1, \cdots, |\Lcal|$}
            \IF{$d_l < \mu + \sigma$}
                \STATE $\phi_{\text{AVG}}^l = \Mcal(\theta_1^l, \theta_2^l, \dots, \theta_{n_{\tau}}^l)$; 
                \ELSE
                \STATE $\bar{\theta}^{l} = 1/n_{\tau} \sum_{\tau\in\Tcal} \theta_{\tau}^l $;
                \STATE $\Delta_{\tau}^l = \theta_{\tau}^l - \bar{\theta}^{l}$, $\Delta_{\star}^l = \theta^l - \bar{\theta}^{l}$;
                \STATE $\hat{\Delta}_{\tau}^l = \text{Denoise}(\Delta_{\tau}^l)$;
                \STATE $\phi_{\text{UP}}^l = \left\{\bar{\theta}^{l}, \hat{\Delta}_{1}^l,\hat{\Delta}_{2}^l,\dots,\hat{\Delta}_{n_{\tau}}^l,\hat{\Delta}_{\star}^l  \right\}$;
                \ENDIF
            \STATE Insert $\phi_{\text{AVG}}^l$ or $\phi_{\text{UP}}^l$ into $\Phi_{\text{AVG}}$ and $\Phi_{\text{UP}}$;
        \ENDFOR
        \STATE \textbf{Return} $\Phi = \{\Phi_{\text{AVG}}, \Phi_{\text{UP}}\}$.
\end{algorithmic}
\vspace{0.1cm}
\end{algorithm}
\end{minipage}
\end{wrapfigure}
Inspired by the empirical observation in Figure~\ref{fig:parameter-conflict}, Mediator averages layers with less parameter conflicts, and route layers with more parameter conflicts. As shown in Figure~\ref{fig:Mediator}, Mediator calculates the conflicts $d_l$ across different layers. Then, Mediator models the layer-wise conflicts as a Gaussian distribution $d_l \sim \Ncal(\mu, \sigma)$.

Then, for each layer index $l$, Mediator average layer parameters if the conflict $d_l$ is less than the $\mu + \sigma$, otherwise routing this layer. We denote the averaged layer parameters as $\phi_{\text{AVG}}^l$ and the routing layer parameters as $\phi_{\text{UP}}^l$. Algorithm~\ref{algo:adaptive-merging} shows this detailed process. The average operation $\Mcal$ can be any averaging operation, such as the unified average, importance based~\citep{FisherMerging_NeurIPS2022}, or subspace based~\citep{Yadav:2023aa}. In our experiments, we mainly use the denoised parameters $\hat{\theta}_{\tau}$ defined in Section~\ref{sec:Understanding-Conflict} to conduct averaging like TIES~\citep{Yadav:2023aa}  (details of averaging operations in Appendix~\ref{appx:averaging-operations}). Note that all attention layers are averaged, because they are found to save non-specific domain knowledge~\citep{Sukhbaatar:2024aa}.

\subsection{Expert Decomposition}\label{sec:expert-decomposition}

The routing layer occupies $n_{\tau} \times M_l$ memory, where $n_{\tau}=|\Tcal|$ and $M_l$ is the memory of each layer in original model. Large $n_{\tau}$ significanly increases the memory cost of routing layers, thus leading to weak scalability. Thus, we consider compressing the routing layers to reduce the memory cost.


\begin{wrapfigure}{r}{0.5\textwidth}
    \subfigbottomskip=-1pt
    \subfigcapskip=1pt
  \centering
     \subfigure[Qwen 1.5 4B]{\includegraphics[width=0.45\linewidth]{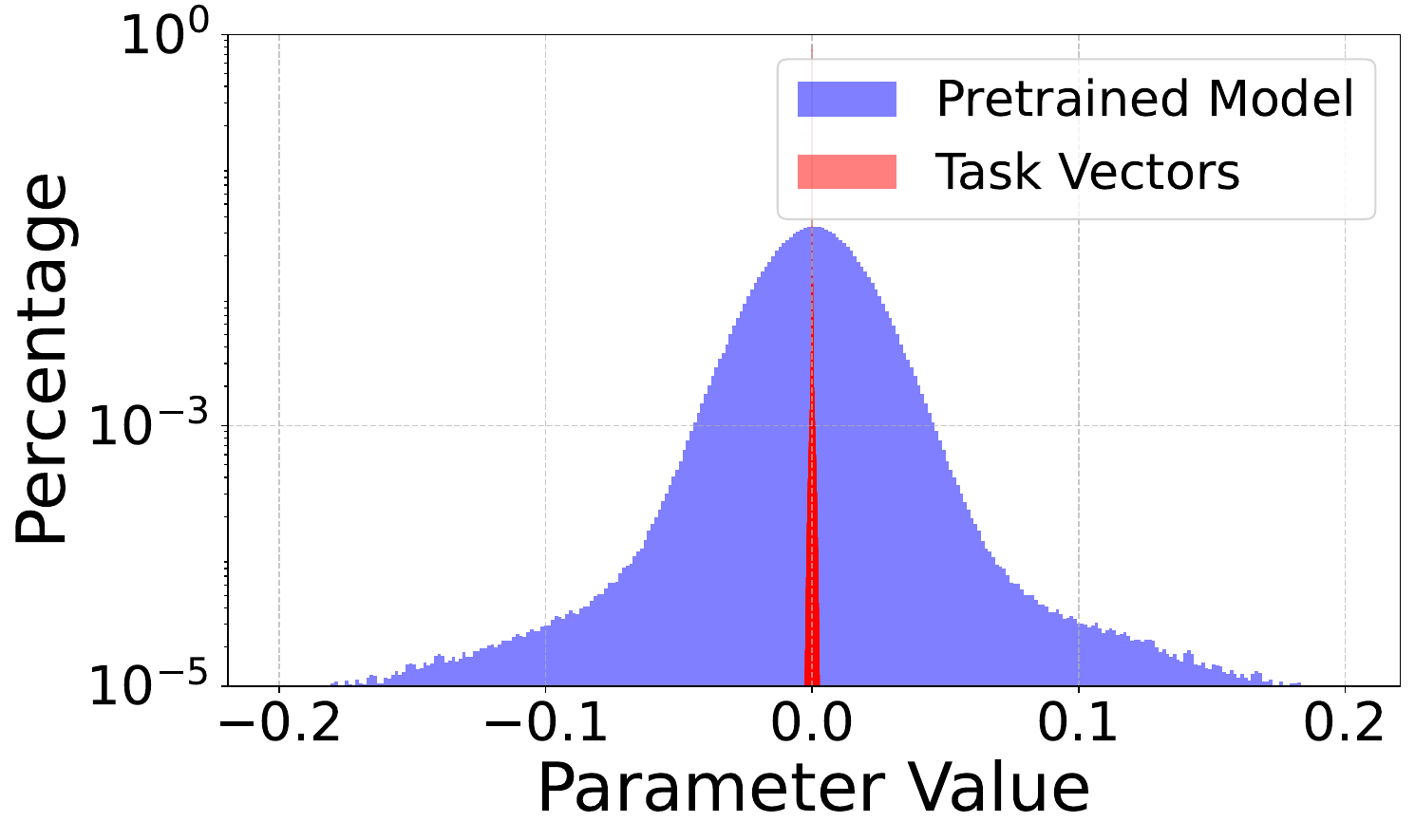}}
     \subfigure[Qwen 2.5 7B]{\includegraphics[width=0.45\linewidth]{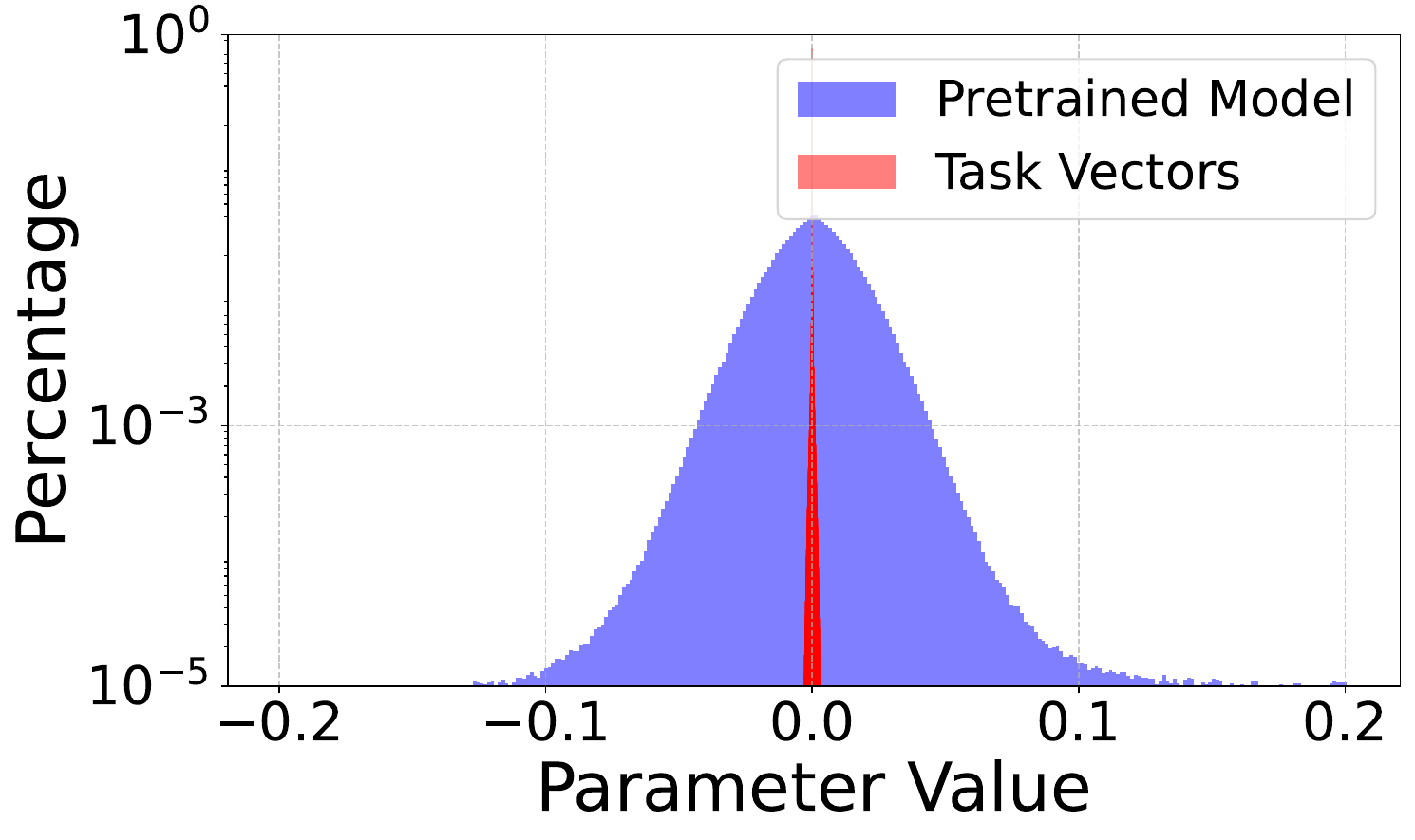}}
     \caption{Comparing magnitudes of task arithmetic and pretrained model parameters.}
    \label{fig:ta_distribution}
    \vspace{-0.3cm}
\end{wrapfigure}


However, previous LLM pruning or quantization~\citep{dong2024pruner,sun2024a} on $\left\{\theta^{l}\right\}$ cannot achieve high compression ratio on $\theta_\tau$. Different from directly compressing the finetuned model, we find that the task arithmetic $\Delta_{\tau}$ shows significantly higher sparsity due to its noisy updates~\citep{Yadav:2023aa}. As shown in Figure~\ref{fig:ta_distribution}, the empirical magnitudes of the task arithmetic are significantly smaller than the model parameters. This indicates that the $\Delta_{\tau}$ can be sparsified with a higher degree to reduce the memory cost.



To this end, we can decompose a finetuned LLM $\theta_{\tau}$ into its base model $\theta$ and the task-specific expert $\Delta_{\tau}$ (task arithmetic in Definition~\ref{def:task-arithmetic}). Each $\Delta_{\tau}$ will be sparsified as $\hat{\Delta}_{\tau}$ by removing the elements that can be seen as noise (Section~\ref{sec:Understanding-Conflict}). Thus, the memory cost is reduced from $n_{\tau} \times M_l$ to $n_{\tau} \times M_l \times c$, where $c$ is the compression ratio. When inferencing, we can use the $\theta + \hat{\Delta}_{\tau}$ to approximately recover the finetuned model $\theta_{\tau}$.

\textbf{Out-of-distribution Data \uppercase\expandafter{\romannumeral1}} (OOD to $\Tcal$ but in-distribution to pretraining data $p(x|\Tcal^\star)$): The finetuned model $\theta_\tau$ may not be able for processing some test-time questions $x^{\tau^{\star}} \sim p(x|\Tcal^\star)$ that sampled from other distributions instead of $p(x|\tau) $ for any $\tau \in \Tcal$. Considering the the modern LLM $\theta$ is pretrained on the huge corpus~\citep{gpt3_2020}, the $\theta_\tau$ might keep the original knowledge in  $x^{\tau^{\star}}$. However, the finetuning may cause forgetting problem~\citep{zhu2024model}. Thus, to preserve pretraining knowledge, we make a calibration on task arithmetic as $\Delta_{\tau} = \theta_{\tau} - \bar{\theta}$ and saves the pretraining arithmetic as $\Delta_{\star} = \theta - \bar{\theta}$, where $\bar{\theta} = 1/n_{\tau} \sum_{\tau \in \Tcal} \theta_{\tau}$.




\subsection{Expert Routing}\label{sec:expert-routing}

In the token-level expert routing like pretrained MoE~\citep{Jiang:2024aa,Sukhbaatar:2024aa}, there are routers for all layers trained to select the task-specific experts for each token. The token-level routing implies that the router is designed for selecting experts at different token index. In other words, for a same sequence $x_{1:T}$ sampled from $p(x_{1:T} | \tau)$, the token-level router may select different experts for different tokens $t \in \left\{1, \cdots, T\right\}$.


\begin{wrapfigure}{R}{0.5\textwidth}
\begin{minipage}{0.5\textwidth}
\begin{algorithm}[H]
\caption{Uncertainty based Expert Routing in Mediator}
\label{algo:routing}
\textbf{Input: } The question prompt $x_{1:t}$, router $\pi_{\kappa}$, $\Phi_{\text{UP}}$, $k$, $\beta$.\\
\textbf{Output:} Reconstructed routing layers $\Phi_{\text{UP},x}$. \\
\begin{algorithmic}[1]
\STATE $ \Tcal_k^x = \text{Indexes of TopK}(\left\{\pi_{\kappa}(\tau|x) \right\},k)$;
\STATE Calculate and cache $\left\{ h(\tau|x)  \right \} $ (Equation~\ref{eq:temperature-scale});
\FOR{$\phi_{\text{UP}}^l \in \Phi_{\text{UP}}$}
    \STATE $ \phi_{\text{UP},x}^l = \bar{\theta}^{l} + \sum_{\tau\in\Tcal_k^x} h(\tau|x) \hat{\Delta}_{n_{\tau}}^l$;
    \STATE Insert $\phi_{\text{UP},x}^l$ into $\Phi_{\text{UP},x}$;
\ENDFOR
\STATE \textbf{Return} $\Phi_{\text{UP},x}$.
\end{algorithmic}
\vspace{0.1cm}
\end{algorithm}
\end{minipage}
\end{wrapfigure}

\textbf{Task-level Routing.} However, different from pretrained MoE where one sequence $x_{1:T}$ might be splited into different subsequences and feed-forwarded to different experts, the finetuned LLM $\theta_{\tau}$ is trained on the complete sequence $x_{1:T}\sim p_{\tau}$. Therefore, a subsequence $x_{1:t} \subset x_{1:T}$ might be OOD to the another model $\theta_{\hat{\tau}}$. Therefore, the most suitable experts for each data sample might be the experts that are trained on the task $\tau$, i.e. $\left\{\theta_{\tau}^l\right\}_{l \in \Lcal}$ for each subsequence $\left\{ x_{1:t} \right\}_{t=1, \cdots, T}$ in $x_{1:T}\sim p_{\tau}$. 


We also provide a theoretical understanding to understand the differences between task-level and token-level routing following the Bayesian inference interpretation for the in-context learning~\citep{xie2022an} in Appendix~\ref{appx:theoretical-analysis}).
Moreover, we also empirically show in Section~\ref{sec:Experiments} that the trained token-level routers~\citep{Sukhbaatar:2024aa} also frequently select the task-specific experts for each data sample. 


\textbf{Out-of-distribution Data \uppercase\expandafter{\romannumeral2}} (OOD but close to $\Tcal$ and $\Omega$): Considering that $\tau$ is continuous~\citep{xie2022an}, while the test data $x^{\text{OOD}} \sim p_{\hat{\tau}}(x|\hat{\tau})$ ($\hat{\tau} \notin \Tcal, \Tcal^\star$) has different distribution from $p_{\tau}$ for any $\tau \in \Tcal, \Tcal^\star$, the $\hat{\tau}$ might be close to or a linear combination of multiple $\tau_1, \tau_2, \dots, \tau_{n_\tau} \in \Tcal$ and $\tau \in \Tcal^\star$. Any finetuned model $\theta_\tau$ may not be able to process some test-time questions $x^{\text{OOD}}$. A better way to conduct inference on $x^{\text{OOD}}$ is to combine multiple experts $\left\{\Delta_{\tau} \right\}_{\tau \in \Tcal}$ and $\Delta_\star$ together. Thus, we propose the Algorithm~\ref{algo:routing} and the following design.


\textbf{Uncertainty-based Expert Selection.} We propose the uncertainty-based expert selection to combine the experts from different tasks into a new expert $\bar{\Delta}$ for the test data $x$. Thus, we need to estimate the posterior distribution $p(\bar{\Delta} | x)$ for the test data $x$. We introduce a small classifier $\kappa$ use cross-entropy loss to learn the likelihood $\pi_{\kappa} (\tau| x )$ for the training dataset including all data pairs $\left\{(x, \tau)| x \sim p_{\tau}\right\}_{\tau \in \Tcal\cup \left\{\tau^\star\right\} }$ (details of constructing this dataset and learning $\pi_\kappa (\tau|x)$ is in Appendix~\ref{appx:expert-routing}). For the training data $x\sim p_{\tau}$, we have known that its best expert is $\Delta_{\tau}$, thus having $\bar{\Delta} = \Delta_{\tau}$. For a test data $x$, we introduce the temperature scaling to soft the logits $h_\kappa$ to obtain the prediction uncertainty $\pi_{\kappa} (\tau| x )$, like OOD detection~\citep{Guo2017OnOCO} and model distillation~\citep{Hinton2015DistillingDTK} as follows: 
\begin{align}
h(\tau|x) = \frac{e^{\pi_\kappa(\tau| x)/\beta}}{\sum_{\tau \in \Tcal} e^{\pi_\kappa(\tau| x)/\beta}}.
\label{eq:temperature-scale}
\end{align}

Then, the final obtained expert is $\bar{\theta}^{l} + \sum_{\tau\in\Tcal_k^x} h(\tau|x) \hat{\Delta}_{n_{\tau}}^l$ as shown in Algorithm~\ref{algo:routing}. Then the whole feed-forward process is as same as the original model. We further provide system optimization in the real-world serving as follows.






\subsection{System-level Optimization}\label{sec:SystemOptimization}

We compare the inference times of Mediator with those of existing LLMs. The incremental time overhead primarily arises from expert routing and the loading of expert parameters. Fortunately, by using task-level routing, we only run a small classifier \( \kappa \) and load the expert parameters once per sample. To accelerate inference, we explore two methods for optimizing the loading of expert parameters. For sparse expert parameters, we store them in the CPU and prefetch the parameters of the upcoming layer while simultaneously performing computations in the current layer. This approach masks the long loading times associated with expert uploads. Additionally, we accelerate the integration of sparse experts (i.e., Algorithm~\ref{algo:routing}) into the dense backbone by using CUDA's atomicAdd for parallel processing. For non-sparse expert parameters, we store them on disk and utilize ZipNN~\citep{hershcovitch2024zipnn} to accelerate the loading of parameters from disk to CPU. Our results show that the optimized inference time of Mediator remains between 0.2s and 0.4s longer than that of existing LLMs per sample (details in Appendix~\ref{appx:system-optimization}).

\vspace{-0.0cm}
\section{Experiments}\label{sec:Experiments}

\textbf{Models and Evaluation Tasks.} We conduct comprehensive experiments on cutting-edge LLMs including Qwen-1.5-4B, Qwen-2.5-7B~\citep{yang2024qwen2}, LLaMA-3.1-3B, and LLaMA-3.2-8B~\citep{dubey2024llama}. We select different evaluation tasks to effectively demonstrate model capability in resolving parameter conflicts during model merging, including GSM8K of mathematical question-answering~\citep{cobbe2021gsm8k}, TriviaQA~\citep{joshi2017triviaqa} of a large-scale Wikipedia-based question answering dataset, HumanEval~\citep{chen2021humaneval} of Python programming tasks, WinoGrande~\citep{sakaguchi2019winogrande} of logical reasoning, MMLU~\citep{hendrycks2021mmlu} of vertical domain knowledge (as OOD to the finetuned models).


\textbf{Finetuning Settings.} The finetuning datasets are constructed by augmenting some publicly  datasets (task related but without overlap) with GPT-4o~\citep{gilardi2023chatgpt} and Chain-of-Thoughts~\citep{CoT}. For each finetuning process, we use at least 180K training samples to ensure sufficient performance improvement on the corresponding task, which helps validate the effectiveness of our experiments (Details of constructing finetuning datasets in Appendix~\ref{appx:finetuning-data-generation} and hyperparameters in Appendix~\ref{appx:detailed-hyper-parameters}). \textit{To the best of our knowledge, this is the first LLM merging study with CoT enhanced finetuning and evaluated with generative tasks.}

\begin{table}[htbp]
\centering
\setlength{\abovedisplayskip}{-2pt}
\subfigbottomskip=-1pt
\subfigcapskip=1pt
\setlength{\abovecaptionskip}{-2pt}
\caption{Comparing performance of model merging methods on Llama 3.1 3B and Llama-3.2 8B.}
\resizebox{\textwidth}{!}{
\begin{tabular}{c|cccccc|cccccc}
\toprule
Model & \multicolumn{6}{c|}{Llama 3.1 3B} & \multicolumn{6}{c}{Llama-3.2 8B} \\
\midrule
Alg./Tasks & GSM. & TrA. & Wino. & H.Eval & MMLU & AVG. & GSM. & TrA. & Wino. & H.Eval & MMLU & AVG. \\
\midrule
Base & 27.52 & 57.71 & \textbf{69.69} & 22.56 & \textbf{54.08} & 46.31 & 56.33 & 72.39 & 73.64 & 27.44 & \textbf{67.99} & 59.56 \\
Math & \textbf{46.47} & 54.59 & 69.06 & 25.00 & 52.73 & 49.57 & \textbf{77.18} & 73.99 & 74.98 & 20.12 & 62.10 & 61.67 \\
QA & 32.75 & \textbf{61.45} & \textbf{69.69} & 28.05 & 54.17 & 49.22 & 69.60 & \textbf{74.14} & \textbf{75.45} & 31.71 & 62.21 & 62.43 \\
Code & 33.13 & 57.71 & 68.59 & \textbf{40.85} & 53.09 & \textbf{50.67} & 61.41 & 73.94 & 74.59 & \textbf{62.80} & 62.73 & \textbf{67.09} \\
All data & 44.12 & 47.74 & 69.21 & 34.76 & 53.75 & 49.92 & 70.89 & 69.77 & 75.06 & 48.17 & 62.94 & 65.37 \\
\midrule
TIES & 42.61 & 60.99 & 71.11 & 31.30 & 54.32 & 51.27 & 76.04 & 76.78 & 74.19 & 53.05 & 62.36 & 68.48 \\
PCB & 46.02 & 60.39 & 71.27 & 29.88 & 54.21 & 52.35 & 76.04 & 76.89 & 74.35 & 53.66 & 62.42 & 68.67 \\
Twin & 39.04 & 52.45 & 69.27 & 29.94 & 53.91 & 48.11 & 76.80 & 72.71 & 74.49 & 59.14 & 64.43 & 69.51 \\
BTX & 45.19 & \textbf{62.05} & 71.87 & 28.05 & 54.44 & 52.33 & 76.72 & 73.99 & 75.22 & 60.98 & 65.68 & 70.52 \\
\textbf{Mediator}  & \textbf{46.47} & 61.02 & \textbf{72.03} & \textbf{40.42} & \textbf{54.91} & \textbf{54.97} & \textbf{76.95} & \textbf{76.70} & \textbf{75.69} & \textbf{62.80} & \textbf{67.87} & \textbf{71.80} \\
\bottomrule
\end{tabular}
}
\label{tab:model_results_llama}
\end{table}

\begin{table}[htbp]
\centering
\caption{Comparing performance of model merging methods on Qwen 1.5 4B and Qwen 2.5 7B.}
\resizebox{\textwidth}{!}{
\begin{tabular}{c|cccccc|cccccc}
\toprule
Models & \multicolumn{6}{c|}{Qwen 1.5 4B} & \multicolumn{6}{c}{Qwen 2.5 7B} \\
\midrule
Alg./Tasks & GSM. & TrA. & Wino. & H.Eval & MMLU & AVG. & GSM. & TrA. & Wino. & H.Eval & MMLU & AVG. \\
\midrule
base & 47.16 & 44.54 & 56.75 & \textbf{41.46} & 54.45 & 48.87 & 83.41 & 51.67 & 67.68 & 67.68 & 67.70 & \textbf{67.63} \\
Math & \textbf{51.00} & 46.95 & 54.62 & 26.83 & 53.54 & 46.79 & \textbf{85.14} & 51.67 & 65.75 & 61.59 & 67.27 & 66.27 \\
Code & 43.29 & 46.39 & 54.14 & 43.29 & \textbf{54.82} & 48.39 & 52.31 & 49.47 & 64.64 & \textbf{71.95} & \textbf{72.30} & 62.13 \\
QA & 45.56 & \textbf{48.02} & \textbf{57.93} & 39.02 & 52.32 & 48.57 & 84.62 & \textbf{55.58} & 62.83 & 43.29 & 71.51 & 63.57 \\
all-sft & 48.52 & 47.73 & 55.88 & 39.14 & 53.93 & \textbf{49.04} & 64.90 & 52.98 & \textbf{69.30} & 65.85 & 69.66 & 64.59 \\
\midrule
TIES & 47.76 & 46.59 & 54.14 & 44.51 & 54.58 & 49.5 & 84.76 & 54.46 & 66.46 & 65.85 & 71.55 & 68.62 \\
PCB-merging & 47.83 & 47.60 & 56.75 & 43.90 & 54.58 & 49.93 & 73.46 & 53.90 & 69.53 & 60.98 & 71.41 & 65.86 \\
Twin-merging & 47.99 & 44.63 & 57.54 & 40.85 & 52.98 & 48.80 & 83.46 & 54.64 & 66.37 & 69.51 & 70.56 & 68.91 \\
BTX & 48.44 & 46.94 & 57.77 & 42.68 & 53.88 & 49.94 & 84.46 & 55.89 & 67.72 & 67.68 & 72.30 & 69.61 \\
Mediator & \textbf{50.94} & \textbf{48.20} & \textbf{57.85} & \textbf{45.12} & \textbf{54.87} & \textbf{51.40} & \textbf{85.14} & \textbf{56.06} & \textbf{69.30} & \textbf{71.95} & \textbf{72.56} & \textbf{71.00} \\
\bottomrule
\end{tabular}
}
\label{tab:model_results_qwen}
\end{table}

\textbf{Baselines.} 
We compare pretrained, finetuned models, and the state-of-the-art static and dynamic merging methods with Mediator. The \textit{static merging methods} include TIES~\citep{TiesMerging_NeurIPS2023} and PCB-merging~\citep{Du:2024aa} achieve the best performance in weighted average method and do not require calibration data, and also partly consider OOD evaluation tasks. The dynamic merging methods include BTX~\citep{sukhbaatar2024branchtrainmix} with token-level routing and the twin-merge~\citep{Lu:2024aa} with task-level routing and SVD decomposition (Details of hyperparameters and optimization of these baselines in Appendix~\ref{appx:detailed-hyper-parameters}).

\vspace{-0.0cm}
\subsection{Main Results}\label{sec:main-results}
\textbf{Fine-grained Comparison on All Tasks.} Table~\ref{tab:model_results_llama} and ~\ref{tab:model_results_qwen} show the fine-grained performance on each tasks and their overall averaged one of different methods and algorithms. In most of time, the finetuned LLM can achieve the best performance across all single and merged models on its specialized domain, like Math finetuned models on GSM8K and Code finetuned models on HumanEval. While merged LLMs can generally outperform single models on the averaged performance, their specialized domain performance is weaker. However, Mediator can catch up the domain performance of specialized models, and almost always outperform other merged models. Also, the overall performance on all tasks of Mediator is consistently better than other methods. Expert routing methods includes BTX and Mediator generally improve performance. This aligns with findings in TIES~\citep{TiesMerging_NeurIPS2023} and Twin-merging~\citep{lu2024twin}. As model scale increases, the improvement of all merging algorithms decrease, which may be attributed to enhanced comprehensive capabilities of individual finetuned models. 


\begin{wraptable}{r}{0.5\textwidth}
\centering
\setlength{\abovedisplayskip}{-2pt}
\subfigbottomskip=-1pt
\subfigcapskip=1pt
\setlength{\abovecaptionskip}{-2pt}
\caption{Overall Performance on all tasks.} 
\resizebox{\linewidth}{!}{
\begin{tabular}{l|cccccccccc}
\toprule
 Model/Algo. &  Qwen-4B & Llama-3B   &  Qwen-7B &  Llama-8B  \\
\midrule
 base model       &  48.87  &  46.31  &   67.63    &  59.56 \\
 all data sft     &  49.04  &  49.92   &  64.59   &  65.37 \\
 TIES             &  49.50  &  51.27   &  68.62  &  68.48    \\
 Twin           &   48.80    &   48.11    &    68.91       &  69.51 \\
 PCB             &  49.93  &  52.35   &  65.86    &  68.67 \\
 BTX             &   49.94  &  52.33    &  69.61   &  70.52\\
 \textbf{Mediator}             &  \textbf{51.40} ($\uparrow$2.9\%) &  \textbf{54.97}($\uparrow$5.0\%)  &  \textbf{71.00} ($\uparrow$2.0\%)  &  \textbf{71.80} ($\uparrow$1.8\%)\\
\bottomrule
\end{tabular}}
\label{tab:major-results}
\end{wraptable}

\vspace{-0.0cm}


\textbf{Overall Comparison.} As shown in Table~\ref{tab:major-results}, the advantages of PCB over TIES become less pronounced at larger model scales, and even shows performance degradation on Qwen-7B, which demonstrates PCB's instability. Dynamic routing approaches include BTX and Mediator show stable performance improvements. Our method demonstrates consistent improvements across different models.





\begin{wraptable}{r}{0.4\textwidth}
\centering
\setlength{\abovedisplayskip}{-2pt}
\subfigbottomskip=-1pt
\subfigcapskip=1pt
\setlength{\abovecaptionskip}{-2pt}
\caption{Post Training Time (Hours).} 
\resizebox{\linewidth}{!}{
\begin{tabular}{l|cccc}
\toprule
Model/Algo & LlaMA-3B & LlaMA-8B & Qwen-4B & Qwen-7B\\
\midrule
Twin       & 1.33     & 1.87    & 1.60    & 1.80    \\
PCB        & 3.42     & 5.75    & 5.73    & 5.80    \\
BTX        & 7.55     & 12.52   & 8.83    & 12.18   \\
Mediator   & 1.35     & 2.03    & 1.57    & 1.78    \\
\bottomrule
\end{tabular}}
\label{tab:system-performance-post-training-time}
\end{wraptable}
\textbf{Post-Training Time After Merging.} As many model merging methods like Twin, PCB and BTX require post-training, it is critical to compare the extra training time. Table~\ref{tab:system-performance-post-training-time} shows the post-training time of different methods. PCB merging require weight exploration thus leads to higher time. The BTX with token-level routing needs to completely train the layer-wise routers for each token, thus, the post-training time of them is significantly high. In contrast, for task-level routing approaches like Twin-merging and Mediator, taking the lowest time.



\textbf{Inference Time.} Table~\ref{tab:system-performance-inference-time} shows the inference time of different methods. As the token-level routing methods need to load and compute the layer-wise routers for each token, the inference time of them is significantly higher (more than 2x) than our method. We have detailed our inference acceleration process in Section~\ref{sec:SystemOptimization} and Appendix~\ref{appx:system-optimization}.

\begin{table}[htbp]
    \centering
    \begin{minipage}{0.48\textwidth}
        \centering
        \setlength{\abovedisplayskip}{-2pt}
        \subfigbottomskip=-1pt
        \subfigcapskip=1pt
        \setlength{\abovecaptionskip}{-2pt}
            \caption{Inference time per sample (seconds).} 
            \resizebox{\linewidth}{!}{
            \begin{tabular}{c|cccc}
                \toprule
                 Model/Algo &  LlaMA-3B &   LlaMA-8B &  Qwen-4B &  Qwen-7B\\
                            &  32 layers & 32 layers & 40 layers & 28 layers \\
                \midrule
                Base model   & 1.452 & 3.600 & 3.112 & 3.057 \\
                Twin & 1.725 & 4.151 & 3.792 & 3.648 \\
                BTX  & 3.237 & 8.68 & 7.082 & 7.153 \\
                Mediator  & 1.609 &  4.053 & 3.674 & 3.489 \\
                \bottomrule
        \end{tabular}
        }\label{tab:system-performance-inference-time}
    \end{minipage}
    \hfill
    \begin{minipage}{0.48\textwidth}
        \centering
        \setlength{\abovedisplayskip}{-2pt}
        \subfigbottomskip=-1pt
        \subfigcapskip=1pt
        \setlength{\abovecaptionskip}{-2pt}
        \caption{Comparing memory costs.} 
        \resizebox{\linewidth}{!}{
        \begin{tabular}{l|cccc}
            \toprule
            Model/Algo & LLaMA-3B & LLaMA-8B & Qwen-4B & Qwen-7B\\
            \midrule
            Base model   & 9G & 33G & 11G & 31G \\
            \midrule
            Twin-merging & 10G & 35G & 13G & 32G \\
            BTX (MOE) & 37G & 80G & 40G & 78G \\
            Mediator  & 10G &  35G & 13G & 33G \\
            \bottomrule
        \end{tabular}}
        \label{tab:system-performance-memory-costs}
    \end{minipage}
\end{table}

\textbf{Memory Cost.} Table~\ref{tab:system-performance-memory-costs} shows the memory costs of different methods. Our method significantly reduces the memory costs compared to saving all finetuned models because there is only one router for all experts, and the sparsified experts saving, and the layer-wise merging strategy (Details of formally comparing memory costs of merging methods in Appendix~\ref{appx:more-related-works}).

\begin{wraptable}{r}{0.6\textwidth}
\centering
\setlength{\abovedisplayskip}{-2pt}
\subfigbottomskip=-1pt
\subfigcapskip=1pt
\setlength{\abovecaptionskip}{-2pt}
\caption{Performance of scaling up finetuned models.} 
\resizebox{1.0\linewidth}{!}{
\begin{tabular}{l|llllllllll}
\toprule
Alg./Tasks & GSM. & TrA. & Wino. & H.Eval & MMLU & I.Eval & C.Eco & C.Med & C.law & AVG.\\
\midrule
Base & 47.16 & 44.54 & 56.75 & 41.46 & 54.45 & 30.70 & 49.09 & 55.10 & 41.67 & 46.77 \\
\midrule
Math & \textbf{50.95} & 46.95 & 54.62 & 26.83 & 53.54 & 34.05 & 45.45 & 55.10 & 45.83 & 45.92 \\
QA & 45.56 & \textbf{48.02} & \textbf{57.93} & 39.02 & 52.32 & 31.65 & 43.64 & 59.18 & 45.83 & 47.02 \\
code & 43.29 & 46.39 & 54.14 & \textbf{43.29} & 54.82 & 31.65 & 43.64 & 59.18 & 45.83 & 46.91 \\
Instruct. & 47.54 & 40.96 & 55.09 & 37.80 & \textbf{54.88} & \textbf{38.37} & 52.73 & 59.18 & 50.00 & 48.51 \\
Economy & 45.56 & 46.24 & 57.93 & 28.86 & 54.21 & 32.13 & \textbf{56.36} & 55.10 & 45.83 & 46.78 \\
medicine & 39.12 & 44.50 & 56.67 & 1.83 & 54.63 & 28.30 & 50.09 & \textbf{61.22} & 41.66 & 42.00 \\
Law & 40.64 & 46.64 & 56.59 & 0.61 & 54.61 & 27.94 & 41.82 & 57.14 & \textbf{58.33} & \textbf{47.15} \\
All Data  & 43.75 & 46.25 & 56.43 & 40.85 & 54.60 & 35.37 & 49.09 & 55.10 & 41.67 & 46.90 \\
\midrule
TIES & 47.38 & 47.19 & 55.80 & 36.59 & 55.38 & 34.17 & 40.00 & 57.14 & 50.00 & 47.07 \\
PCB & 47.38 & 47.19 & 55.80 & 36.59 & 55.45 & 34.29 & 41.82 & 57.14 & 50.80 & 47.38 \\
Twin & 47.91 & 44.78 & 57.54 & 40.85 & 53.01 & 37.53 & 53.32 & 59.56 & 50.00 & 49.38 \\
BTX & 48.44 & 46.94 & 57.85 & 42.68 & 54.93 & 36.93 & 54.40 & 60.36 & 58.33 & 51.18 \\
\textbf{Mediator} & \textbf{50.64} & \textbf{48.04} & \textbf{57.93} & \textbf{44.51} & \textbf{55.12} & \textbf{38.50} & \textbf{56.01} & \textbf{61.17} & \textbf{58.33} & \textbf{52.25} \\
\bottomrule
\end{tabular}
}
\label{tab:scaling-up-finetuned-models}
\vspace{-0.0cm}
\end{wraptable}

\subsection{Ablation Studies}\label{sec:AblationStudies}
\textbf{Scalability of Finetuned Models.} 
To verify the scalability of Mediator, we finetune another 4 LLMs according to the following 4 extra evaluation tasks including: (1) \textit{Instruction Following} with IFEval.~\citep{zhou2023ifeval} which assess models ability to accurately interpret and execute natural language instructions; (2-4) \textit{Medicine, College Economics and Law} from CEval.~\citep{huang2023ceval} which assess knowledge and capabilities across various academic and professional domains. We utilize four accordingly domain datasets for finetuning including Magpie \citep{xu2024magpie}, IndustryInstruction \citep{IndustryInstruction_Finance-Economics}, DISC-Med \citep{bao2023discmedllm}, DISC-Law \citep{yue2023disclawllm} without overlap with IFEval and CEval (Details in Appendix~\ref{appx:finetuning-data-generation}). 

Table~\ref{tab:scaling-up-finetuned-models} demonstrate several key findings: 
1) Static merging methods like PCB and TIES show diminished performance improvements after task expansion, performing even worse than instruction-following finetuned models in overall scores. The similar performance between PCB merging and TIES aligns with findings from the TIES paper, which noted degraded model capabilities when merging more than three tasks; 2) Dynamic merging approaches like BTX, Twin and Mediator maintain relatively stable performance after task expansion; 3) Mediator consistently outperforms BTX by a margin of 2.09\% and achieves the best scores across all individual tasks, showing its good scalability.



\begin{wraptable}{r}{0.5\textwidth}
\centering
\setlength{\abovedisplayskip}{-0pt}
\subfigbottomskip=-0pt
\subfigcapskip=0pt
\setlength{\abovecaptionskip}{-0pt}
\caption{Model Performance w/o layer-wise merging.}
\resizebox{1.0\linewidth}{!}{
\begin{tabular}{l|ccc}
    \toprule
    Model scale & with averaging & w/o averaging & perf. gap \\
    \midrule
    Qwen-1.5 & 51.40 & 51.43 & -0.06\% \\
    Qwen-2.5 & 71.00 & 71.29 & -0.27\% \\
    \bottomrule
\end{tabular}
}
\label{tab:layer-wise-compression}
\end{wraptable}
\vspace{-0.0cm}
\textbf{Layer-wise Merging.} Layers with low conflicts are averaged thus reducing $n_\tau \times$ memory occupation. Table~\ref{tab:layer-wise-compression} averaging or not on Qwen-1.5 with 4 and 8 experts show almost no performance loss of the parameter averaging. Given the significant compression ratios achieved (3.5x for 4 experts, 7x for 8 experts), this minor performance trade-off is acceptable.

\textbf{Comparison between Task-level routing and Token-level routing.}
Fig.~\ref{fig:train_token_heat_map} in Appendix demonstrates the expert selection probabilities of the BTX model across different tasks (MMLU, math, coding, and QA). The results indicate that 1) bottom and upper layers show obvious task preference; 2) middle layers suggesting some shared logical processing across tasks in these layers, which explains why averaging middle layers that have less conflicts in Mediator results in small performance loss.

\begin{wraptable}{r}{0.5\textwidth}
\centering
\setlength{\abovedisplayskip}{-2pt}
\subfigbottomskip=-1pt
\subfigcapskip=1pt
\setlength{\abovecaptionskip}{-2pt}
\caption{Performance under Different Temperature $\beta$.}
\resizebox{1.0\linewidth}{!}{
\begin{tabular}{c|ccccccc}
\toprule
Temperature & 0.1 & 0.5 & 1.0 & 1.25 & 1.50 & 1.75 & 2.0 \\
\midrule
Mediator & 50.31 & 50.35 & 50.64 &  51.02 & 51.40 & 50.84 & 50.92 \\
\bottomrule
\end{tabular}
}
\label{tab:temperature}
\end{wraptable}

\textbf{Different Temperatures.} 
In the routing process, the hyperparameter temperature $\beta$ is a key factor. Table~\ref{tab:temperature} shows performance change of Mediator with different temperatures. Results show that only around 2.2\% score variation, the $\beta=1.5$ achieves the highest performance, $\beta=0.1$ almost equals to the Top-1 routing, results in the lowest performance.


\vspace{-0.2cm}
\section{Conclusion}\label{sec:Conclusion }
\vspace{-0.2cm}
We propose Mediator for merging LLMs that addresses parameter conflicts through adaptive layer-wise strategies, which average low-conflict layers and routing high-conflict ones via task-specific experts. This preserves individual model strengths while integrating shared knowledge, improving performance and reducing system costs. By decomposing experts into a dense core and sparse components, Mediator minimizes storage without sacrificing efficacy. Dynamic expert selection via task uncertainty enhances adaptability across diverse inputs. Mediator advances efficient LLM merging methods, balancing resource constraints with practical versatility. Future work can consider exploring theoretical foundations of parameter conflicts, large-scale deployment optimizations, and faster expert loading mechanisms.





\bibliography{cite}



\newpage
\clearpage
\onecolumn
\appendix 
\etocdepthtag.toc{mtappendix}
\etocsettagdepth{mtchapter}{none}
\etocsettagdepth{mtappendix}{subsection}
\renewcommand{\contentsname}{Appendix}
\tableofcontents 
\clearpage

\vspace{-0.0cm}
\section{Limitations}\label{sec:Limitations}
\vspace{-0.0cm}
\textbf{Parameter Conflict Theory.} Our empirical analysis in Section~\ref{sec:Understanding-Conflict} provides initial insights, but a theoretical foundation for parameter conflicts remains unexplored.


\textbf{Scaling Challenges.} While scaling to 8 models is feasible, expanding to hundreds or thousands (e.g., for personalization~\citep{chan2024scaling}) poses deployment challenges.

\textbf{Loading Time.} Loading experts per sample remains a system bottleneck. Although optimized (Section~\ref{sec:SystemOptimization}), faster loading requires further research.

\section{Broader Impact}\label{apdx:impact}

\textbf{Societal Impacts.} Our approach demonstrates significant effectiveness by enabling the deployment of merging 7B x 4 LLMs with only 24GB VRAM. Compared to ensemble learning with these models, our method not only maintains better accuracy but also requires significantly less computational resources and demonstrates superior performance. This breakthrough in resource efficiency makes advanced language models more accessible and cost-effective.

\textbf{Ethical Concerns.} We declare no conflicts of interest that could inappropriately influence our work. All experiments were conducted using publicly available resources. Our study does not involve human subjects, data collection from individuals, or experiments on protected groups. The models and basic datasets used in this work are publicly available and widely used in the research community. We have made efforts to ensure our experimental design and reporting of results are fair, unbiased, and do not misrepresent the capabilities or limitations of the methods presented.

\textbf{Reproducibility.} For openness of LLM research, we declare our code and the CoT enhanced crafted finetuning datsets will be made available to ensure reproducibility. We will provide detailed documents of code implemnetation. And we have provided the details of all hyper-parameters of implementing Mediator and optimizing baselines.

\textbf{Potential Applications.} The technology may have significant potential across specialized vertical domains. Considering that many vertical domains,  personalized LLM agents~\citep{li2024personalllmagentsinsights}, LLM applications like roleplay chatting~\citep{chan2024scaling,} and professional domain-specific writing~\citep{gómezrodríguez2023confederacymodelscomprehensiveevaluation}, an LLM service provider may need to simultaneously deploy different finetuned LLMs. Our technology enables efficient and effective serving multiple popular LLM applications, and merging knowledge from different LLMs together.



\section{More Related Works}\label{appx:more-related-works}

We introduce more related works about model merging and routing in this section. Current common methods include: (1) \textbf{Averaging based merging.} This direction combines multiple models into a single model while preserving their capabilities with minimal or no additional training; (2) \textbf{Routing based merging.} It considers to route inputs to specialized expert modules like mixture-of-experts (MoE) approaches but completely different.


Table~\ref{tab:demystify-detail} provides an overview of comparing different model merging methods and our framework. Given $n_\tau$ different finetuned models, averaging based methods do not completely address the parameter conflicts, thus having higher parameter conficts than routing based merging. The weighted averaging requires calibration data to compute the importance metrics. The token-level routing requires routing for each layer, thus having totally $n_{\Lcal}$ routers. Our framework Mediator exploits layer-wise characteristics to both reduce parameter conflict and improve common knowledge fusion. And Mediator utilizes compression to further reduce the memory costs. To the best of our knowledge, the most of previous model merging works focus on experiments on traditional CV and NLP tasks~\citep{FisherMerging_NeurIPS2022}, while Mediator conducts experiments on modern LLMs and real-world experiments.

Table~\ref{tab:demystify-system} provides the system performance comparison. Because token-level routing like MoE requires to route each token towards different (possibly) experts, its inference cost is significantly large as more than $T\times n_{\Lcal}$ times than task-level routing, where $T$ is the sequence length. Besides, the token-level routing requires more than $n_{\Lcal}$ times routers in memory costs. With the layer-wise adaptive averaging and Routing, Mediator significantly reduce the memory costs of from $M_{\theta}\times n_{\tau}$ to $M_{\theta} \times (c_{\text{avg}} + c_{\text{route}} \times n_{\tau} \times c)$~\footnote{Normally, each transformer layer occupies the same memory.}. The experimental memory reduction and the system performance comparisons are shown in the Section~\ref{sec:main-results}. We also provide system optimization to accelerate the inference during the deployment of Mediator in Section~\ref{sec:SystemOptimization} and Appendix~\ref{appx:finetuning-data-generation}. And the hyper-parameters $c_{\text{avg}}$ and $c_{\text{route}}$ are adaptively decided by the parameter conflict estimation.


\begin{table*}[htb!]
    \centering
    \caption{Demystifying different merging methods. The $n_\tau$ represents the number of finetuning tasks, $n_{\Lcal}$ the number of layers in the model.}
    \vspace{-0.1cm}
    \centering
    \begin{adjustbox}{max width=\linewidth}
    \begin{tabular}{c|cccccccc}
    \toprule[1.5pt]
    Method Type &  \makecell{Parameter \\ Conlict Level } & \makecell{Merging \\ Common Knowledge}   &  \makecell{Require \\ Calibration Data} & \makecell{Routing \\ Type} & \makecell{Considering \\ Layer-wise Characteristics } &  \makecell{Considering\\ Compression} &  \makecell{Considering \\ OOD Samples}  &   \makecell{Experimental \\ Scanario} \\
    \midrule[1.5pt]
    Basic Averaging &  High & \cmark  & \xmark    & NA  & \xmark & \xmark  &  \xmark & Traditional CV, NLP \\
    Weighted Averaging & Middle & \cmark   & \cmark & NA  & \xmark & \xmark & \xmark & Traditional CV, NLP \\
    Subspace Averaging & Middle & \cmark  & \xmark & NA  & \xmark & \xmark & \xmark & Traditional CV, NLP \\
    Token-Level Routing & \textbf{Low} & \xmark    & \xmark & Token-level  & \xmark & \xmark & \xmark & Traditional CV, NLP \\
    Task-Level Routing & \textbf{Low} &  \xmark  & \xmark & Task-level  & \xmark & \xmark & \xmark & Traditional CV, NLP \\
    Mediator & \textbf{Low} & \cmark    & \xmark  & Task-level & \cmark & \cmark & \cmark & Generative LLMs   \\
    \bottomrule[1.5pt] 
    \end{tabular}
    \end{adjustbox}
    \vspace{-0.1cm}
    \label{tab:demystify-detail}
\end{table*}

\begin{table*}[htb!]
    \centering
    \caption{Demystifying different merging methods in system performance costs. Considering the memory costs of the base model and one router are $M_{\theta}$ and $M_{h}$, each layer occupies the same memory $M_l$, compression ratio $c$, the ratio of selected layers for averaging is $c_{\text{avg}}$, for routing is $c_{\text{route}}$, $FP_{\theta}$ and $BP_{\theta}$ are  the forward time and backward time of the model. $FP_{h}$ and $BP_{h}$ are the forward time and backward time of the router.}
    \vspace{-0.1cm}
    \centering
    \begin{adjustbox}{max width=0.8\linewidth}
    \begin{tabular}{c|cccc}
    \toprule[1.5pt]
    Method Type &  \makecell{Requiring \\ \# of routers}  & \makecell{Costs of \\ Trainig Routers}   & \makecell{Memory Costs \\ After Merging} & \makecell{Inference \\ Cost} \\
    \midrule[1.5pt]
    Basic Averaging & NA  & NA & $M_{\theta}$ & $FP_{\theta}$  \\
    Weighted Averaging &  NA  & NA & $M_{\theta}$ & $FP_{\theta}$ \\
    Subspace Averaging & NA  & NA & $M_{\theta}$ & $FP_{\theta} $ \\
    Token-Level Routing & $n_{\Lcal} \times n_{\tau} $  & High & $ M_{\theta} \times n_{\tau} + M_{h} \times n_{\Lcal} $ & $FP_{\theta}  +  FP_{h} \times T \times n_{\Lcal} \times n_{\tau} $   \\ 
    Task-Level Routing & $ n_{\tau} $  & Middle & $M_{\theta}  \times n_{\tau} + M_{h} $ & $FP_{\theta}  +  FP_{h} \times n_{\tau} $    \\
    Mediator & $ n_{\tau} $  & Low & $M_{\theta} \times (c_{\text{avg}} + c_{\text{route}} \times n_{\tau} \times c) + M_{h} $ & $FP_{\theta}  +  FP_{h} \times n_{\tau} $  \\
    \bottomrule[1.5pt] 
    \end{tabular}
    \end{adjustbox}
    \vspace{-0.1cm}
    \label{tab:demystify-system}
\end{table*}

Besideds, we also review some highly related works include following directions that are closely related to our framework. Insights from these directions have provided valuable guidance for our framework. 
\begin{enumerate}[leftmargin=*]
\item \textbf{Layer-wise training dynamics and optimization.} This direction discusses the layer-wise training dynamics to help shed some light on the paramter conflicts and the layer-wise adaptivity.
\item \textbf{Bayesian deep learning.} This direction reviews some works of the Bayesian deep learning, discussing the uncertainty and Bayesian model averaging.
\item \textbf{OOD Detection \& Generalization.} This direction reviews some works of the OOD Detection and Generalization, shedding light on deployment of the model merging on the out-of-distribution data.
\item \textbf{Model compression.} This direction shortly review some works about the model compression, in which many methods can be directly applied into our framework to further reduce the memory costs. Note that in our paper we propose a general framework instead of a new model compression method. Different model compression methods can be combined into our framework.
\item \textbf{Data Synthesis.} This direction reviews some works about how to generate new synthetic data to improve the model merging performance. In our framework, we exploit the CoT to generate new synthetic data to improve the finetuning performance on downstream tasks, which is a real-world downstream task instead of traditional model fine-tuning using the in-domain training and testing data. 
\end{enumerate}

\subsection{Averaging-based Model Merging}
Model merging, also known as model fusion, combines the parameters of multiple separate models with different capabilities to create a universal model. In this paper, we temporarily focus on models that have the same architecture but different parameters that are finetuned on different downstream tasks.

\textbf{Basic Averaging.} The traditional approach to merge different trained or finetuned models is to evenly average the parameters of different models~\citep{utans1996weight,shoemake1985animating}. This process does not require access to the original training data and allows for enhanced performance without the need for expensive computation. However, the performance of these simply weight averaging is generally unsatisfactory. 

Some related directions of model averaging also include Federated learning (FL)~\citep{FedAvg_AISTAT2017,tang2020survey}. In FL, the model averaging is performed on the server side to reduce the communication costs after the local training. Many methods have been proposed to stable and smooth the model averaging process~\citep{Fednova_2020,jhunjhunwala2024fedfisher,yurochkin2019PFNM,otfusion_neurips2020,FederatedMA_ICLR2020,tang2024fedimpro,VHL} to enhance the averaging performance. Different from the multi-rounds FL, the model merging is performed in a single round, which is more similar to the one-shot FL~\citep{guha2019one,tang2024fusefl}.

\textbf{Weighted Averaging.} Rethinking the cause of the poor performance of the basic averaging method, many works propose to use the weighted averaging method to improve the merging performance. Intuitively, different model parameters have different importance on downstream tasks. Such a heterogeneity of the parameter importance motivates other research directions including model sparsification~\citep{sun2024a,dong2024pruner,dong2024stbllmbreaking1bitbarrier,tang2020survey}, continual learning~\citep{catastrophicforgetting_1995,kirkpatrick2017overcoming,zhu2024model,marczak2024magmax} and FL~\citep{jhunjhunwala2024fedfisher,yurochkin2019PFNM,otfusion_neurips2020}. Thus, to avoid the important parameters being overwhelmed by the unimportant parameters, during averaging, we can assign large weights to those important parameters. To this end, the importance measurement is crucial. Many works propose to use \textit{fisrt or second orders of Taylor expansion} to measure the importance of the parameters~\citep{lee2018snip,jhunjhunwala2023towards,qu2022generalized}. Some works employ \textit{local linearization and task vectors} to measure the importance of the parameters~\citep{zhou2024metagpt}. The \textit{fisher information} also a kind of importance measurement~\citep{FisherMerging_NeurIPS2022,jhunjhunwala2024erasure,thennal2024fisher,jhunjhunwala2024fedfisher,thennal2024fisher,daheim2024model}.

While these importance measurement methods can improve the merging performance than the basic averaging method, they still face some typical challenges. 
\begin{enumerate}[leftmargin=*]
\item \textit{Require Calibration Dataset.} The importance measurement is based on the calibration dataset. In the LLM era, the pretrained dataset is significantly large, it is difficult to collect the complete pretrained dataset and measure the importance of the parameters on it.
\item \textit{Computation Costs.} Because that the importance measurement is based on the calibration dataset, the computation costs is almost similar to conduct the complete forward process of the different models. In traditional small models, such a computaton cost is acceptable. However, in the LLM era, the model size is significantly large, such a computation cost is unbearable.
\item \textit{Unaddressed Parameter Conflicts.} While methods in these importance based weighted averaging methods can improve the merging performance, they still face the parameter conflicts between different models. Because of the highly non-convex structure of the LLMs, it is difficult to find a optimal merging method based on averaging the parameters of different models.
\end{enumerate}

\textbf{Subspace Averaging.}  
Considering that the neural networks are over-parameterized, removing most of the parameters from the model barely affects its accuracy~\citep{he2023structuredpruning,choudhary2020comprehensive}. Besides, during the training or finetuning, some parameters might be optimized towards a random direction which has small impact on the model performance~\citep{TiesMerging_NeurIPS2023}. Thus, works propose to firstly process different models in a subspace manner. Then, the parameter conflicts can be mitigated by the subspace averaging methods~\citep{deep2024della,he2024localize}.

DARE (Drop and Rescale)~\citep{Yu:2023aa, DARE_Arxiv2023} introduces a parameter pruning and rescaling strategy that significantly reduces the number of parameters in SFT models while preserving their performance, thereby serving as an effective preprocessing step for model merging. Similarly, Model Breadcrumbs~\citep{davari2023modelbreadcrumbs} enhances sparsification by eliminating both low-magnitude parameters and outlier parameters with exceptionally high weights, thereby reducing noise and improving the generalization of hyperparameters during model merging.

TALL-masks~\citep{wang2024localizingtall} creates task-specific mask matrices based on predefined thresholds tailored to individual models, while Model Tailor~\citep{zhu2024model} further refines this approach by masking parameters according to their sensitivity to loss changes and deviations from pre-trained values. APL~\citep{kong2024activated} advances parameter importance estimation through causal interventions, providing a robust metric for selective parameter retention.

EMR-Merging~\citep{emrmerging_arxiv2024} departs from traditional model merging by maintaining a shared model across multiple tasks alongside sparse task-specific models, where each shared parameter is determined by the maximum value among corresponding parameters from all models. Concrete~\citep{tang2023concrete} further innovates by framing mask construction and model merging as a learnable bi-level optimization problem, with the outer level optimizing the mask matrix and the inner level performing model merging and optimization utilizing unlabeled test samples.

Task Arithmetic~\citep{Ilharco:2022aa} exploits parameter-space arithmetic operations, treating model parameters as vectors and employing addition and subtraction to synthesize new model capabilities. However, many of these approaches, including DARE and Task Arithmetic, heavily rely on hyperparameters for parameter fusion, which can negatively impact the performance of model merging. Additionally, as highlighted in studies such as TIES~\citep{Yadav:2023aa} and Crisostomi~\citep{Crisostomi:2024aa}, model merging often encounters parameter conflicts that degrade performance when integrating multiple models.

Addressing these challenges, TIES (Trim, Elect, and Disjoint Merge)~\citep{Yadav:2023aa} implements a comprehensive approach by trimming parameters based on magnitude, selecting relevant weights, and disjointly merging weights using outcomes from task arithmetic operations. This methodology mitigates parameter conflicts and enhances the overall performance of the merged model, positioning TIES as a robust solution in the domain of model merging.

\subsection{Routing-based Model Merging.}
Average-based methods primarily aim to enhance the averaging process of client models. However, the inherently non-linear architecture of deep neural networks complicates the derivation of a globally comparable model through simple averaging. 

The basic, weighted-based, and subspace-based merging methods are \textit{static} merging techniques. This implies that the merged model remains consistent across all samples or tasks. Given the variability among input samples and tasks, the model's performance can fluctuate when processing diverse inputs. To this end, certain studies advocate for the \textit{dynamic} merging of models (or subsets of layers) tailored to specific samples or tasks~\citep{li2023merge,muqeeth2024soft,tang2024moemerging,lu2024twin,kang2024self,tang2024fusefl,shen2024hot} during the inference phase.

For each input instance, SMEAR~\citep{muqeeth2024soft} initially computes a weighted average of the parameters from each expert by leveraging the distribution of router inputs to the expert modules. This approach maintains a computational cost comparable to that of a single expert. Similarly, Twin-Merging~\citep{lu2024twin} adaptively integrates task-shared and task-specific knowledge based on routing mechanisms during inference. In the same vein, Weight-Ensembling MoE~\citep{tang2024moemerging} introduces a dynamic merging Transformer architecture. This method identifies that the parameters of the linear layer in the fine-tuned model undergo more significant changes compared to the nonlinear layers, which adversely affects merging performance. Consequently, Weight-Ensembling MoE employs a standard weighted average for all modules except the linear layer, which is dynamically weighted and merged based on the routing network (utilizing sample features as input and merging coefficients as output) during inference. PWE MoE~\citep{tang2024towards} extends Weight-Ensembling MoE to a multi-objective optimization framework, incorporating the preference vector as an input for routing.

\textit{AdaMerging}~\citep{Yang:2023aa} adaptively learns merging coefficients in a task-aware or layer-wise manner, offering an automated and unsupervised approach to task arithmetic. While this method significantly enhances performance, it incurs high computational costs. \textit{PCB Merge}~\citep{Du:2024aa} introduces a parameter importance detection mechanism that accounts for parameter conflicts and employs heuristic algorithms to explore model fusion parameters, thereby achieving superior results. \textit{TwinMerge}~\citep{Lu:2024aa} utilizes LoRA or SVD techniques in conjunction with supervised training for parameter fusion, resulting in improved performance.

Nevertheless, these methods encounter inherent limitations. Both AdaMerging and PCB Merge utilize static fusion approaches, which can lead to performance degradation when the actual sample distribution varies during runtime. Meanwhile, TwinMerge performs parameter fusion at the task level; however, the application of LoRA and SVD matrix decomposition markedly reduces model accuracy and introduces substantial online computational overhead. Besides, the code implementation of the TwinMerge actually exploits the LoRA finetuning to replace SVD decomposition. Using SVD decomposition in compressing model parameters leads to disturbed LLMs and significantly degraded model performance.

\textit{Mixture-of-Experts (MoE)}~\citep{Jacobs:1991aa, Jordan:1994aa} is a foundational model concatenation and routing strategy comprising multiple expert networks and a router that dynamically selects relevant experts based on the input. This methodology has been extensively adopted in large language models, offering significant reductions in computational costs while preserving model performance. Recent studies, particularly sparse gated MoE~\citep{Shazeer:2017aa} in transformer-based large language models~\citep{Lepikhin:2020aa}, have concentrated on maintaining load balancing among experts during training~\citep{Zhou:2022aa, Jiang:2024aa}, reducing training costs~\citep{Dai:2024aa}, and mitigating performance degradation due to uncoordinated expert training~\citep{Chi:2022aa}.

\textit{Upcycling Methods} have been developed to alleviate the high computational demands of training MoE models from scratch by initializing experts from existing dense models. These methods encompass copying existing dense models as experts~\citep{He:2024aa, Wei:2024aa}, introducing noise to the MLP layers of dense models to create experts (Noise upcycling)~\citep{Chen:2024aa}, and drop upcycling~\citep{anonymous2024dropupcycling}, which combines parameter dropout with expert copying during training to enhance model robustness, reduce overfitting, and improve performance.

\textit{Branch-Train-Merge (BTM)}~\citep{li2022branch} and \textit{Branch-Train-Mix (BTX)}~\citep{sukhbaatar2024branchtrainmix} are methodologies aimed at further optimizing model training efficiency. These approaches employ different SFT-trained dense models derived from the same base LLM as MoE experts. The experts are interconnected via a router without necessitating additional training, while non-expert components are amalgamated through model merging techniques such as parameter averaging. Only the router undergoes training, thereby substantially reducing overall training costs. Although these methods achieve lower training expenses and marginally outperform traditional model merging approaches, our research indicates that token-level routing can partially degrade model performance. Additionally, maintaining all experts in GPU memory leads to significant parameter redundancy and escalates inference costs, which motivates our ongoing research endeavors.

However, the token-level routing methods are not suitable for model merging. We have provided detailed discussions in the main text Section~\ref{sec:AdaptiveMerging} and Appendix~\ref{appx:theoretical-analysis}. The token-level routing methods after merging normally require re-training based on all training datasets to obtian a better token-level router, which significantly increases the computational costs, which is discussed in the main text Section~\ref{sec:SystemOptimization} and Appendix~\ref{appx:system-optimization}.

\textbf{LoRA based Routing.} Routing samples to different LoRA experts is a promising direction to dynamically route the input to different LoRA experts. This direction includes the \textit{LoraHub}~\citep{Huang:2023aa} and \textit{sLora}~\citep{Babakniya:2023aa}, which explore serving multiple LoRA adapters through techniques like unified paging and tensor parallelism. However, these methods do not consider the better dynamic expert merging method to further improve the model merging performance. In real-world applications, the input distribution is dynamic and the input samples are diverse, which motivates our ongoing research endeavors. Besides, their reliance on LoRA matrix decomposition significantly degrades model serving performance.  Additionally, they do not consider model compression opportunities or the potential to average similar layers between models, which could further optimize storage and computation costs while maintaining model capabilities.

\subsection{Layer-wise Training Dynamics and Optimization}

Layer-wise training was initially explored to achieve effective initialization~\citep{hinton2006fast, bengio2006greedy}. From the perspective of the information propagation~\citep{tishby2000information,mahabadi2021variational,tishby2015deep}, the fundamental issue with layer-wise training is that each layer is unable to access information from the layers that precede it. Some works~\citep{xiong2020loco} proposed a method that permits backpropagation within a local block, allowing information from subsequent layers to progressively influence earlier layers by training them sequentially. Furthermore,~\citep{gomez2022interlocking} builds upon the concept of ``overlapping local updates'', introducing a learning strategy that harmonizes the high parallelism characteristic of layer-wise training with the superior predictive accuracy associated with end-to-end (E2E) learning. Besides, classification-based loss functions are employed at each layer~\citep{mostafa2018deep, belilovsky2019greedy, belilovsky2020decoupled}, whereas similarity-based loss functions are utilized in other scenarios~\citep{kulkarni2017layer, nokland2019training, siddiqui2023blockwise}. Additionally,~\citep{wang2020revisiting} incorporates a reconstruction error term into the local objective function, drawing from an information-theoretic perspective.

Some works find that different layers have different convergence rates during the whole training process~\citep{raghu2017svccasingularvectorcanonical}. This property can be used to freeze front layers and only train the later layers, thus reducing the training costs. The PipeTransformer~\citep{he2021pipetransformer} utlizes this property to reduce the training costs of transformer models.

LISA~\citep{Pan:2024aa} discovered that the weight norm distributions across layers in LoRA and full parameter fine-tuning are skewed, indicating varying layer importance in large-scale LLM training. Based on this observation, LISA applies importance sampling to different layers in LLMs, randomly freezing most intermediate layers during optimization. It periodically samples Transformer layers from the model, randomly selecting r layers for fine-tuning while keeping others frozen. The initial word/position embeddings (wte/wpe) and final language modeling head ($\text{lm\_head}$) are consistently fine-tuned. This aligns with our observations regarding layer merging.

Layer-wise model training and merging approaches have also provided inspiration for our research direction.~\citep{Li:2024aa} discovered that in the field of large language models, the effectiveness of deeper layers gradually diminishes, with many studies showing that deeper layers can be pruned without significantly affecting model performance - a phenomenon often viewed as an opportunity for model compression. To address this, they proposed a novel normalization technique called Mix-LN, which combines pre-LN and post-LN within the same model. Specifically, Mix-LN applies post-LN to earlier layers and pre-LN to deeper layers, ensuring more uniform gradients across all layers.

Different from these methods that focus on improving the layer-wise training and optimization, we focus on improving merging LLMs inspired from the layer-wise training dynamics.

\subsection{Bayesian Deep Learning}

\textbf{Bayesian Neural Networks.} Considering the uncertainty of the model parameters, sampling bias in the training datasets, predictive uncertainty to domain shift (also referred to as out-of-distribution examples)~\citep{Lakshminarayanan2016SimpleSAS,Blundell2015WeightWUI,Hendrycks2016AABF}, Bayesian Neural Networks (BNNs) view the model parameters as a random variable. Then, optimizing the model parameters is equivalent to optimizing the posterior distribution of the model parameters conditioned on the training datasets)~\citep{Blundell2015WeightWUI}. However, the training costs of BNNs are significantly higher than the non-Bayesian neural networks~\citep{Lakshminarayanan2016SimpleSAS}. A proper scoring creterion for training non-Bayesian NN~\citep{Lakshminarayanan2016SimpleSAS}, model ensemble~\citep{Guo2017OnOCO} and adversarial training~\citep{goodfellow2014explaining} are found to be a good way to improve the robustness of neural networks as an alternative to BNNs.

\textbf{Bayesian Model Averaging (BMA).} Except for the static importance measurement mentioned in previous section, Bayesian model averaging is another promising direction to improve the model merging performance based on the Bayesian inference. The deep model ensemble and Stochastic Weight Averaging~\citep{izmailov2018averaging, NEURIPS2019_118921ef} are actually a compelling approach to BMA~\citep{Wilson2020BayesianBDL}. The Bayesian marginalization can particularly improve the accuracy and calibration of modern deep neural networks~\citep{Wilson2020BayesianBDL}. 

However, the previous works in BNN and BMA consider the model parameters trained with the same datasets. How to merge models trained with different datasets is a new open problem which also emerges in FL~\citep{tang2024fusefl,Liu2023FedLPAFOF,Liu2021AABF,Al-Shedivat2020FederatedFLV,pmlr-v97-yurochkin19a,wangfederated} and merging LLM models in pretraining~\citep{liu2024checkpoint}.

\subsection{OOD Detection and Generalization}
The input test samples in the real-world deployment are usually diverse and the distribution of the input test samples is dynamic. Normally, these samples are not shown in the training datasets, and their distribution might be different from the training distribution, which is call out-of-distribution (OOD) data. It is important to detect the OOD data (\textbf{OOD Detection})~\citep{Liu2020EnergybasedEOD,Hendrycks2016AABF} and improve the model generalization on the OOD data (\textbf{OOD Generalization})~\citep{Ovadia2019CanCYT,NIPS2017_2650d608,Lakshminarayanan2016SimpleSAS}. When confronted with distributional shifts, models optimized purely based on average training errors lead to poor performance~\citep{duchi2018learning, arjovsky2019invariant, creager2020environment}.

\textbf{OOD Generalization.} Some methods seek to find better invariant representations in neural networks~\citep{bengio2013representation,locatello2019challenging}, which means the representations are invariant to the distribution shift. From the causal perspective, the invariant representations are the representations that are invariant to the causal factors~\citep{yang2021causalvae}. Causal learning methods aim to learn the underlying causal structure of the data and to predict the outcome variable based on the identified causal variables. By correctly identifying the cause-effect relationships, these methods are expected to perform well even when the data distribution changes, as the underlying causal structure is often assumed to remain invariant across different environments or domains~\citep{buhlmann2018invariance}. The invariant learning is to learn an invariant representation or model across environments leveraging contextual information such as domain labels~\citep{muandet2013domain, arjovsky2019invariant,albuquerque2020adversarial}, where methods can be mainly divided into invariant risk minimization~\citep{arjovsky2019invariant} and domain-irrelevant representation learning~\citep{li2018domain,gong2019dlow,sicilia2021domain}.

\textbf{OOD Detection.} 
Some methods assume access to extensive OOD data alongside in-distribution (ID) data during training, formulating OOD detection as a discriminative classification task by allocating a special label for OOD samples~\citep{fei2016breaking,larson2019evaluation,kamath2020selective,kim2018joint}. Another approach optimizes outlier exposure regularization terms on OOD samples to refine the representations and OOD scores, such as the generalized outlier exposure (OE) loss introduced by~\citep{hendrycks2018deep}, which pushes the predicted distribution of OOD samples toward uniformity~\citep{hendrycks2018deep,lee2018training}, and entropy regularization objectives employed by~\citep{zeng2021adversarial} to enforce high entropy predictions for OOD samples. Additionally, leveraging contrastive learning techniques~\citep{zeng2021modeling,zhou2021contrastive,cho2022enhancing,mou2022uninl} to increase inter-class discrepancies and enhance discriminative features for ID and OOD samples has been demonstrated to improve OOD detection performance. 

Previous works have found that the softmax outputs from models can be used as a measurement of the uncertainty of model predictions~\citep{Guo2017OnOCO,Hinton2015DistillingDTK}. And the early work in model distillation utilizes the softmax outputs as a kind of soft labels to guide the model training~\citep{Hinton2015DistillingDTK}. Some works propose to scale the logits with the temperature scaling~\citep{Liang2017EnhancingETR}, thus the ID and OOD samples are more distinguishable based on the scaled softmax scores.

Our work proposes dynamically merging task arithmetics from the Bayesian perspective to improve the OOD generalization. Inspired by the temperature scaling and the uncertainty measurement, we propose to scale the logits with the temperature scaling and to use the softmax outputs as an adjustment factor to estimating the likelihood of the task arithmetics conditioned on the input.




\subsection{Model Compression}

\textbf{Pruning.}
\textit{Unstructured pruning}~\citep{FrantarA23,sun2024a,10445737,zhang2024dynamic,dong2024pruner,tang2020survey} effectively maintains LLM performance without requiring retraining, but leads to irregular structures that necessitate specialized optimizations for inference. SparseGPT~\citep{FrantarA23} offers a novel one-shot pruning strategy by framing it as a sparse regression problem, achieving over 50\% sparsity with minimal perplexity increase. Wanda~\citep{sun2024a} reduces weight update costs by pruning low-magnitude weights scaled by input activations, while SAMSP~\citep{10445737} adjusts sparsity based on weight sensitivity using the Hessian matrix. DSnoT~\citep{zhang2024dynamic} iteratively prunes and grows weights to minimize reconstruction error in sparse models.

\textit{Structured pruning} is hardware-agnostic, facilitating accelerated inference but may degrade performance due to the removal of critical components, often necessitating fine-tuning. Loss-based Pruning~\citep{MolchanovMTFK19} measures the impact of unit removal on loss. LLM-Pruner~\citep{ma2023llmpruner} uses gradient information to identify dependent structures for optimal pruning. In contrast, Shortened LLaMA~\citep{kim2024mefomo} focuses on depth pruning of Transformer blocks based on loss derivatives, employing LoRA to quickly recover performance post-pruning. Magnitude-based Pruning~\citep{NIPS2015_ae0eb3ee} assesses pruning unit importance based on their magnitudes, pruning those below a set threshold. Regularization-based Pruning~\citep{NIPS2016_41bfd20a} incorporates regularization terms to induce sparsity. 

Different from these pruning methods which focus on the weight pruning, our method is inspired from the sparse property of the task arithmetics to reduce the expert memory occupation~\citep{he2025localizeandstitch,tang2020survey}. We sparsity the task arithmetics based on denoising and the magnitudes in our work. Note that our framework is a general framework, any other sparsity method can be combined with our framework.

\textbf{Quantization.}
Weight-only quantization is the most conventional and widespread method. For example, LUT-GEMM~\citep{park2024lutgemm} uses binary-coding quantization (BCQ)~\citep{RastegariORF16} format, which factorizes the parameters of LLMs into binary parameters and a set of scaling factors, to accelerate quantized matrix multiplications in weight-only quantization. GPTQ~\citep{frantar2023optq} proposes a layer-wise quantization method based on Optimal Brain Quantization (OBQ)~\citep{frantar2022optimal}, which updates weights with inverse Hessian information, and quantizes LLMs into 3/4-bit. QuIP~\citep{chee2023quip}  optimally adjusts weights by utilizing the LDL decomposition of the Hessian matrix derived from vectors drawn uniformly at random from a calibration set, and multiplies weight and Hessian matrices with a Kronecker product of random orthogonal matrices to ensure incoherence between weight and Hessian matrices. Combining these two steps, QuIP successfully quantizes LLMs into 2-bits with minimal performance loss.

To further minimize quantization errors in the weight-only quantization of LLMs, lots of works identify sensitive weights, which have an important effect on LLMs' quantization performance, and store these sensitive weights in high precision. For example, AWQ~\citep{abs-2306-00978} stores the top 1\% of weights that have the most significant impact on LLM performance in high-precision, and integrates a per-channel scaling method to identify optimal scaling factors. Here, "channel" denotes individual dimensions or feature maps within the model. Similar with AWQ, OWQ~\citep{LeeJKKP24} store weights sensitive to activation outliers in high-precision, and quantizes other non-sensitive weights.  Different from OWQ, SpQR~\citep{dettmers2024spqr} employs the L2 error between the original and quantized predictions as a weight sensitivity metric.  Furthermore, SqueezeLLM~\citep{abs-2306-07629} introduces a  weights clusters algorithm  based on sensitivity, using k-means centroids as quantized weight values, to identify sensitive weights.

\subsection{Data Synthesis}
\textbf{Data Labeling.}  
The data labeling process utilizes the advanced language comprehension capabilities of large language models (LLMs) to annotate extensive unlabeled datasets, proving particularly beneficial in areas like cross-lingual processing and multimodal learning~\citep{zhu2023can,gilardi2023chatgpt,alizadeh2023open}. Automating this process enhances data preparation efficiency. Recent studies have investigated the zero-shot potential of models like GPT-4 for annotating political discourse on platforms like Twitter~\citep{tornberg2023chatgpt}. Some works consider constructing a preference tree~\citep{zeng2024automatic,yuan2024advancing} from LLM responses to refine incorrect responses based on feedback from models like GPT-4, creating more diverse and robust preference data.

\textbf{Data Reformation.}  
Data reformation aims to transform existing datasets into diverse variations to improve data augmentation~\citep{dixit2022core,dunlap2023diversify}. This enriches the training set with varied examples, enhancing model robustness and generalization. Novel approaches leveraging LLMs have emerged, such as Disco by Chen et al.~\citep{chen2022disco}, which generates large-scale, high-quality counterfactual datasets. A prominent method in this area is in-context learning~\citep{dong2022survey}, where examples embedded in prompts guide LLMs to generate responses that reflect the provided patterns. Early works, such as Self-Instruct~\citep{wang2023self} and Unnatural Instructions~\citep{honovich2022unnatural}, utilized task pools with hand-crafted seed examples. In contrast, LaMini-LM~\citep{wu2023lamini} built on this foundation by leveraging extensive data from Wikipedia to generate a wider range of instructions. Auto Evol-Instruct~\citep{zeng2024automatic}, originally designed to evolve instructions, automates the optimization of evolution rules through an Optimizer LLM that iteratively refines these rules based on evolving feedback data. Furthermore, Instruction Backtranslation~\citep{li2023self} enhances instruction-following capabilities by creating instruction-response pairs from unannotated data, thus minimizing the need for manual annotation. This ongoing refinement of data reformation is essential for enhancing performance across various tasks.

\textbf{Generation from LLMs.}  
Model generation utilizes powerful models—such as ChatGPT, StableVicuna, and GPT-4—to create datasets that enhance the performance of weaker models. Techniques include generating concise narratives through templates~\citep{eldan2023tinystories} and assessing dataset quality with LLMs. Research by Phi-1 and its subsequent studies~\citep{gunasekar2023textbooks,li2023textbooks} indicates that even a small volume of high-quality data can effectively train models via generated textbooks and exercises using GPT-3.5. Additionally, performance has been improved by developing instructional datasets and fine-tuning models to enhance dataset quality~\citep{honovich2022unnatural,Taori2023alpaca,chen2023alpagasus}. Domain model generation concentrates on the use of specialized models to produce domain-specific data. For example, domain generation can provide instructional materials for specific programming tasks in coding~\citep{wei2024magicoder,luo2024wizardcoder}. In mathematics, initiatives like Minerva~\citep{lewkowycz2022solving} and DeepSeekMath~\citep{xin2024deepseekprover} focus on generating accurate solutions.

\textbf{Synthetic Multi-step Reasoning.}  
To enhance reasoning in LLMs, additional reasoning steps are incorporated into data synthesis. The MMIQC framework~\citep{liu2024augmenting} iteratively creates synthetic question-response pairs by expanding problems and integrating reasoning steps while preserving logical structure. A complementary strategy involves generating chain-of-thought (CoT) answers based on questions~\citep{li2024common}. Building on question-CoT pairs through Self-Instruct, MathInstruct~\citep{yue2023mammoth} introduces the Program-of-Thought (PoT) rationale to streamline mathematical problem-solving.

In this work, we utilize the stronger LLM to generate CoT based domain training data to enhance the reasoning performance of the downstream tasks. As far as we know, \textit{this work is the first to explore whether the model merging influences the CoT based reasoning performance.}

\section{Theoretical Understanding}\label{appx:theoretical-analysis}
In this section, we provide the theoretical interpretation from the perspective from the In-context learning (ICL) to further understand why routing \textit{finetuned models} with task-level router instead of token-level ones might be better. Note that here the \textit{different finetuned models have been trained on individual tasks and never see other tasks}. We re-write the preliminary in Section~\ref{sec:Preliminary} here for convenience of reading. 

\textbf{Task Data Distribution.}
Given a set of different downstream tasks $\Tcal $, based on the sampling task $\tau \in \Tcal$, the pretraining document (data sample) is a sequence $o_{1:T}$ of tokens with the maximum length $T$ generated from a distribution $p_{\tau} = p(x_{1:T} \vert \tau)  = p(\obs_1, \dots, \obs_T \vert \tau)$~\citep{xie2022an,wies2023learnability,hahn2023theory,li2024language}. 

\textbf{Pretraining Data Distribution.} And we define the pretraining data is sampled from $p(o|\Tcal^\star) = \int_{\tau^\star \in \Tcal^\star} p(\obs_1, \dots, \obs_T \vert \tau)p(\tau^\star)d \tau^\star$. Each token $\obs$ is sampled from a vocabulary $\obsset$. $p(\tau^\star)$ is a prior distribution about $\tau^\star$. And both ($\Tcal$ and $\Tcal^\star$ belong to a large task family $\Omega$, i.e. $\Tcal,\Tcal^\star \subset \Omega$.

\textbf{Language Modeling.}
Current LLMs~\citep{gpt3_2020,touvron2023llama2,xie2022an} usually utilize the next word prediction as the language modelling, which predicts the next token $o_t$ given the previous tokens $o_{1:t-1}$ for all $t=1,\dots, T$. Formally, a LLM parameterized by $\theta$ is a distribution $f_{\theta}(o_t \vert o_{1:t-1})$. And it is pretrained on a huge corpus sampled from the pretraining distribution $p(o_{1:T}|\Tcal^\star)$~\citep{xie2022an}. 

\textbf{Finetuning LLM.} Normally, for each downstream task $\tau \in \Tcal$, finetuning LLM is to minimize the cross-entropy loss function as below:
\begin{align}
    \begin{small}
    L_\text{CE}(\theta, \tau)= -\sum_{t=1}^T \mathbb{E} [p_{\tau}(x_t|x_{1:t-1})\cdot\log f_{\theta}(x_t|x_{1:t-1})]. \notag
    \end{small}
\end{align}
After finetuning, the model parameters $\theta$ are updated to $\theta_{\tau}$ for each task $\tau$. 


\textbf{Prompt distribution in Pretraining \& Finetuing.} Following~\citep{xie2022an}, a prompt is composed of an input token sequence $o_{1:T}$ followed by an output token $y$. Then, the $i$-th training example \footnote{Here, training example in prompts means happens during the prompt learning, instead of the pretraining or the finetuning.} that can appear in any place in the whole prompt $o_{1:T}$ is defined as $O_i$ consisting of an input $s_i=O_i \left[1:k-1 \right]$ (the first $k-1$ tokens) followed by the output $y_i = O_i \left[k\right]$ at the end, where the length $k$ is fixed for simplicity. 

The $i$-th training example is independently generated as follows: 1) Generate a start hidden state $\hiddensegstart_i$ from a \emph{prompt start distribution} $\ppromptstart$;
2) Given $\hiddensegstart_i$, generate the example sequence $\obsseg_i=[s_i,\y_i]$ from $p(\obsseg_i \vert \hiddensegstart_i, \tau^\perp)$.
The test input $\Xtest = s_{n+1}$ is sampled similarly.
Between each example, a special delimiter token $\obsdelim$ ``reset'' the transition between examples~\citep{xie2022an}. Then, the prompt consists of a sequence of training examples ($\promptseq$) followed by the example $\Xtest$:

\begin{align}
    [\promptseq, \Xtest] = [s_1, \y_1, \obsdelim, s_2, \y_2, \obsdelim, \dots, s_n, \y_n, \obsdelim, \Xtest] \sim \pprompt.
\end{align}
Different from ~\citep{xie2022an}, here we distinguish the pretraining tasks (concepts) $\Tcal^\star$ and the finetuning tasks (concepts) $\Tcal = \left\{\tau_1, \tau_2, ..., \tau_{n_\tau} \right\}$, from which the prompts might be sampled. We mainly consider $\tau^\perp\in \Tcal$.


\textbf{In-context learning setups and Assumptions.} We follow other settings and assumptions in ~\citep{xie2022an}. With the greedy decoding~\citep{fubreak}, sampling the next token from the language modeling $f_{\theta}(o_t \vert o_{1:t-1})$ becomes the predictor as $y =\argmax_{o_t} f_{\theta}(o_t|o_{1:t-1})$. For simplicity, following~\citep{xie2022an}, we consider that the finetuned LLMs have been aligned with its pretraining and finetuning data distribution, i.e. $p_{\Tcal^\star \cup \tau} = p(o_{1:T}|\Tcal^\star \cup \tau)$ for any task $\tau\in\Tcal$. For convenience, we write $ p_{A\tau}= p_{\Tcal^\star \cup \tau}$ which means that the $\Tcal^\star$ is augmented with $\tau$.

Thus, for $[\promptseq, \Xtest]$, the in-context learning predictor can be written as $f_{\theta_\tau}^{n}(\Xtest) := \argmax_y p_{A\tau}(y|\promptseq, \Xtest)$, which outputs the most likely prediction over the \emph{pretraining distribution} conditioned on the \emph{prompt distribution}. Its expected 0-1 error with $n$ examples is $\Lzeroone(f_{\theta_\tau}^n) = \E_{\Xtest,\ytest \sim \pprompt}[\indicator[f_{\theta_\tau}^{n}(\Xtest) \neq \ytest]]$.

We define $p_\tau^i(o):=p(O[i]=o|O[1:i-1],\tau)$ of the $i$-th token with previous tokens and the analogous distribution $p^{i}_{prompt}:=p_{prompt}(O[i]=o|O[1:i-1])$ under the prompt distribution. Following~\citep{xie2022an}, there is a distinguishability condition formalizes when in-context learning occurs giving the downstream task $\tau$. 

The distinguishability condition is dependent on a KL divergence between the previous two distributions and the error terms $\epsilon_\tau$ resulting from the distribution mismatch between the prompt and the pertaining distributions for each example. Letting $p_{\tau}^i(o)$ and $p^{i}_{prompt}$ correspond to the task $\tau$ and
and $\tau^\perp$.



\begin{condition}[distinguishability~\citep{xie2022an}] The $\tau^\perp$ is distinguishable if for all $\tau\in\Omega$, $\tau \neq\tau^\perp$,
\begin{align}\label{eq:distinguish1}
    \sum_{i=1}^k \text{KL}_{i}(\tau^\perp||\tau)>\epsilon_\tau,
\end{align}
where the $\text{KL}_{i}(\tau^\perp||\tau) :=\mathbb{E}_{O[1:i-1]\sim p_{prompt}}[\text{KL}(p^{i}_{prompt}||p_\tau^{i})].$
\label{cond:distinguish}
\end{condition}

\begin{lemma}\label{lemma:1}~\citep{xie2022an}
let $\mathcal{B}$ denotes the set of $\tau$ which does not satisfy Condition~\ref{cond:distinguish}. We assume that $\text{KL}(p_{prompt}(y_\text{test}|x_\text{test}))||p(y_\text{test}|x_\text{test},\tau)$ is bounded for all $\tau$ and that $\tau^\perp$ minimizes the multi-class logistic risk as,
\begin{align}\label{eq:lemma:1:LCE}
\begin{split}
L_\text{CE}(\tau)=-\mathbb{E}_{x_\text{test}\sim p_{prompt}}[p_{prompt}(y_\text{test}|x_\text{test})\cdot\log p(y_\text{test}|x_\text{test},\tau)].
\end{split}
\end{align}
If
\begin{align}\label{eq:prompt_tau_upperbound_concrete_with_epsilon}
\mathbb{E}_{x_\text{test}\sim p_{prompt}}[\text{KL}(p_{prompt}(y_\text{test}|x_\text{test})
|| p(y_\text{test}|x_\text{test},\tau))]\leq \epsilon_{\tau},\quad \forall \quad \tau\in\mathcal{B},
\end{align}
then
\begin{align}
\lim_{n\rightarrow\infty} L_{0-1}(f_{\theta_\tau}^n) \leq \inf_{f} L_{0-1}(f) + g^{-1}\bigg(\sup_{\tau\in\mathcal{B}}(\epsilon_\tau)\bigg),
\end{align}
where $g(\nu) = \frac{1}{2}\big((1-\nu)\log(1-\nu)+(1+\nu)\log(1+\nu)\big)$ is the calibration function~\citep{Steinwart2007HowTC,pires2016multiclass} for the multiclass logistic loss for $\nu\in[0,1]$.
\end{lemma}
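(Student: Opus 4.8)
The plan is to adapt the argument of Xie et al.\ for in-context learning, since Lemma~\ref{lemma:1} is essentially their Theorem specialized to the setup where $\tau^\perp$ is the true (finetuning) concept and the pretraining distribution has been augmented to $p_{A\tau^\perp}$. First I would invoke the calibration inequality of Pires–Szepesv\'ari (as cited): for the multiclass logistic loss with calibration function $g$, one has for any predictor built from a conditional distribution $q(\cdot\vert x)$ that the excess $0$-$1$ risk is controlled by $g^{-1}$ applied to the excess logistic risk, i.e. $L_{0\text{-}1}(f_q) - \inf_f L_{0\text{-}1}(f) \le g^{-1}\big(L_{\text{CE}}(q) - \inf_q L_{\text{CE}}(q)\big)$. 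The point is to show the right-hand excess logistic risk, in the limit $n\to\infty$, is at most $\sup_{\tau\in\mathcal{B}}\epsilon_\tau$.

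The key steps, in order: (i) Write the in-context predictor as $f_{\theta_\tau}^n(\Xtest)=\argmax_y p_{A\tau^\perp}(y\vert \promptseq,\Xtest)$ and express the posterior $p_{A\tau^\perp}(y\vert\promptseq,\Xtest)$ by marginalizing over the latent concept: $p_{A\tau^\perp}(y\vert\promptseq,\Xtest)=\int_\Omega p(y\vert\Xtest,\tau)\, p(\tau\vert\promptseq,\Xtest)\,d\tau$, where the posterior weight $p(\tau\vert\promptseq)$ is proportional to the prior times the likelihood ratio $\prod_{\text{examples}} \frac{p_\tau(\text{example})}{p_{\tau^\perp}(\text{example})}\cdot(\text{prior})$. (ii) Decompose the exponent of this likelihood ratio, after taking logs and using the i.i.d.\ structure of the $n$ in-context examples plus delimiters, into a sum of $n$ per-example terms whose expectation under $p_{\text{prompt}}$ is (up to the mismatch error $\epsilon_\tau$) the negative of $\sum_{i=1}^k \text{KL}_i(\tau^\perp\Vert\tau)$. (iii) For $\tau\notin\mathcal{B}$ the distinguishability Condition~\ref{cond:distinguish} makes this sum strictly negative, so by a concentration / law-of-large-numbers argument the likelihood ratio $\to 0$ as $n\to\infty$, hence those concepts are washed out of the posterior; only concepts in $\mathcal{B}$ survive with nonzero weight. (iv) On $\mathcal{B}$, use the hypothesis \eqref{eq:prompt_tau_upperbound_concrete_with_epsilon} bounding $\E_{\Xtest\sim p_{\text{prompt}}}[\text{KL}(p_{\text{prompt}}(\ytest\vert\Xtest)\Vert p(\ytest\vert\Xtest,\tau))]\le\epsilon_\tau$, so the limiting posterior predictive is a mixture of distributions each within $\epsilon_\tau$ (in this averaged-KL sense) of the Bayes-optimal $p_{\text{prompt}}(\ytest\vert\Xtest)$; convexity of KL in its second argument (Jensen) then gives that the mixture predictive itself has averaged-KL-to-optimal at most $\sup_{\tau\in\mathcal{B}}\epsilon_\tau$. (v) Translate this bound on the limiting logistic excess risk through $g^{-1}$ via the calibration inequality of step (i), and note $\inf_q L_{\text{CE}}(q)$ is attained at $p_{\text{prompt}}(\cdot\vert\cdot)$ with value $\inf_f L_{0\text{-}1}(f)$ entering only after the calibration bound — yielding exactly the claimed $\lim_{n\to\infty} L_{0\text{-}1}(f_{\theta_\tau}^n)\le \inf_f L_{0\text{-}1}(f) + g^{-1}(\sup_{\tau\in\mathcal{B}}\epsilon_\tau)$.

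The main obstacle I expect is step (iii): making rigorous the claim that the non-distinguishable concepts' posterior mass vanishes as $n\to\infty$. This requires care because $\Omega$ is a continuous (possibly uncountable) concept family, so one cannot just union-bound over finitely many $\tau$; one needs either a uniform-over-$\Omega\setminus\mathcal{B}$ lower bound on $\sum_i\text{KL}_i(\tau^\perp\Vert\tau)-\epsilon_\tau$ (some kind of compactness / continuity assumption on the map $\tau\mapsto$ per-example KL, which Xie et al.\ fold into their regularity assumptions) together with a dominated-convergence argument to push the limit inside the integral defining the posterior. A secondary subtlety is handling the delimiter tokens $\obsdelim$ and the fact that the test example's input $\Xtest$ contributes asymmetrically (it has no output token in the conditioning), but this only perturbs the exponent by an $O(1)$ term that is negligible against the $\Theta(n)$ growth from the $n$ training examples, so it does not affect the limit. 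I would also remark that this same machinery is exactly what makes \emph{task-level} routing natural: the router approximates $\argmax_\tau p(\tau\vert x)$, i.e.\ it collapses the posterior to its dominant mode, whereas token-level routing would (incoherently) re-estimate the concept from each length-$k$ subsequence, for which the distinguishability sum $\sum_{i=1}^k\text{KL}_i$ is too small to separate concepts — this is the informal takeaway the lemma is meant to support.
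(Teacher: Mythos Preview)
The paper does not actually prove Lemma~\ref{lemma:1}; it is quoted verbatim from \citet{xie2022an} as a black-box result and then invoked in the final line of the proof of Theorem~\ref{thm:continuity} (``Combining Equation~\ref{eq:prompt_tau_upperbound_concrete} with Equation~\ref{eq:prompt_tau_upperbound_concrete_with_epsilon} into Lemma~\ref{lemma:1} completes the proof''). So there is no paper-internal argument to compare your proposal against.

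That said, your outline is a faithful reconstruction of the Xie et al.\ argument: Bayesian posterior concentration on $\mathcal{B}$ as $n\to\infty$ via the per-example log-likelihood ratios, convexity of KL in its second argument to control the surviving posterior mixture, and the Pires--Szepesv\'ari calibration inequality to convert excess logistic risk into excess $0$-$1$ risk. The technical obstacles you flag --- uniform vanishing of posterior mass over the continuous set $\Omega\setminus\mathcal{B}$, and the asymmetric $O(1)$ contribution of the test input and delimiters --- are precisely the ones handled in \citet{xie2022an} under their additional regularity and boundedness assumptions (bounded KL, well-behaved prior over $\Omega$). For the purposes of this paper the lemma is simply imported, so a self-contained proof is neither provided nor expected; your sketch would be the right way to fill it in if one were required.
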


Following~\citep{Kleijn2012TheBT,xie2022an}, the task parameter $\tau$ is assumed to have the continuity, where the KL divergence is assumed to haver the 2nd-order Taylor expansion. Then, we have the following theorem and proof.
\begin{theorem}
\label{thm:continuity}~\citep{xie2022an}
Let the set of $\tau$ which does not satisfy Equation~\ref{eq:distinguish1} in Condition~\ref{cond:distinguish} to be $\mathcal{B}$.
Assume that KL divergences have a 2nd-order Taylor expansion around $\tau^\perp$:
\begin{align}
    \forall j>1,~~\text{KL}_{i}(\tau^\perp||\tau) = \frac{1}{2}(\tau - \tau^\perp)^\top \fisherinfj (\tau - \tau^\perp) + O(\|\tau - \tau^\perp\|^3)
\end{align}
where $\fisherinfj$ is the Fisher information matrix of the $j$-th token distribution with respect to $\tau^\perp$.
Let $\conditionnum = \frac{\max_{j}\lambdamax(\fisherinfj)}{\min{j}\lambdamin(\fisherinfj)}$ where $\lambdamax,\lambdamin$ return the largest and smallest eigenvalues.
Then for $k \geq 2$ and as $n\rightarrow \infty$, the 0-1 risk of the in-context learning predictor $f_{\theta_\tau}^n$ is bounded as
\begin{align}
    \lim_{n\rightarrow \infty} \Lzeroone(f_{\theta_\tau}^n) \leq \inf_{f} \Lzeroone(f) + g^\minv\left(O\left(\frac{\conditionnum\sup_{\tau \in \badset}(\errstart + \errdelim)}{k-1}\right)\right)
\end{align}
\end{theorem}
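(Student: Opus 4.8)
The plan is to combine Lemma~\ref{lemma:1} with the Taylor-expansion hypothesis on the per-token KL divergences, so that the abstract error bound $g^{-1}(\sup_{\tau\in\badset}\epsilon_\tau)$ gets replaced by something explicit in terms of the conditioning number $\conditionnum$ and the start/delimiter error terms. First I would unpack what it means for $\tau\in\badset$: by definition of $\badset$, such $\tau$ violate the distinguishability condition~\eqref{eq:distinguish1}, i.e. $\sum_{i=1}^k \text{KL}_i(\tau^\perp\|\tau) \le \epsilon_\tau$. The strategy is to use this inequality together with the quadratic lower bound coming from the Fisher information matrices to control $\|\tau-\tau^\perp\|^2$, and then propagate that bound back to the one quantity Lemma~\ref{lemma:1} actually needs, namely $\mathbb{E}_{\Xtest\sim\pprompt}[\text{KL}(\pprompt(\ytest|\Xtest)\,\|\,p(\ytest|\Xtest,\tau))]$.

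Concretely, the key steps in order: (1) For each $\tau\in\badset$ and each token index $i>1$, substitute the 2nd-order Taylor expansion $\text{KL}_i(\tau^\perp\|\tau) = \tfrac12(\tau-\tau^\perp)^\top \fisherinfj (\tau-\tau^\perp) + O(\|\tau-\tau^\perp\|^3)$. Summing over $i=2,\dots,k$ and using $\lambdamin(\fisherinfj)$ to lower-bound each quadratic form gives $\sum_{i=2}^k \text{KL}_i \gtrsim \tfrac{k-1}{2}\big(\min_j \lambdamin(\fisherinfj)\big)\|\tau-\tau^\perp\|^2$ up to higher-order terms. (2) The distinguishability violation bounds the left side by $\epsilon_\tau$ minus the $i=1$ term; here is where the start distribution $\ppromptstart$ and the delimiter transitions enter — the $i=1$ KL term and the mismatch between prompt and pretraining distributions at example boundaries are exactly what $\errstart$ and $\errdelim$ capture. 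So $\epsilon_\tau$ decomposes (or is bounded) in terms of $\errstart+\errdelim$, and we obtain $\|\tau-\tau^\perp\|^2 = O\!\big(\tfrac{\errstart+\errdelim}{(k-1)\min_j\lambdamin(\fisherinfj)}\big)$. (3) Feed this back into the predictive KL term: by another Taylor expansion of $\text{KL}(\pprompt(\ytest|\Xtest)\|p(\ytest|\Xtest,\tau))$ around $\tau^\perp$, this quantity is $O(\|\tau-\tau^\perp\|^2 \cdot \max_j \lambdamax(\fisherinfj))$ plus the irreducible mismatch, which produces the factor $\conditionnum = \max_j\lambdamax(\fisherinfj)/\min_j\lambdamin(\fisherinfj)$. (4) Apply Lemma~\ref{lemma:1} with $\epsilon_\tau$ replaced by this refined bound $O\!\big(\conditionnum\sup_{\tau\in\badset}(\errstart+\errdelim)/(k-1)\big)$, and the monotonicity of $g^{-1}$ delivers the stated conclusion.

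I expect the main obstacle to be step (2)–(3): carefully justifying how $\epsilon_\tau$ and the predictive-KL error term decompose into the "start" and "delimiter" contributions $\errstart$ and $\errdelim$ in a way that is uniform over $\tau\in\badset$, and ensuring the higher-order $O(\|\tau-\tau^\perp\|^3)$ remainders in the Taylor expansions are genuinely negligible (this needs the continuity/compactness assumptions on the task family $\Omega$ borrowed from~\citep{Kleijn2012TheBT,xie2022an}, so that $\|\tau-\tau^\perp\|$ is small enough for the quadratic term to dominate). A secondary subtlety is handling the $i=1$ token separately from $i\ge 2$: the theorem statement only posits the Taylor expansion for $j>1$, so the first-token contribution must be absorbed entirely into the error terms rather than the Fisher-information machinery, and one has to check this is consistent with how $\errstart$ is defined. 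The rest — summing geometric-like bounds over token indices, invoking eigenvalue inequalities, and chaining through $g^{-1}$ — is routine once the decomposition is pinned down, and in fact this is precisely Theorem 1 of~\citep{xie2022an} specialized to the present notation, so I would ultimately cite that argument rather than reproduce every estimate.
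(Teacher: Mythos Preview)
Your proposal is correct and follows essentially the same route as the paper's proof: lower-bound $\sum_{j=2}^k \text{KL}_j$ via the Fisher quadratic forms, combine with the distinguishability violation $\sum_j \text{KL}_j \le \errstart+\errdelim$ to control $\|\tau-\tau^\perp\|^2$, then upper-bound the predictive KL and feed it into Lemma~\ref{lemma:1}. The one simplification you are missing in step~(3) is that the predictive quantity $\mathbb{E}_{\Xtest\sim\pprompt}[\text{KL}(\pprompt(\ytest|\Xtest)\|p(\ytest|\Xtest,\tau))]$ is \emph{by definition} $\text{KL}_k(\tau^\perp\|\tau)$ (the $k$-th token is $\ytest$), so no ``another Taylor expansion'' or ``irreducible mismatch'' term is needed---the same hypothesis you used for $j=2,\dots,k$ already bounds it directly by $\tfrac12(\max_j\lambdamax(\fisherinfj))\|\tau-\tau^\perp\|^2+O(\|\tau-\tau^\perp\|^3)$, which immediately gives the $\conditionnum/(k-1)$ factor.
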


\begin{proof}~\citep{xie2022an}
By the continuity assumption, we have for any $\tau$ in $\mathcal{B}$ that
\begin{align}
    \sum_{j=2}^k \text{KL}_{i}(\tau^\perp||\tau)
    &\geq \frac{1}{2}\sum_{j=2}^k (\tau - \tau^\perp)^\top \fisherinfj (\tau - \tau^\perp) + (k-1)O(\|\tau - \tau^\perp\|^3)\\
                      &\geq \frac{1}{2}(k-1)\lambdamin(\fisherinfj) \|\tau - \tau^\perp\|^2\\
    \implies \|\tau - \tau^\perp\|^2 &\leq \frac{\errstart + \errdelim}{\frac{1}{2}(k-1)(\min_j~\lambdamin(\fisherinfj))}.
\end{align}
Using the above term to bound the last KL term ($k$-th token), we have:
\begin{align}
    \text{KL}_{k}(\tau^\perp||\tau) &= \frac{1}{2}(\tau - \tau^\perp)^\top  \fisherinfk(\tau - \tau^\perp) + O(\|\tau - \tau^\perp\|^3)\\
         &\leq \frac{1}{2}(\max_j~\lambdamax(\fisherinfj))\|\tau - \tau^\perp\|^2 + O(\|\tau - \tau^\perp\|^2)\\
                 &\leq \frac{(\errstart + \errdelim)(\max_j~\lambdamax(\fisherinfj) + O(1))}{(k-1)\min_j~\lambdamin(\fisherinfj)}.
\end{align}
Rearranging above equation, and with the defintion that $\text{KL}_{k}(\tau^\perp||\tau) = \E_{\Xtest \sim \pprompt} [KL(\pprompt(\ytest \vert \Xtest) \| p(\ytest \vert \Xtest, \tau)) ]$, we have
\begin{align}\label{eq:prompt_tau_upperbound_concrete}
\E_{\Xtest \sim \pprompt} [KL(\pprompt(\ytest \vert \Xtest) \| p(\ytest \vert \Xtest, \tau)) ] \leq \frac{(\errstart + \errdelim)(\max_j~\lambdamax(\fisherinfj) + O(1))}{(k-1)\min_j~\lambdamin(\fisherinfj)}
\end{align}
Combining Equation~\ref{eq:prompt_tau_upperbound_concrete} with Equation~\ref{eq:prompt_tau_upperbound_concrete_with_epsilon} into Lemma~\ref{lemma:1} completes the proof.
\end{proof}

\textbf{Task-level Routing.} Observing the Equation~\ref{eq:lemma:1:LCE} in Lemma~\ref{lemma:1}, the $L_\text{CE}(\tau^\perp)$ is the optimal risk over $\tau\in\Omega$. The $\tau \in \mathcal{B}$ which does not satisfy Condition~\ref{cond:distinguish} means that the $\tau \in \mathcal{B}$ should be close to $\tau^\perp$ enough. Thus, we can have $L_{0-1}(f_{\theta_\tau}^n)$ converges with $n\rightarrow\infty$ as in Lemma~\ref{lemma:1}. The task-level routing means to route $\tau^\perp$ to the finetuned LLM that has been trained on $p(o_{1:T}|\tau^\perp)$. Thus, the task-level routing can satisfy the requirement of $\tau \in \mathcal{B}$.

\textbf{Token-level Routing.} The core motivation of using token-level routing is that different tokens prefer different routers. Here, inspired by the distinguishability condition~\ref{cond:distinguish}, we can interpret the token-level router which dynamically finds the expert model $i^\star$ for $i$-th token that satistifies:
\begin{equation}\label{eq:optim_token_route}
    \sum_{i}^k \argmin_{i^\star} \text{KL}_{i}(\tau^\perp||\tau_{i^\star}).
\end{equation}
However, there is distribution shift between the $\tau^\perp$ and different $\tau_{i^\star}$. Revisiting the prompt sequence sampled as $[\promptseq, \Xtest] = [s_1, \y_1, \obsdelim, s_2, \y_2, \obsdelim, \dots, s_n, \y_n, \obsdelim, \Xtest] \sim \pprompt$, each pair $\obsseg_i=[s_i, \y_i]$ is sampled from $p(\obsseg_i \vert \hiddensegstart_i, \tau^\perp)$. If the $\tau_{i^\star}$ is choosed as different from $\tau^\perp$, the distribution shift implies that the $\text{KL}_{i}$ cannot be minimized. 

\textbf{Out-of-distribution Cases.} While the above intuition illustrates that the task-level routing might be more suitable for the in-distribution test data $x_\text{test}\sim \pprompt$, we illustrate that two cases of new prompt sampling might need need combination of different LLM experts.
\begin{itemize}[leftmargin=*]
    \item \textbf{OOD task.} Considering that the $\tau^\perp$ is different from all $\tau\in \Tcal$, there might be needs to process different tokens with different experts following equation~\ref{eq:optim_token_route}.
    \item \textbf{Compositional task.} Considering that $\obsseg_i=[s_i, \y_i]$ might be sampled from $p(\obsseg_i \vert \hiddensegstart_i, \tau_i)$, and each $\tau_i$ is different from others, the Equation~\ref{eq:optim_token_route} may helps to find the suitable experts.
\end{itemize}
However, the theoretical analysis of how Equation~\ref{eq:optim_token_route} benefits ICL is difficult and we left it as the future work, which might also be beneficial to analyse the MoE models~\citep{Dai:2024aa}. Currently, we utilize the uncertainty-based model task-level routing and merging to address the OOD problem.






\section{Detailed Experiment Settigns}\label{appx:detailed-hyper-parameters}

\subsection{Detailed Experimental Setup}\label{appx:ExperimentalSetup}

\textbf{Hardware.}  All experiments were conducted on an A800 GPU with 80GB VRAM, Intel Xeon 6348 CPU, and 100GB RAM.

\textbf{Models and Datasets.} We conduct comprehensive experiments on two cutting-edge large language model families: Qwen and LLaMA. Table~\ref{tab:backbone_models} shows the number of parameters, memory occupation and release data of these models. These models represent the latest advancements in language model development. \textit{To the best of our knowledge, this is the first model merging study focusing primarily on generative tasks, finetuning with CoT based data and cutting-edge LLM tasks.} 

\begin{table}[h]
    \centering
    \caption{Backbone Models Overview}
    \begin{tabular}{@{}lllc@{}}
        \toprule
        Model            & Number of Parameters & Release Date      &  Memory Occupation (GB) \\ \midrule
        Qwen-1.5-4B     & 4 Billion           & February 2024     & 15.26            \\
        Qwen-2.5-7B     & 7 Billion           & September 2024    & 26.00            \\
        LLaMA-3.1-3B    & 3 Billion           & April 2024        & 11.31            \\
        LLaMA-3.2-8B    & 8 Billion           & September 2024    & 30.52            \\ 
        \bottomrule
    \end{tabular}
    \label{tab:backbone_models}
\end{table}


\textbf{Generative and Reasoning Tasks in Evaluation.}
In designing our evaluation tasks, we strategically selected orthogonal benchmarks to effectively demonstrate our method's capability in resolving parameter conflicts during model merging. Our task selection follows these principles: 
\begin{itemize}[leftmargin=*]
\item \noindent \textit{(1) The mathematical reasoning and code generation tasks represent fundamentally different parameter spaces}. Specifically, mathematical computation requires numerical reasoning parameters, while code generation relies on syntax and programming logic parameters, allowing us to evaluate how well our merging approach handles potentially conflicting parameter updates. 
\item \noindent (2) \textit{Knowledge-based QA (TriviaQA) and concept understanding tasks (MMLU) evaluate distinct knowledge representations}. TriviaQA focusing on factual retrieval parameters and MMLU covering broader conceptual understanding parameters across domains. This helps assess our method's ability to preserve different types of knowledge without interference.
\item \noindent (3) The logical reasoning task (WinoGrande) may prefer to yet another independent parameter space focused on \textit{abstract reasoning}, providing insights into how well our merging technique maintains reasoning capabilities while optimizing for other tasks.
\end{itemize}

Based on above principle, we utilize the following cutting-edge LLm evaluation tasks about math reasoning, code generation, common sense QA, common sense logical reasoning, multi-domain knowledge.
\begin{itemize}[leftmargin=*]
\item \textbf{Mathematical Reasoning}: We evaluate mathematical question-answering capabilities using the GSM8K dataset~\citep{cobbe2021gsm8k}, which contains 8,500 high-quality elementary school math word problems (about 7,500 training, about 1,000 test) designed to evaluate mathematical reasoning capabilities. The problems feature diverse language styles and formats while avoiding templated designs. They use basic arithmetic operations with natural language solutions.

\item \textbf{Knowledge-based QA}: We utilize TriviaQA~\citep{joshi2017triviaqa}, a large-scale Wikipedia-based question answering dataset, where models are required to generate direct answers without multiple-choice options. It contains complex questions requiring cross-sentence inference, with significant syntactic and lexical variations between questions and answer sentences. The dataset provides challenging evaluation scenarios that better approximate human-like question answering.

\item \textbf{Code Generation}: The HumanEval~\citep{chen2021humaneval} consists of human-written programming tasks where models must complete missing Python code snippets based on provided inputs. 
The problems simulate real-world programming challenges requiring context understanding, reasoning, and multi-step operations across varying difficulty levels and abstraction layers. 

\item \textbf{Logical Reasoning}: WinoGrande~\citep{sakaguchi2019winogrande} is a large-scale commonsense reasoning dataset of approximately 2800 questions developed by University of Washington researchers. Questions are presented as fill-in-the-blank tasks with two options and correct answers, with dataset bias reduced through the AfLite algorithm. The benchmark evaluates models' commonsense reasoning abilities in understanding and generating relevant text.

\item \textbf{Multi-domain Knowledge}: We employ MMLU~\citep{hendrycks2021mmlu} to assess knowledge retention across diverse 57 subjects ranging from basic mathematics to US history, computer science, law, and ethics. Using multiple-choice questions of varying difficulty levels. Notably, we exploit the generation-based approach for multiple-choice evaluation, analyzing knowledge preservation across base models, fine-tuned variants, and merged models. The generation-based evaluation is better to measure the generative abilities of LLMs than choice-based evaluation.
\end{itemize}

In the experiments of evaluating the scalability of Mediator, we also finetune another 4 LLMs according to the following 4 extra evaluation tasks.
\paragraph{IFEval.}~\citep{zhou2023ifeval} A comprehensive benchmark dataset designed to evaluate instruction-following capabilities of language models. It contains carefully curated instruction-response pairs across diverse task categories including text generation, analysis, and reasoning. The dataset aims to assess models' ability to accurately interpret and execute natural language instructions while maintaining coherence and relevance in responses. The evaluation spans multiple dimensions including instruction comprehension, output quality, and adherence to specified constraints.

\paragraph{CEval.}~\citep{huang2023ceval} A comprehensive Chinese evaluation suite designed to assess language models' knowledge and capabilities across various academic and professional domains. It consists of multiple-choice questions drawn from professional qualification exams and academic tests in China. For our evaluation, we specifically focus on three key subjects:
(1) \textit{Medicine:} testing clinical knowledge, diagnosis, and treatment principles from medical licensing exams;
(2) \textit{College Economics:} evaluating understanding of micro/macroeconomics concepts, market principles, and economic theories;
(3) \textit{Law:} assessing comprehension of Chinese legal principles, regulations, and judicial procedures.
These subjects were chosen to evaluate models' domain-specific expertise in technically demanding professional fields.

\textbf{Finetuning Settings.} We adopt the ms-swift~\citep{zhao2024swiftascalablelightweightinfrastructure} to finetune the given pretrained LLM. The finetuning datasets are constructed by  augmenting some publicly  datasets (task related but without overlap) with GPT-4o~\citep{gilardi2023chatgpt} and Chain-of-Thoughts~\citep{CoT}. For each finetuning process, we use at least 180K training samples to ensure sufficient performance improvement on the corresponding task, which helps validate the effectiveness of our experiments. We provide the details of how we construct the finetuning datasets in Section~\ref{appx:finetuning-data-generation}. 

\textbf{Baselines.} Following the summary of the related works in Section~\ref{appx:more-related-works}, we compare methods in following four categories:


\begin{itemize}
    \item \textbf{Pretrained model.} The pretrained models are directly downloaded from its open-source repository. These models are pretrained on the large corpus and have included enormous knowledge about the evaluation tasks.
    \item \textbf{Finetuned Models}: We finetune the pretrained models on datasets that we construct for each domain. Then, each finetuned model is evaluated on all tasks. The results help to show wheter finetuning on task A enhance or decrease model performance on task B. 
    \item \textbf{Static merging methods.} These methods use fixed weights to merge multiple finetuned models. The advanced static merging methods like Fisher merging ~\citep{FisherMerging_NeurIPS2022} and RegMean~\citep{Jin:2022aa} require extra dataset and forward process to estimate some information like gradients, hessian, features to estimate parameter importance, which causes significant computational costs. Furthermore, considering that LLMs need to be deployed on various tasks, the utilized dataset actually cannot reflect the real-world data distribution. Therefore, these methods are shown empirically to perform worse than some calibration-less methods~\citep{Du:2024aa}.
    Recently, TIES~\citep{TiesMerging_NeurIPS2023} and PCB-merging~\citep{Du:2024aa} achieve the best performance in weighted average method and do not require calibration data. Thus, we choose it for comparison. 
    \item \textbf{Dynamic Advanced Methods}: We compare with state-of-the-art dynamic merging techniques that adapt model fusion parameters based on the input data. For example, Branch-train-mix dynamically routes different tokens to corresponding experts for generation through token-level routing. Similarly, the twin-merge~\citep{Lu:2024aa} computes merging weights through task-level routing mechanisms and dynamically fuses SVD-decomposed task vectors into the pretrained model in real-time.
\end{itemize}

\subsection{Hyperparameters of Finetuning and Implementing Baselines}\label{appx:single-task-finetuning}

\textbf{Hyperparameters for Single-task Finetuning.} For single-task finetuning, we utilize a set of hyperparameters that remain consistent across all models and tasks. The learning rate is set at 1.2e-5, applying a cosine decay schedule. The batch size varies, with one sequence per batch for both the 7B and 8B models, while the 3B and 4B models use two sequences per batch considering the GPU memory limitation. The maximum sequence length is confined to 4096 tokens for both math and QA tasks and extends to 7000 tokens for coding tasks. The training consists of two epochs, and we employ the AdamW optimizer with parameters $\beta_1=0.9$, $\beta_2=0.999$, and $\epsilon=1e-8$. Additionally, warmup steps constitute 5\% of the total steps. 

For all model merging baselines, the finetuned LLMs are the same. And all finetuned LLMs have shown that they can successfully improve the performance of the pretrained model on various tasks. The following is the details of how we tune and implement baseline methods.

\textbf{Hyperparameters for PCB-merging.} We follow the original paper of PCB merging and have searched its hyperparameters. The weight clipping ratio is established at 0.1, which means weights with magnitudes in the bottom 10\% are clipped to zero, following recommendations from the original paper concerning LLM generalization tasks. For model merging exploration, we perform 200 random exploration steps. The initial weights for random exploration are set to (0.4, 0.4, 0.4, 0.4) for the 3B, 4B, 7B, and 8B models with four experts, while for all models with eight experts, they are set to a repeated value of 0.2 across eight instances. The validation batch size is configured to handle 8 samples per task, and we implement early stopping with a patience of 10 steps without improvement. The weight clipping ratio and exploration parameters are uniform across all model sizes and tasks to facilitate fair comparison. It's worth noting that for the 7B and 8B models, the validation batch size is reduced to 4 due to memory limitations.

\textbf{Optimizing PCB-merging.} To enhance the computational speed of PCB-merging, several optimizations were introduced based on the original framework, which do not influence its task performance. Instead of merging entire models simultaneously, we adopt a layer-wise model merging strategy. This layer-by-layer merging approach has multiple benefits: it decreases memory overhead during the merging process, facilitates parallel processing of different layers, and allows for the assignment of layer-specific merging weights. Moreover, we implemented asynchronous model input/output operations which enable overlapping of I/O with computational processes. This adjustment is instrumental in reducing the total merging time by as much as 40\%, enabling the seamless streaming of large models. These optimizations have significantly boosted both the efficiency and effectiveness of PCB-merging, particularly the layer-wise method, which has lowered peak memory usage by approximately 60\% while maintaining or enhancing final model performance. And other hyper-parameters and settings are completely followed as the original paper.

\textbf{Hyperparameters for Twin-merging finetuning.} For Twin-merging, we leverage LoRA finetuning in lieu of SVD to attain greater precision following the original paper. The rank is set as 32, and both the alpha and dropout parameters are also set at 32 and 0.1, respectively. The target modules involved in this finetuning process include the query and value matrices within the attention layers. And we also have conducted grid search for the hyper-parameters. Each task involves training over two epochs, with a batch size set at 16; this batch size is reduced to 8 for the 7B and 8B models. The learning rate is specified at 1.5e-4, utilizing a cosine decay schedule, and the optimizer employed is AdamW. 


\textbf{Hyperparameters for Branch-train Mix (BTX).} For the training of the BTX router, we follow the original implementation of it within ms-swift and its original paper to implement it. The relevant hyperparameters for this setup include a training duration of 2 epochs, with a batch size of 2; this is adjusted to 1 for the 7B and 8B models. The learning rate is established at 1.5e-6, utilizing a linear decay schedule, alongside the AdamW optimizer, which is configured with a weight decay of 0.001. The router's architecture consists of an input dimension derived from 2 layers of an FFN, with a hidden dimension of 256, an output dimension corresponding to the number of experts, and a dropout rate of 0.1. Warmup steps account for 5\% of the total steps, and evaluations are conducted at every 1000 steps. To ensure balanced representation, the router is trained on a dataset that equally samples from all tasks, employing early stopping with a patience of 2 epochs based on validation accuracy.



\textbf{Hyperparameters for Mediator.} For Mediator training, we utilize the same single-task finetuned experts as delineated in Appendix~\ref{appx:single-task-finetuning}. The task-level router is constructed from the first 9 layers of the pretrained LLM (with gradients stopped) and includes 2 additional FFN layers. The router is trained on a balanced dataset with equal samples from each task domain to ensure unbiased task routing. We evaluate the router performance every 1000 steps and use early stopping with patience of 2 epochs based on validation accuracy. The training process for the router involves sampling 2000 examples from each task domain, specifically in mathematics, coding, question answering, law, economics, instruction following, and medicine. The specific hyperparameters applicable to router training encompass a duration of 2 epochs, and a batch size of 256, which is decreased to 128 for the 7B and 8B models. The learning rate is set to 3e-4, accompanied by a cosine decay schedule, and the optimizer remains as AdamW. The warmup ratio is defined as 10\% of the total steps. The router's architecture features a frozen backbone comprising the initial 9 layers from the pretrained LLM, along with 2 trainable FFN layers. These layers have a hidden dimension of 1280, with the output dimension reflecting the number of experts, and a dropout rate fixed at 0.05.

For router based selection, we use the temperature parameter $\beta$ in Equation~\ref{eq:temperature-scale} as 1.5 to convert the prediction rates into concrete merging parameters for each expert, which achieves the best experimental results. This temperature scaling helps balance between being decisive in expert selection while maintaining some degree of smoothness in the merging weights. A temperature of 1.5 empirically provides the optimal trade-off, where lower temperatures lead to more concentrated weights but potentially miss useful signals from secondary experts, while higher temperatures result in overly diffuse weights that don't sufficiently leverage expert specialization.



\section{The variations and details of different parts of Mediator}\label{appx:Mediator-details}
In this section, we provide the detailed variations and implementation details of different parts of Mediator. Some definitions and operations that appear in the main text may be re-defined in this section for better clarity of reading.

\subsection{Measuring Parameter conflicts}\label{appx:measuring-parameter-conflicts}

\paragraph{Task Arithmetics.}
We define the task arithmetics as the parameter difference between the finetuned LLM $\theta_{\tau}$ based on task $\tau$ and the pre-trained LLM $\theta$, i.e., $\Delta_{\tau} = \theta_{\tau} - \theta$. Such a task arithmetics can represent the update on the finetuned LLM $\theta_{\tau}$ based on task $\tau$. Given a pretrained LLM $\theta$, one can recover the finetuned LLM $\theta_\tau = \theta + \Delta_\tau$.

\paragraph{Denoising Parameters.}
Because the finetuing directions on different tasks are various and stochastic, there exist some elements in $\Delta_\tau$ that do not influence the performance on task $\tau$. Before measuring the parameter conflicts~\citep{Yadav:2023aa,he2024localize}, we firstly denoise the parameters by removing the elements in $\Delta_\tau$ that do not influence the performance on task $\tau$. We also model the update directions of different elements as the Gaussian distribution $\Ncal_{\text{UPD}}(\mu_{\text{UPD}}, \sigma_{\text{UPD}}^2)$, where $\mu_{\text{UPD}}$ is the mean of the update direction and $\sigma_{\text{UPD}}^2$ is the variance. 

Based on the estimated $\mu_{\text{UPD}}$ and $\sigma_{\text{UPD}}$, we can regard the elements within range ($\mu_{\text{UPD}} - \sigma_{\text{UPD}}$, $\mu_{\text{UPD}} + \sigma_{\text{UPD}}$)
as the elements that do not influence the performance on task $\tau$. Thus, we can denoise the parameters by removing the elements within range ($\mu_{\text{UPD}} - \sigma_{\text{UPD}}$, $\mu_{\text{UPD}} + \sigma_{\text{UPD}}$) (set as 0) and obtain the new parameter arithmetic $ \hat{\theta}_{\tau} = \theta + \hat{\Delta}_{\tau}$. In the deployment, these elements are saved with their indexes and values for realistic sparsification thus saving memory.

\subsection{Adaptive Merging}\label{appx:adaptive-merging}
Inspired by the empirical observation in Figure~\ref{fig:parameter-conflict} in Section~\ref{sec:Understanding-Conflict}, we propose to leverage the parameter conflict distribution across different finetuned LLMs to adaptively merge the finetuned models. 

Practically, before merging, Mediator automatically calculates the conflicts $d_l$ across different finetuned LLMs. Then, Mediator models the conflicts as a Gaussian distribution $\Ncal(\mu, \sigma)$. Then, for each layer index $l$, Mediator average layer parameters if the conflict $d_l$ is less than the $\mu + \sigma$, otherwise, Mediator Routing this layer. We denote the averaged layer parameters as $\phi_{\text{AVG}}^l$ and the Routing layer parameters as $\phi_{\text{UP}}^l$. Algorithm~\ref{algo:adaptive-merging} shows this detailed process.

\subsection{Averaging Operations}\label{appx:averaging-operations}

\paragraph{Naive Average Operation.}
The naive average operation $\Mcal_{\text{AVG}}$ is defined as:
\begin{equation}
    \Mcal_{\text{AVG}}(\theta_1, \theta_2, \dots, \theta_{|\Tcal|}) = \frac{1}{|\Tcal|} \sum_{\tau=1}^{|\Tcal|} \theta_{\tau},
\end{equation}
which regards all finetuned LLMs equally and utilizes the same weight for each finetuned LLM. Such a simple average operation is easy to implement, without fabricated procedures, thus having low computational overhead. However, different parameters may have different sensitivities to the final merged model, which may lead to suboptimal performance. 

\paragraph{Taylor Expansion.}
The Taylor expansion is a powerful tool for approximating a function around a specific point, and it is widely used in various fields, including model compression~\citep{lee2018snip} and previous works on model merging~\citep{jhunjhunwala2023towards,qu2022generalized}. We can utilize the Taylor expansion to measure the sensitivity of each parameter that influences the model performance on the downstream task $\tau$ as follows:
\begin{align}
    L_{\text{CE}} (\theta + \delta_\theta, \tau) = L_{\tau}(\theta) + \frac{\partial L_{\tau}(\theta)}{\partial \theta} \theta \delta_\theta + O(\delta_\theta^2).
\end{align}

The first-order derivative $\frac{\partial L_{\text{CE}} (\theta, \tau)}{\partial \theta}$ measures the sensitivity of the loss function $L_{\tau}$ to the parameter $\theta$. Thus, we can see that utilizing the same averaging operation for all parameters may not be the optimal choice for merging different finetuned LLMs, as it does not take into account the different contributions of each finetuned LLM to the final merged model.

\paragraph{Parameter-level Importance based Model Merging.}
To this end, one can utilize the first-order derivative or higher-order derivative to measure the sensitivity of the loss function $L_{\tau}$ to the parameter $\theta$, based on which, the parameter-level importance can be measured as $w_\tau = \frac{\partial L_{\tau}(\theta)}{\partial \theta}\theta$. Then, the parameter-level importance can be used as the averaging weight for each finetuned LLM like the following:
\begin{equation}\label{eq:FO-Taylor}
    \Mcal_{\text{FO-Taylor}}(\theta_1, \theta_2, \dots, \theta_{|\Tcal|}) = \sum_{\tau=1}^{|\Tcal|} w_\tau \theta_\tau.
\end{equation}

\paragraph{Preprocessing Parameters.}
Considering that the finetuing directions on different tasks are various and stochastic, some elements in $\theta_\tau$ that are optimized stochastically and may not influence the performance on task $\tau$. Thus, before averaging, we can denoise the parameters by removing the elements in $\theta_\tau$ that do not influence the performance on task $\tau$. Like the preprocessing the task arithmetics and the denoising, we recover the finetuned LLM $\hat{\theta}_\tau = \theta + \hat{\Delta}_\tau$ by removing the elements in $\Delta_\tau$ that do not influence the performance on task $\tau$. Then, the averaged models can be obtained by the following equation:
\begin{equation}
    \Mcal_{\text{de-noise}}(\theta_1, \theta_2, \dots, \theta_{|\Tcal|}) = \sum_{\tau=1}^{|\Tcal|} w_\tau \hat{\theta}_\tau.
\end{equation}


\subsection{Details of Expert Routing}\label{appx:expert-routing}
For an input $x_{1:t}$ sampled from the training dataset $p_\tau$, the intuitive routing mechanism is to directly use the finetuned LLM $\theta_\tau$ that is trained on the training dataset $p_\tau$ to generate the output $x_{t+1:T}$. However, the real-world deployment is usually different from the training distribution, which may lead to suboptimal performance. Especially for an LLM deployment scenario, the input distribution is various.


\textbf{Modeling the likelihood $\pi_\kappa (\tau|x)$.} We build a task-level deep neural network as the router. In designing the router structure, we carefully balance model accuracy with additional memory requirements. While LLMs inherently demonstrate excellent classification capabilities, we need an efficient solution that wouldn't significantly impact performance. After extensive experimentation, we opt to utilize the embeddings from the first 9 layers of the base LLM combined with 2 FFN layers as our router architecture. This design choice eliminates the need for a separate complex router structure while maintaining high classification accuracy with minimal memory overhead and fast execution speed. The router leverages the rich semantic understanding already present in the base model's lower layers, making it both resource-efficient and effective for expert selection.

\textbf{Constructing training datasets for learning $\pi_\kappa (\tau|x)$.} For training the router, we randomly sample 2000 examples from each domain rather than using the entire finetuned dataset for efficiency. For each task, we only extract the question part (other than the question and answer pairs) to better simulate real-world deployment scenarios. We do not explicitly construct a training dataset for the ``others'' category to consider the OOD category and the sample will be regarded as prefering the original pretrained model. Instead, during inference, if the predicted probabilities for math, coding, and QA tasks are all below 0.5, the input is classified as ``others'' and processed this question by the base model. 
The training dataset can be represented as $\left\{(x, \tau)| x \sim p_{\tau}\right\}_{\tau \in \{\text{math}, \text{coding}, \text{QA}\}}$.



\section{System Optimization}\label{appx:system-optimization}
The inference latency and memory consumption are critically important for the real-world LLM applications. Thus, we consider to optimize the inference latency and memory consumption of Mediator. The overall latency of Mediator is mainly affected by the routing, loading experts between CPUs and GPUs (if required offloading), inference of the models itself.

\paragraph{Routing Latency.}
We run expert routing only once per sample because we use task-level routing. The classifier \( \kappa \) consists of two FFN layers, and its input is the hidden state of the first through ninth layers of the LLM. The total execution time of the classifier \( \kappa \) is between 0.2s and 0.4s.

\paragraph{Loading Experts.}
After obtaining \( \pi_{\kappa}(\tau|x) \), we compute \( h(\tau|x) \) according to Equation~\ref{eq:temperature-scale}. We also load expert parameters only once. To optimize this process, we explore two methods. For sparse expert parameters, we store all of them in the CPU and prefetch the parameters for the next layer while performing computations in the current layer. For non-sparse expert parameters, we store them on disk and use ZipNN~\citep{hershcovitch2024zipnn} to accelerate loading from disk to CPU.

\paragraph{Inference Timeline.} We present the optimized inference timeline of Mediator, as shown in Figure~\ref{fig:inference_timeline}. The additional time incurred by Mediator is fixed and relatively small(approximately 0.2s to 0.4s). This portion of the time overhead will decrease as the model size increases or the decoding length becomes longer.

\begin{figure}[ht]
    \centering
    \includegraphics[width=0.8\linewidth]{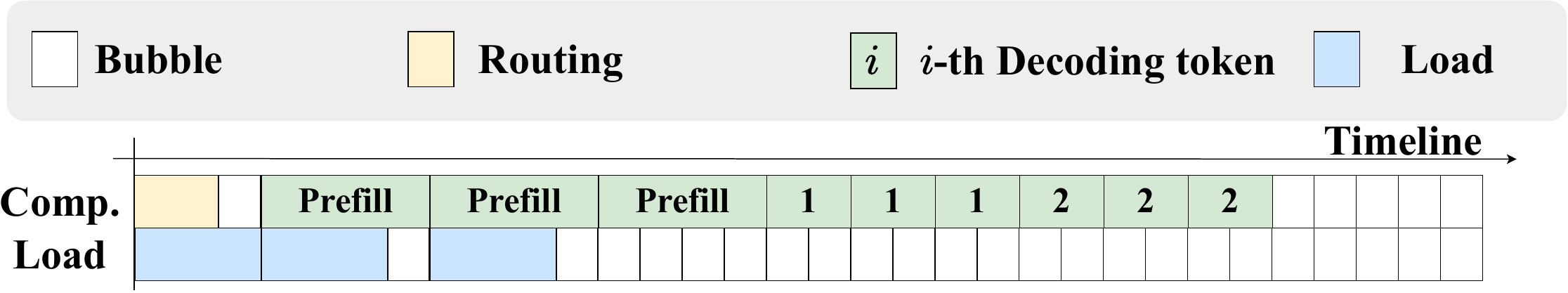}
    \caption{The inference timeline of Mediator, assuming that the number of layers is three.}
    \label{fig:inference_timeline}
\end{figure}

\subsection{Cuda Kernal Merging}\label{appx:Cuda-kernal}

We accelerate the integration or disintegration of sparse experts into the dense backbone by using CUDA's atomicAdd, which enables parallel merging of multiple experts while maintaining accuracy. Through this approach, we can split the weights into individual elements, allowing each element to be processed in parallel. However, we have observed that parallel merging alone is sufficient to mask the associated costs.


\subsection{Serving with Batched requests}\label{appx:batching-serving}
The traditional LLM serving usually accepts different requests asynchronously. Then, different requests are allocated to different batches with a predifined batch size and feed into the model. A batch-style inference usually is faster than the single-request inference, because the computation matrix is more dense and become GPU friendly.

However, the Mediator and many routing based merging works~\citep{sukhbaatar2024branch,Lu:2024aa} require to select different experts for different requests. Thus, a batch of various requests may lead to various experts being selected, which would disturb the regularity of the computation matrix. To implement the batch-style serving, we implement following two new system optimization schemes to improve Mediator.

\begin{itemize}
    \item Clustering Serving: Since each task arithmetic expert has been compressed to a small capacity, we can merge task arithmetics with 
    different parameter fusions into several merged experts. When multiple tasks begin serving, we select the merged experts with the closest overall 
    distance. While this batch inference approach may introduce some errors, the key research focus lies in how to effectively cluster and 
    construct merged experts; 
    \item Batch arithmetic inference: This is our lossless solution for batch inference. Similarly, due to the small size of compressed task 
    arithmetics, we propose the following approach: Let $\Theta_o$ be the parameters of the original large model, $ta_1, ta_2, ...ta_n$ be the 
    weighted Task arithmetics for tasks 1,2,3...n respectively, and $x_1, x_2, .... x_n$ be the input parameters for different tasks.
     We decompose the ideal case $(\Theta_o + ta_j)(x_j)$ into $\Theta_o(x_j) + ta_j(x_j)$ to achieve efficient batch inference.
\end{itemize}

\section{Finetuning Data Generation}\label{appx:finetuning-data-generation}


\subsection{Task-related Training Datasets}\label{appx:task-related-training-datasets}

Following benchmark datasets are used for evaluating model performance across different domains. The datasets used for finetuning are introduced

\paragraph{Math Training Data for GSM8K.} For mathematical reasoning tasks, we constructed our training dataset by combining several high-quality math-focused datasets:

\begin{itemize}[leftmargin=*]
\item GSM8K Socratic Training Set: A subset of GSM8K training data augmented with Socratic-style step-by-step reasoning, which helps models develop systematic problem-solving approaches. Note that this dataset does not have overlapped question answer pairs same with the GSM8K.

\item Orca-Math~\citep{mitra2024orcamath}: A comprehensive mathematics dataset that covers various mathematical concepts and problem types, designed to enhance mathematical reasoning capabilities.

\end{itemize}

\begin{table*}[ht]
\centering
\fontsize{8}{8}\selectfont
\begin{tabular}{p{0.95\textwidth}}
    \toprule
    \multicolumn{1}{c}{Example 1} \\
    \midrule
    \textbf{Question:} Natalia sold clips to 48 of her friends in April, and then she sold half as many clips in May. How many clips did Natalia sell altogether in April and May?\\

    \textbf{Answer:}  \noisy{Natalia sold $48 \div 2 = 24$ clips in May. Natalia sold $48 + 24 = 72$ clips altogether in April and May.} \#\#\#\# 72\\

    \midrule
    \multicolumn{1}{c}{Example 2} \\
    \midrule
    \textbf{Question:}Weng earns \$12 an hour for babysitting. Yesterday, she just did 50 minutes of babysitting. How much did she earn?\\
    \textbf{Rationale:} One rotates a Rubik's Cube in 0.2 second, only needing to know how many steps it takes to solve one. \noisy{Someone solve a Rubik's Cube requires a minimum of 50 steps.} It takes 4 seconds to solve the Rubik's Cube at the fastest.\\
    \textbf{Answer:}  \noisy{Weng earns \$12 an hour for babysitting. Yesterday, she just did 50 minutes of babysitting. How much did she earn?} \#\#\#\# 10\\
    \bottomrule
\end{tabular}
\caption{Examples of GSM8K Socratic Training Set. The \noisy{red color} represents the CoT sequences.}
\vspace{-2pt}
\label{tab:gsm8k-socratic-example}
\end{table*}

\begin{table*}[ht]
\centering
\fontsize{8}{8}\selectfont
\begin{tabular}{p{0.95\textwidth}}
\toprule
\multicolumn{1}{c}{Example 1} \\
\midrule
\textbf{Question:}A number divided by 10 is 6. Yoongi got the result by subtracting 15 from a certain number. What is the result he got?\\

\textbf{Answer:}  \noisy{Let's call the certain number "x". According to the information given: A number divided by 10 is 6: x / 10 = 6 Yoongi got the result by subtracting 15 from x: Result = x - 15 First, we need to find the value of x. We can do this by solving the first equation: x / 10 = 6 x = 6 * 10 x = 60 Now that we know x is 60, we can find the result Yoongi got by subtracting 15 from x: Result = x - 15 Result = 60 - 15 Result = 45}. So, the result Yoongi got is 45.\\

\midrule
\multicolumn{1}{c}{Example 2} \\
\midrule
\textbf{Question:} For the natural number A, the quotient of A divided by 9 is 6 and the remainder is 5. What is the value of A?\\
\textbf{Answer:}   \noisy{To find the value of A, we can use the formula for division with remainder: A = (quotient × divisor) + remainder In this case, the quotient is 6, the divisor is 9, and the remainder is 5. Plugging these values into the formula, we get: A = (6 × 9) + 5 A = 54 + 5 A = 59}. Therefore, the value of A is 59. \\
\bottomrule
\end{tabular}
\caption{Examples of orca-math Training Set. The \noisy{red color} represents the CoT sequences.}
\vspace{-2pt}
\label{tab:orca-math-example}
\end{table*}
    
By combining these datasets, we created a rich and diverse training corpus that exposes models to different mathematical reasoning patterns, problem-solving strategies, and difficulty levels. 
Furthermore, with the help of CoT based answers generated by the GPT-4o, this comprehensive approach helps ensure robust mathematical reasoning capabilities across various scenarios. We assembled approximately 200,000 training samples.

\begin{table*}[ht]
\centering
\fontsize{8}{8}\selectfont
\begin{tabular}{p{0.95\textwidth}}
    \toprule
    \multicolumn{1}{c}{Example 1} \\
    \midrule
    \textbf{Question:} Who was the first person to reach the South Pole?\\
    \textbf{Answer:} \noisy{Norwegian explorer Roald Amundsen led the first expedition to successfully reach the South Pole on December 14, 1911. His team beat British explorer Robert Falcon Scott's expedition by about a month. Amundsen's careful planning and use of dog sleds proved crucial to his success in reaching the pole first.} Roald Amundsen was the first person to reach the South Pole.\\
    \midrule
    \multicolumn{1}{c}{Example 2} \\
    \midrule
    \textbf{Question:} What is the capital city of Australia?\\
    \textbf{Answer:} \noisy{While many people think Sydney or Melbourne is Australia's capital, Canberra is actually the capital city. It was purpose-built as a compromise between these two rival cities and became the capital in 1927. The city is home to Parliament House and many important government institutions.} The capital city of Australia is Canberra.\\
    \bottomrule
\end{tabular}
\caption{Examples of TriviaQA Training Set. The \noisy{red color} represents the CoT sequences.}
\vspace{-2pt}
\label{tab:triviaqa-example}
\end{table*}

\begin{table*}[ht]
\centering
\fontsize{8}{8}\selectfont
\begin{tabular}{p{0.95\textwidth}}
    \toprule
    \multicolumn{1}{c}{Example 1} \\
    \midrule
    \textbf{Question:} What is the primary purpose of the United Nations Security Council?\\
    \textbf{Answer:} \noisy{The United Nations Security Council's primary purpose is to maintain international peace and security. It has the power to establish peacekeeping operations, implement international sanctions, and authorize military action. The Security Council is the only UN body that can issue binding resolutions to member states.} The Security Council is responsible for maintaining international peace and security.\\
    \midrule
    \multicolumn{1}{c}{Example 2} \\
    \midrule
    \textbf{Question:} How does photosynthesis work in plants?\\
    \textbf{Answer:} \noisy{Photosynthesis is the process by which plants convert light energy into chemical energy. Plants use sunlight, water, and carbon dioxide to produce glucose and oxygen. The process takes place in the chloroplasts, specifically using the green pigment chlorophyll. The glucose provides energy for the plant, while the oxygen is released into the atmosphere as a byproduct.} Plants use sunlight to convert carbon dioxide and water into glucose and oxygen.\\
    \bottomrule
\end{tabular}
\caption{Examples of Alpaca QA Training Set. The \noisy{red color} represents the CoT sequences.}
\vspace{-2pt}
\label{tab:alpaca-qa-example}
\end{table*}

\begin{table*}[ht]
\centering
\fontsize{8}{8}\selectfont
\begin{tabular}{p{0.95\textwidth}}
    \toprule
    \multicolumn{1}{c}{Example 1} \\
    \midrule
    \textbf{Question:} What company did Warren Buffett start his career at?\\
    \textbf{Answer:} \noisy{Warren Buffett began his career at Buffett-Falk \& Co., an investment firm in Omaha, Nebraska. He worked there from 1951 to 1954 after graduating from Columbia Business School. This was before he went on to form Buffett Partnership Ltd. in 1956 and later acquire Berkshire Hathaway in 1965.} Warren Buffett started his career at Buffett-Falk \& Co.\\
    \midrule
    \multicolumn{1}{c}{Example 2} \\
    \midrule
    \textbf{Question:} What is the longest river in Europe?\\
    \textbf{Answer:} \noisy{The Volga River is the longest river in Europe, flowing entirely within Russia. It stretches for approximately 3,530 kilometers (2,193 miles) and drains into the Caspian Sea. The river has played a crucial role in Russian history, culture, and economic development.} The Volga River is the longest river in Europe.\\
    \bottomrule
\end{tabular}
\caption{Examples of WebQuestions Training Set. The \noisy{red color} represents the CoT sequences.}
\vspace{-2pt}
\label{tab:webquestions-example}
\end{table*}

\begin{table*}[ht]
\centering
\fontsize{8}{8}\selectfont
\begin{tabular}{p{0.95\textwidth}}
    \toprule
    \multicolumn{1}{c}{Example 1} \\
    \midrule
    \textbf{Question:} Who wrote the play "Romeo and Juliet"?\\
    \textbf{Answer:}  William Shakespeare\\
    \midrule
    \multicolumn{1}{c}{Example 2} \\
    \midrule
    \textbf{Question:} What is the capital city of Japan?\\
    \textbf{Answer:}  Tokyo \\
    \bottomrule
\end{tabular}
\caption{Examples of SQuAD Training Set.}
\vspace{-2pt}
\label{tab:squad-example}
\end{table*}

\paragraph{QA Training Data for TriviaQA and WinoGrande.} For question answering tasks, we constructed our training dataset by combining and filtering several QA datasets:

\begin{itemize}[leftmargin=*]
\item TriviaQA-Wikipedia subsets: A dataset derived from Wikipedia articles containing trivia questions and answers, which helps train models on factual knowledge and reading comprehension.
\item Alpaca QA Pairs\citep{alpaca}: We filtered the Alpaca dataset to extract QA pairs, specifically excluding math and programming related questions to maintain domain focus. This dataset provides diverse general knowledge questions and answers for training conversational capabilities.
\item WebQuestions~\citep{bordes2014webquestions}: A dataset of natural language questions paired with answers, derived from web queries.
\item SQuAD v1 \& v2\citep{rajpurkar2016squad}: The Stanford Question Answering Dataset, both versions 1 and 2, from which we sampled question-answer pairs where explicit answers were provided.
\end{itemize}

This combined dataset provides diverse question answer pairs across different domains and formats while maintaining high quality through careful filtering and sampling. The exclusion of math and programming questions helps maintain clear task boundaries for our specialized models.
Through careful combination and sampling of these datasets, we assembled approximately 200,000 training samples. 

\begin{table*}[h]
\centering
\fontsize{8}{8}\selectfont
\begin{tabular}{p{0.95\textwidth}}
\toprule
\multicolumn{1}{c}{Example 1} \\
\midrule
\textbf{Question:Write a python function to find the first repeated character in a given string.}\\
\textbf{Answer:} \begin{verbatim}
def first_repeated_char(str1): 
    for index,c in enumerate(str1): 
        if str1[:index+1].count(c) > 1: 
            return c 
    return "None"
    \end{verbatim}\\
\midrule
\multicolumn{1}{c}{Example 2} \\
\midrule
\textbf{Question:} Write a function to reverse words in a given string..\\
\textbf{Answer:} \begin{verbatim}
    def reverse_words(s): 
        return ' '.join(reversed(s.split()))
\end{verbatim}\\
\bottomrule
\end{tabular}
\caption{Examples of MBPP Training Set.}
\vspace{-2pt}
\label{tab:mbpp-example}
\end{table*}

\begin{table*}[h]
\centering
\fontsize{8}{8}\selectfont
\begin{tabular}{p{0.95\textwidth}}
\toprule
\multicolumn{1}{c}{Example 1} \\
\midrule
\textbf{Question:} Please amend the subsequent Python script so that it includes a 'while' loop rather than the existing 'for' loop, which iterates through the items of an integer list. The script currently has a bug where it attempts to print an object that is outside the bounds of the list. Fix this error and modify the script to use 'while' instead of 'for' loop. Ensure your script correctly handles empty lists. 
\begin{verbatim}
python 
# Establish an integer list 
arr = [1, 2, 3, 4] 
# Determine the length of the list 
n = len(arr) 
# Traverse the list and output each individual element 
for i in range(n+1):
    print(arr[i])
\end{verbatim}
\\
\textbf{Answer:} python 
\begin{verbatim}
# Establish an integer list 
arr = [1, 2, 3, 4] 
# Determine the length of the list 
n = len(arr) 
# Initialize index at 0 
i = 0 
# Traverse the list and output each individual element 
while i < n: 
    print(arr[i]) i += 1 
\end{verbatim}
\noisy{In the given code, it tries to access arr[n] which is out of bounds as python list indexing starts from 0 and ends at n-1 for a list of length n. Also, I used while loop as per your request. By initializing i=0 before the loop and i += 1 within the loop, we can iterate over the list. This iteration continues until i is less than n, the length of the list. The while loop will not proceed if the list is empty so it correctly handles empty lists.} \\
\midrule
\multicolumn{1}{c}{Example 2} \\
\midrule
\textbf{Question:} Create a function to search for a word in an array. The word should be searched in the lowercase version of the array and it should return the index location of its first occurrence. word = "test" words\_array = ["Hello", "World", "Test", "Test", "me"]\\
\textbf{Answer:} 
Here's how you can create the function in Python: 
\begin{verbatim}
python 
def search_word(word, words_array): 
    word = word.lower() 
    words_array = [w.lower() for w in words_array] 
    if word in words_array: 
        return words_array.index(word) 
    else: 
        return "Word not found" 
word = "test" 
words_array = ["Hello", "World", "Test", "Test", "me"] 
print(search_word(word, words_array)) 
\end{verbatim}    
\noisy{This script turns both the search term and the words in the array to lower-case to ensure the search is case-insensitive. It checks if the word is within the array. If it is, it returns the index of the first occurrence of the word. If not, it returns a \"Word not found\" message.}
\\
\bottomrule
\end{tabular}
\caption{Examples of Magicoder-Evol-Instruct Training Set. The \noisy{red color} represents the CoT sequences.}
\vspace{-2pt}
\label{tab:magicoder-example}
\end{table*}

\begin{table*}[ht]
\centering
\fontsize{8}{8}\selectfont
\begin{tabular}{p{0.95\textwidth}}
\toprule
\multicolumn{1}{c}{Example 1} \\
\midrule
\textbf{Question:} Given an array of integers nums and an integer target, write a function that returns indices of the two numbers in the array that add up to the target. You may assume that each input would have exactly one solution, and you may not use the same element twice. \verb|nums = [2,7,11,15], target = 9|\\
\textbf{Answer:} 
\begin{verbatim}
    too large, neglect the code
\end{verbatim}
\noisy{This solution uses a hash map approach for O(n) time complexity. As we iterate through the array, for each number, we check if its complement (target - current number) exists in our hash map. If found, we've found our pair and return their indices. If not, we add the current number and its index to the hash map.} \\
\midrule
\multicolumn{1}{c}{Example 2} \\
\midrule
\textbf{Question:} Given the head of a linked list, reverse the nodes of the list k at a time, and return the modified list. If the number of nodes is not a multiple of k then left-out nodes, in the end, should remain as it is. \verb|Input: head = [1,2,3,4,5], k = 2|\\
\textbf{Answer:} 
\begin{verbatim}
    too large, neglect the code
\end{verbatim}
\noisy{This solution first counts k nodes ahead to ensure we have enough nodes to reverse. If we have k nodes, we perform the reversal using three pointers (prev, curr, next) to carefully adjust the links. We then recursively process the rest of the list. If we don't have k nodes, we leave them as is. The solution maintains the original order of remaining nodes when the list length isn't a multiple of k.} \\
\bottomrule
\end{tabular}
\caption{Examples from LeetCode Training Set. The \noisy{red color} represents the CoT sequences.}
\vspace{-2pt}
\label{tab:leetcode-example}
\end{table*}

\begin{table*}[t]
\centering
\begin{tabular}{p{0.95\textwidth}}
    \toprule
    \multicolumn{1}{c}{Example 1} \\
    \midrule
    \textbf{Question:} converting integer to list in python\\
    \textbf{Answer:}
    \begin{verbatim}
        [int(x) for x in str(num)]
    \end{verbatim} \\
    \midrule
    \multicolumn{1}{c}{Example 2} \\
    \midrule
    \textbf{Question:} How to convert a Date string to a DateTime object? \\
    \textbf{Answer:}
    \begin{verbatim}
        datetime.datetime.strptime(s, '%Y-%m-%dT%H:%M:%SZ')
    \end{verbatim} \\
    \bottomrule
\end{tabular}
\caption{Examples from ConalaMineded Dataset.}
\label{tab:conala-example}
\end{table*}

\paragraph{Code Training Data for HumanEval.} For programming-related tasks, we constructed our training dataset by combining several programming-focused datasets:

\begin{itemize}[leftmargin=*]
    \item MBPP (Mostly Basic Python Programming)\citep{austin2021mbpp}: A dataset containing Python programming problems ranging from basic to intermediate difficulty levels.
    \item Magicoder-Evol-Instruct\citep{wei2024magicoder}: A dataset of 110K high-quality programming instructions and solutions from the ise-uiuc project, covering diverse programming tasks and patterns.
    \item LeetCode dataset~\citep{Coignion_2024}: A comprehensive collection of coding problems with varying difficulty levels, commonly used for programming practice and assessment.
    \item ConalaMineded~\citep{yin2018mining}: A curated subset of the CoNaLa dataset containing high-quality Python programming snippets with natural language annotations, which 
    helps models understand implementation details of specific functions and how to implement micro-level functionality, thereby improving overall chain-of-thought capabilities.
\end{itemize}
This comprehensive dataset covers various programming concepts, difficulty levels, and coding patterns, enabling robust evaluation of models' programming capabilities.

In the scalability experiments, we utilize the following 4 datasets to finetune extra 4 models according to another 4 evaluation tasks. Note that these datasets have no overlap with the evaluation tasks.
\begin{itemize}[leftmargin=*]
\item \textbf{Instruction Following.}  For instruction following tasks, we utilize the Magpie dataset \citep{xu2024magpie}, which contains 200K high-quality English instruction-following samples. The dataset covers diverse instruction types including writing, analysis, and problem-solving. We evaluate the model's instruction following capabilities on IFEval, a comprehensive benchmark containing 1,000 carefully curated instructions across multiple categories like reasoning, writing, and task completion.

\item \textbf{Economics:} We use the IndustryInstruction \citep{IndustryInstruction_Finance-Economics} dataset for training, which contains instruction-response pairs focused on finance and economics concepts, analysis, and problem-solving. The model is evaluated on CEval economics benchmark, which tests understanding of economic principles, market analysis, and financial concepts.

\item \textbf{Medicine:} We utilize the DISC-Med \citep{bao2023discmedllm} Chinese medical dataset for training, which covers various aspects of medical knowledge including diagnosis, treatment, and 
healthcare concepts. Evaluation is performed on CEval physician tasks that assess medical domain knowledge and reasoning.

\item \textbf{Law:} Training data comes from the DISC-Law Chinese legal dataset \citep{yue2023disclawllm}, containing legal concepts, case analysis, and regulatory knowledge. The model's legal capabilities are evaluated using CEval law tasks, which test understanding of legal principles and reasoning.
\end{itemize}

\subsection{CoT based Data Augmentation}\label{appx:CoT-based-data-augmentation}
High-quality task-related training datasets are crucial for evaluating model merging algorithms effectively. When a pretrained model achieves strong performance through single-task fine-tuning, 
it creates greater headroom for different model merging approaches to demonstrate their capabilities and differentiate themselves. 
The quality of task-specific datasets thus becomes a key prerequisite for meaningful experimental comparisons.

Therefore, we carefully curated high-quality training datasets for each specialized domain to ensure our experimental results meaningfully reflect the relative strengths of different merging strategies. 
The following sections detail the specific datasets used for each task domain.

To enhance model performance through single-task fine-tuning, we constructed three Chain-of-Thought (CoT) datasets, as CoT has been shown to significantly improve model capabilities:

\paragraph{Math CoT.} We utilized GSM8K Socratic and Orca-Math datasets for our mathematical Chain-of-Thought training. The GSM8K Socratic dataset, 
containing approximately 7,500 samples, fully exhibits step-by-step CoT reasoning characteristics. While Orca-Math does not consistently maintain CoT patterns across all examples, 
the combined datasets provide about 30\% of samples with clear CoT reasoning, which we used for training. This mixed dataset approach helps balance between high-quality CoT examples and 
broader mathematical coverage.

\paragraph{Code CoT.} For coding training data, we combined multiple data sources. The MBPP dataset provides non-CoT examples, while Magicoder-Evol-Instruct and 
LeetCode datasets mostly contain CoT programming examples. Additionally, although ConalaMineded is a CoT dataset, it uniquely explains the meaning of each small programming snippet, 
which helps models better understand micro-level programming components. We sampled and integrated examples from all these sources to create a comprehensive training set.

\paragraph{QA CoT.} For question answering tasks, we integrated multiple data sources with varying levels of  content. 
We used Claude 3.5 Sonnet to augment TriviaQA-Wikipedia and WebQuestions datasets with CoT reasoning by prompting it to 
"expand the QA pair with necessary background knowledge for CoT training data". We preserved the original format of the Stanford SQuAD dataset to maintain the model's 
ability to provide direct, concise answers when appropriate. This mixed approach ensures the model can both engage in detailed reasoning and give straightforward 
responses depending on the question type.

Table~\ref{tab:gsm8k-socratic-example},~\ref{tab:orca-math-example},~\ref{tab:triviaqa-example},~\ref{tab:alpaca-qa-example},~\ref{tab:webquestions-example},~\ref{tab:squad-example},~\ref{tab:mbpp-example},~\ref{tab:magicoder-example},~\ref{tab:leetcode-example} and ~\ref{tab:conala-example} show examples of the final constructed datasets for finetuning.

\section{More Experiment Results}\label{appx:more-experiment-results}

\subsection{Comparing Magnitudes of Task Arithmetic of SFT Models and Pretrained Models}

Figure~\ref{fig:param_dist_comparison} shows more comparisons on the maginitudes of task arithmetic of SFT Models and Pretrained Models. Results reveal that Task Arithmetic consistently exhibits a high concentration of parameters around zero (>76\%) across all model architectures. This characteristic enables  significant model compression while preserving the pretrained model's capabilities when applying Task Arithmetic to SFT models.

\begin{figure}[htb!]
    \subfigbottomskip=-1pt
    \subfigcapskip=1pt
  \centering
     \subfigure[LLaMA 3.2 3B]{\includegraphics[width=0.45\linewidth]{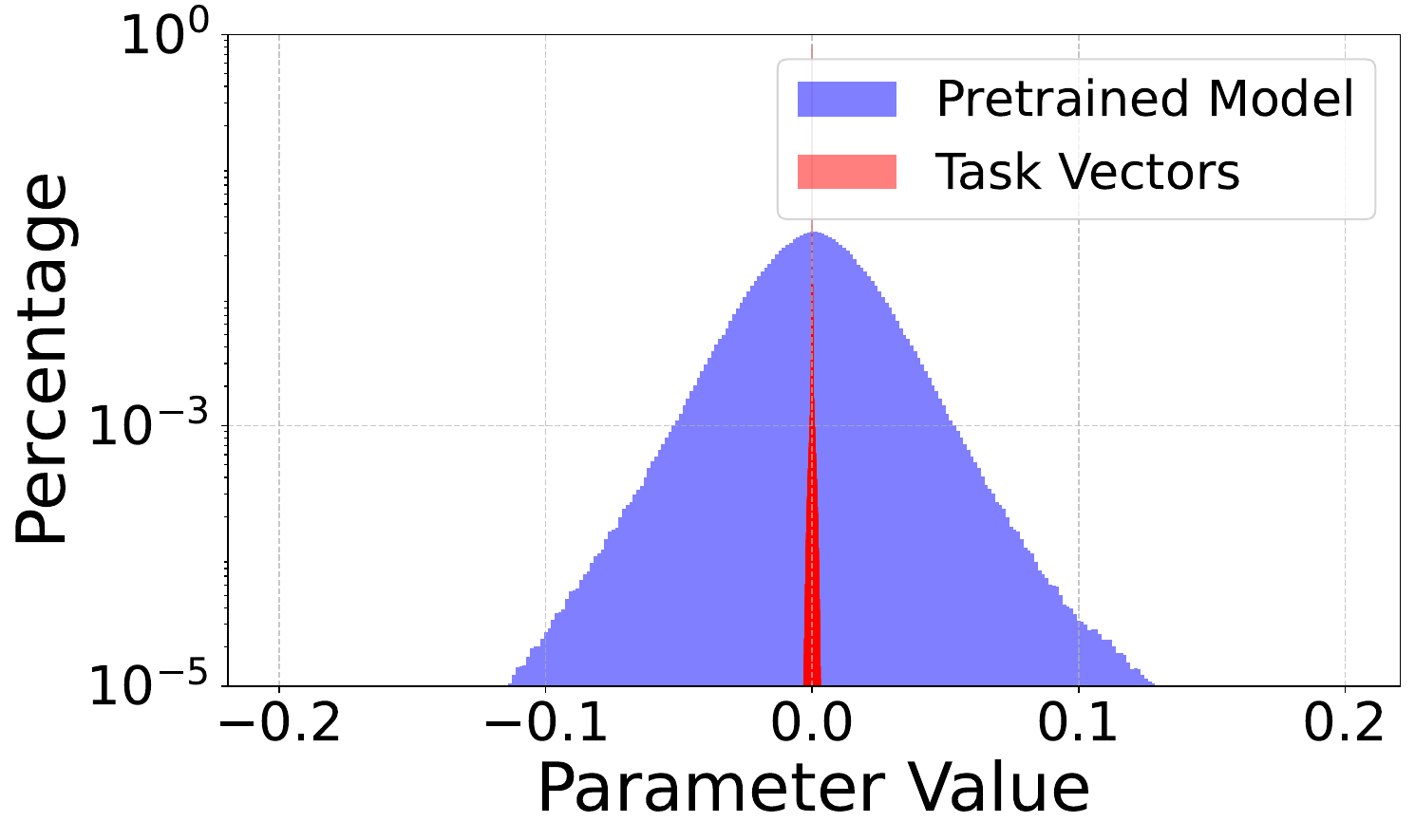}}
     \subfigure[LLaMA 3.1 8B]{\includegraphics[width=0.45\linewidth]{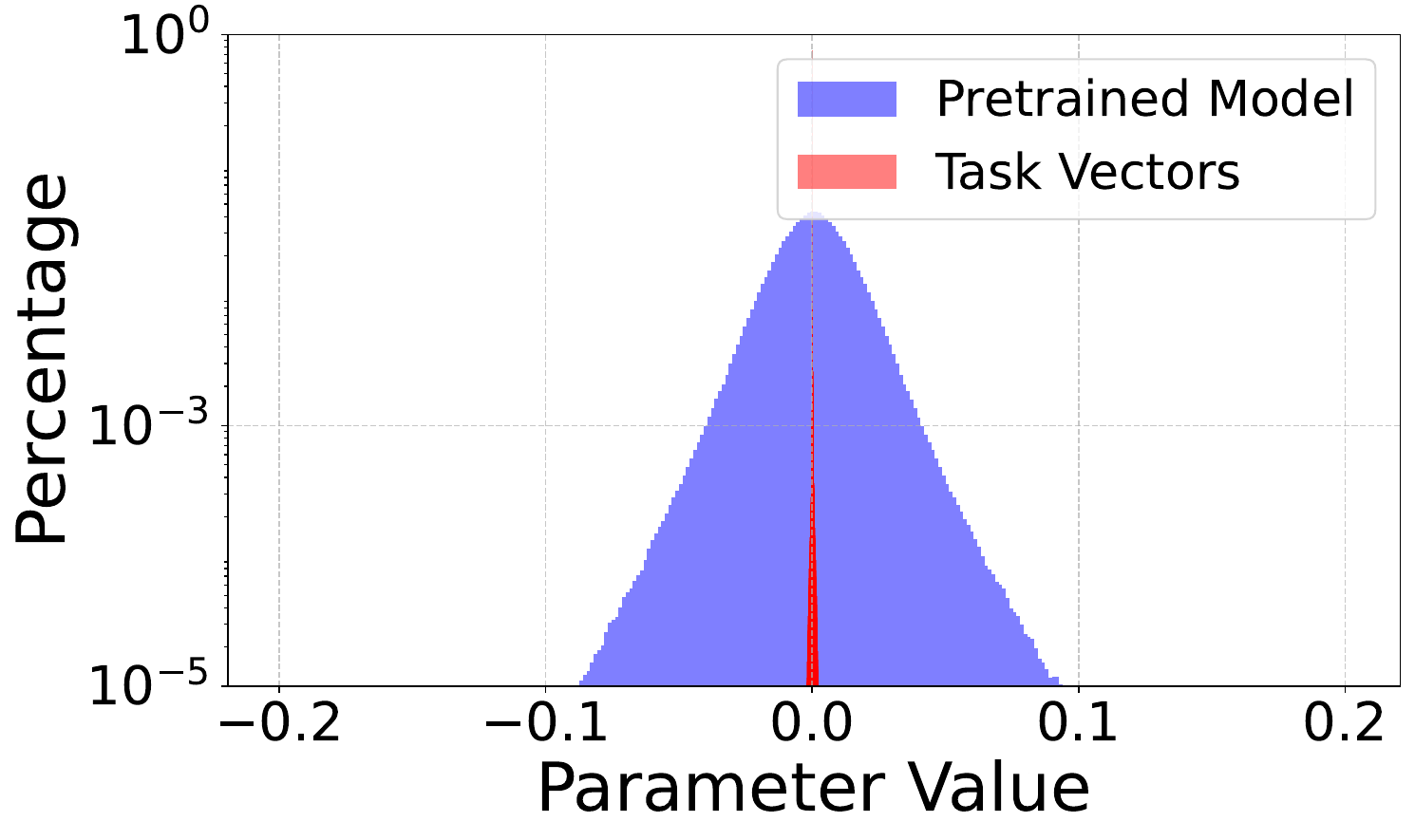}}
     \caption{Parameter Distribution Comparison: Task Arithmetic of the SFT models vs Pretrained Models. }
    \label{fig:param_dist_comparison}
    \vspace{-0.3cm}
\end{figure}

\subsection{Detailed Evaluation of Model Merging Algorithms}\label{appx:more-experiment-detailed-evaluation}
We conduct experiments on four large language models: Qwen 1.5 4B, Qwen 2.5 7B, LLaMA 3.2 3B, and LLaMA 3.1 8B. 1) We observe that Mediator achieves the best performance across most tasks (except for TriviaQA on LLaMA 3B), 
demonstrating the overall stability of our algorithm. 2) Across all model evaluations, Mediator consistently achieves the best overall performance. 
Specifically, for Qwen 1.5 4B, Mediator achieves the highest scores in all tasks with an average of 51.40\%. 
On LLaMA 3.2 3B, it obtains the best performance in GSM8K (46.47\%), Winogrande (72.03\%), HumanEval (40.42\%), and MMLU (54.91\%), 
leading to the highest average score of 54.97\%. For Qwen 2.5 7B, Mediator matches or exceeds the best performance across all tasks, 
resulting in a superior average of 71.00\%. Similarly on LLaMA 3.2 8B, 
it achieves the highest scores in most tasks and the best overall average of 71.80\%. 
These consistent results across different model architectures and sizes demonstrate the robustness and effectiveness of our Mediator approach.

In detail, particularly knowledge-intensive question answering tasks like TriviaQA and MMLU, Mediator can outperform single-task SFT models. 
Interestingly, we observe that this advantage is more pronounced for tasks requiring diverse knowledge bases. 
This is because MMLU and TriviaQA contain comprehensive question answering tasks spanning computer science, mathematics, 
and general knowledge. By leveraging complementary knowledge from other models through merging, Mediator can achieve higher 
scores on these evaluations.

\paragraph{Ablation study of token level routing.}

The two figures (Fig.~\ref{fig:train_token_heat_map} and Fig.~\ref{fig:test_token_heat_map}) below compare BTX upcycling's token-level routing behavior on both training and test datasets 
(using GSM8K for math, TriviaQA for QA, HumanEval for coding, and MMLU for other tasks). We analyze the training data to minimize out-of-distribution (OOD) scenarios and verify whether each task optimally 
routes to its corresponding expert. Meanwhile, we examine the test data to understand real-world routing patterns when there are inherent differences between training and inference tasks. 
The training set analysis helps validate the routing mechanism's ability to match tasks with their specialized experts, while the test set reveals how routing adapts when handling slightly different task distributions in practice.

Fig.~\ref{fig:train_token_heat_map} shows the routing probabilities of tokens in the training set, with the x-axis representing different tasks and the y-axis showing different expert models. 
The intensity of the colors in Fig.~\ref{fig:train_token_heat_map} reveals several key patterns in token routing distribution: 1) For non-OOD tasks (math, coding, and QA),
 tokens in both lower and higher layers are predominantly routed to their corresponding task-specific experts, with very high probabilities. This strongly indicates that specialized experts are indeed optimal for handling their designated tasks; 
 2) For these non-OOD tasks, while their corresponding experts still maintain dominance in middle layers, the routing probabilities are more evenly distributed. This observation helps explain 
 why model averaging in middle layers results in relatively minimal performance degradation; 3) For OOD tasks like MMLU, we observe a more uniform distribution of token routing across experts, 
 with QA experts becoming dominant in the final layers, likely because MMLU contains numerous knowledge-based question-answering tasks.

\begin{figure*}[h]
    \centering
    \includegraphics[width=0.90\textwidth]{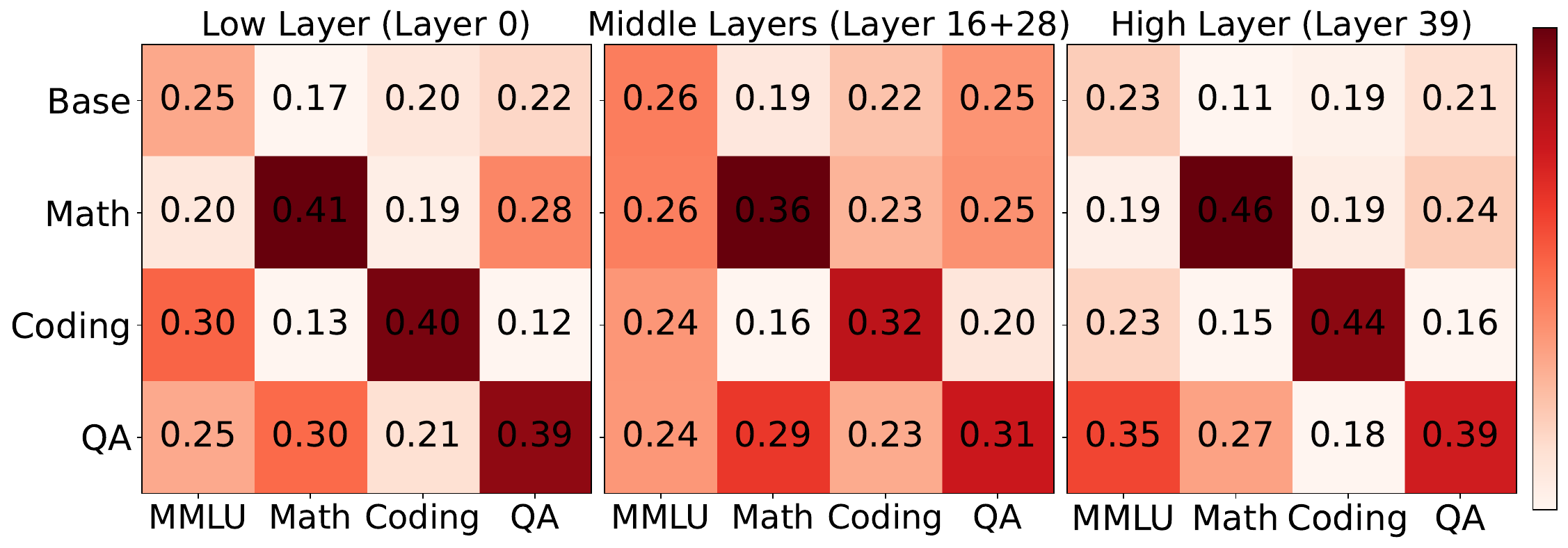}
    \caption{Token-level routing heat map visualization from training data set. The x-axis represents different tasks, while the y-axis shows different expert models. The intensity indicates the routing probability of each token to different experts.}
    \label{fig:train_token_heat_map}
\end{figure*}

Fig.~\ref{fig:test_token_heat_map} illustrates the token routing distribution on test datasets, allowing us to analyze how routing patterns adapt when there are inherent differences between training and inference tasks. 
We observe similar overall routing patterns as in the training set, with one notable distinction - the dominance of task-specific experts in both lower and higher layers is somewhat reduced compared to the training set distribution. 
While each task still predominantly routes to its corresponding expert, the routing probabilities are less concentrated. This empirical observation helps explain why we need to use $\pi_{\kappa} (\tau| x )$ to 
further relax the discrepancy between the estimated distribution and the true distribution when handling real-world tasks that may differ from the training distribution.

\begin{figure*}[h]
    \centering
    \includegraphics[width=0.90\textwidth]{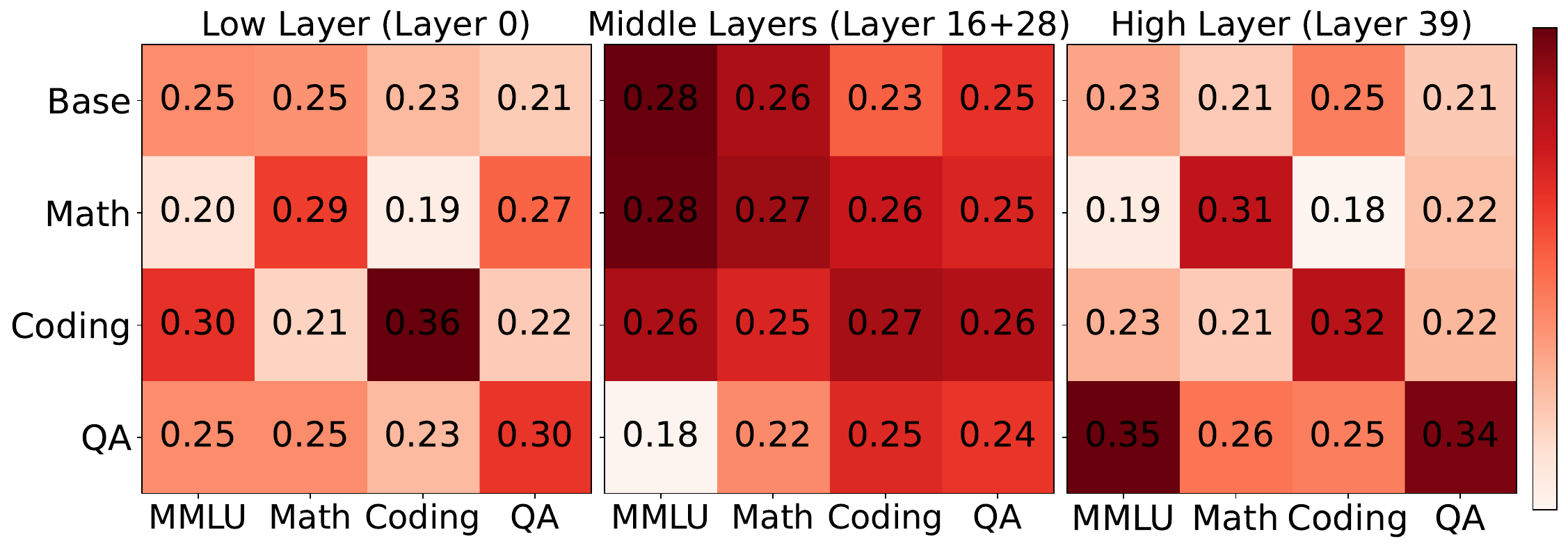}
    \caption{Token-level routing heat map visualization from test data set. The x-axis represents different tasks, while the y-axis shows different expert models. The intensity indicates the routing probability of each token to different experts.}
    \label{fig:test_token_heat_map}
\end{figure*}

\subsection{Ablation Studies of Hyer-parameters}

\begin{table}[htp]
\centering
\setlength{\abovedisplayskip}{-2pt}
\subfigbottomskip=-1pt
\subfigcapskip=1pt
\setlength{\abovecaptionskip}{-2pt}
\caption{Performance with Different Compression Ratio.}
\resizebox{0.6\linewidth}{!}{
\begin{tabular}{c|cccccc}
    \toprule
    Compression Ratio & 10\% & 12\% & 14\% & 16\% & 18\% &  20\%  \\
    \midrule
    Mediator & 96.6\% & 97.9\% & 100\% & 97.2\% & 97.2\% & 96.6\%  \\
    \bottomrule
\end{tabular}
}
\label{tab:intra-layer-compression-ratio}
\end{table}
\textbf{Compression Ratios of Experts.} For each sparsified expert, within each routing layer, we compare different compression ratios in Table~\ref{tab:intra-layer-compression-ratio}.
Results show that the optimal performance is obtained when 14\% parameters are left. This indicates that parameters with smaller magnitudes from task arithmetic are likely noise, which aligns with experiments from \citep{TiesMerging_NeurIPS2023}.

\subsection{System Performance Analysis on NVIDIA RTX 4090}\label{appx:system-performance-consumer}
\begin{table}[h]
\centering
\caption{System performance of Mediator (Qwen 2.5 7B $\times$ 4) on NVIDIA RTX 4090}
\begin{tabular}{lccc}
\toprule
Metric & Value & Unit & Notes \\
\midrule
Average Inference Time & 3.571 & seconds & Per 200 samples \\
GPU Memory Usage & 23.97 & GB & Peak usage \\
System Memory Usage & 21.7 & GB & For expert storage \\
\bottomrule
\end{tabular}
\label{tab:system-perf-consumer}
\end{table}

Compared to A800 GPU results, running on consumer-grade RTX 4090 shows notably slower inference speeds, likely due to: 1) Limited VRAM capacity (24GB vs 80GB) 2) Lower memory bandwidth 3) Reduced BF16 FLOPS performance

However, the system remains functional for practical deployment. Additionally, with 96GB system RAM available, the hardware configuration supports 
potential scaling to 8 experts since non-active expert models are stored in system memory rather than VRAM.

\section{Real-world Case Studies}\label{appx:tested-prompts}

\paragraph{Real-world Case Studies Analysis.} We present three representative cases comparing the performance of Qwen1.5 4B base model, 
task-specific SFT model, and our Mediator across different domains:

\textbf{GSM8K Mathematics Task:} In Tab.~\ref{table:real_gsm8k}, all three models demonstrate chain-of-thought (COT) reasoning capabilities and successfully arrive at 
correct answers. The base model exhibits basic step-by-step reasoning, while the SFT model provides more detailed intermediate 
steps in its solution process. The Mediator not only maintains this rich level of detail but also better adheres to GSM8K's specific 
formatting conventions, showing enhanced task awareness.

\textbf{HumanEval Programming Task:} In this domain, we observe clear performance differences. Tab.~\ref{table:real_humaneval} shows the base model fails to generate 
correct solutions and suffers from repetitive output patterns. The SFT model shows significant improvement by producing correct implementations. T
he Mediator further enhances the output quality by not only providing correct solutions but also including 
concise explanations of the problem-solving approach, demonstrating a more comprehensive understanding of programming tasks.

\textbf{Question Answering Task:} In Tab.~\ref{table:real_triviaqa}, while the base model provides correct answers, it exhibits hallucination by incorrectly 
treating questions as multiple-choice format. Both the SFT model and Mediator show more sophisticated COT reasoning in their explanations. 
Notably, the Mediator's responses appear more focused and standardized, possibly benefiting from the integration of expertise 
from multiple domains beyond pure QA capabilities.

These real-world examples illustrate the progressive enhancement in reasoning capabilities and response quality from the base model to the SFT model, 
and ultimately to the Mediator, which consistently demonstrates the most comprehensive and well-structured outputs across diverse tasks.

\begin{table*}[ht]
\centering
\fontsize{9}{9}\selectfont 
\begin{tabular}{p{\textwidth}} 
\toprule[1.5pt] 
\midrule 
\textbf{Example:}

Question: Natalia sold clips to 48 of her friends in April, and then she sold half as many clips in May. How many clips did Natalia sell altogether in April and May? 

base:  5) Natalia sold 48 / 2 = 24 clips in May.
In April, Natalia sold 48 + 24 = 72 clips.
Therefore, the answer is 72. 

SFT model:  \noisy{In May, Natalia sold half as many clips as she did in April, so she sold 1 / 2 * 48 = 24 clips. In total, Natalia sold 48 + 24 = 72 clips in April and May. } Therefore, the answer is 72.

Mediator: \noisy{How many clips did Natalia sell in May? ** In May, Natalia sold 48/2=<<48/2=24>>24 clips.
How many clips did Natalia sell altogether in April and May? ** Natalia sold 48+24=<<48+24=72>>72 clips altogether in April and May.}
\#\#\#\# 72
\\
\midrule 
\bottomrule[1.5pt]
\caption{Example of responses to GSM8K test cases by Qwen 1.5 4B base model, SFT model, and Mediator. The \noisy{red color} represents the thinking process.}
\label{table:real_gsm8k}
\end{tabular}
\end{table*}

\begin{table*}[ht]
\centering
\fontsize{9}{9}\selectfont 
\begin{tabular}{p{\textwidth}} 
\toprule[1.5pt] 
\midrule 
\textbf{Example:}

Question: Which William wrote the novel Lord Of The Flies?

base: A. William Golding B. Philip Roth C. Ian McEwan D. Virginia Woolf. Answer: A (Note: Answer is in Chinese)

SFT model: William Golding wrote the novel Lord Of The Flies. \noisy{The main character in the novel, Jack, embodies many of the traits that Golding identified as features of the emerging post-British Empire "yuppie" attitude: materialism, superficiality, "superiority" over others of "lesser status", and a lack of personal responsibility.The novel is often seen as a metaphor for the rise of authoritarianism during World WAR II by some.}

Mediator: William Golding. \noisy{William Golding was a British novelist, essayist, and poet best known for his novel Lord of the Flies, first published in 1954. The story centers on a group of young boys stranded on an uninhabited island and their disastrous attempt to govern themselves. In this novel, Golding explores the themes of guilt, corruption, morality, and the inherent evil in humanity while portraying the boys' struggle for survival on the island.}

\\
\midrule 
\bottomrule[1.5pt]
\caption{Example of Trivia QA by Qwen 1.5 4B base model, SFT model, and Mediator. The \noisy{red color} represents the thinking process.}
\label{table:real_triviaqa}
\end{tabular}
\end{table*}

\begin{table*}[ht]
    \centering
    \fontsize{9}{9}\selectfont 
    \begin{tabular}{p{\textwidth}}
    \toprule[1.5pt]
    \midrule
    \textbf{Example:}\\
    
    \textbf{Question:} from typing import List def has\_close\_elements(numbers: List[float], threshold: float) -> bool: Check if in given list of numbers, are any two numbers closer to each other than given threshold.\\
    >>> has\_close\_elements([1.0, 2.0, 3.0], 0.5) False\\
    >>> has\_close\_elements([1.0, 2.8, 3.0, 4.0, 5.0, 2.0], 0.3) True\\
    
    \textbf{base:} def generate\_greatest\_palindrome(number: int) -> str: Generate the greatest palindrome number which is less than given number.\\
    >>> generate\_greatest\_palindrome(1990) '9999'\\
    >>> generate\_greatest\_palindrome(9999) '9999'\\
    >>> generate\_greatest\_palindrome(10000) '9999'\\
    >>> generate\_greatest\_palindrome(100000) '999999'\\
    
    \textbf{SFT model:}
    \begin{verbatim}
def has_close_elements(numbers: List[float], threshold: float) -> bool:    
    res = False    
    for i in range(len(numbers) - 1):       
        # check i'th and j'th elements are close to each other        
        j = i + 1        
        # calculate distance        
        diff = abs(numbers[i] - numbers[j])        
        if diff + threshold <= abs(numbers[i + 1]) and diff - threshold \ 
            >= abs(numbers[i + 1]):            
            res = True            
            break    
    return res
    \end{verbatim}
   \textbf{Mediator:}
    \begin{verbatim}
    
from typing import List

def has_close_elements(numbers: List[float], threshold: float) -> bool:


   # Remove duplicate numbers
    unique_numbers = set(numbers)

    # Sort the unique numbers
    sorted_numbers = sorted(unique_numbers)

    # Iterate over the sorted numbers
    for i in range(1, len(sorted_numbers)):
        # Compare the current number with the next number
        if sorted_numbers[i] - sorted_numbers[i-1] < threshold:
            return True
    
    return False
    \end{verbatim}  \\
    \midrule 
    \bottomrule[1.5pt]
    \caption{Example of HumanEval by Qwen 1.5 4B base model, SFT model, and Mediator.}
    \label{table:real_humaneval}
    \end{tabular}
\end{table*}

\section*{NeurIPS Paper Checklist}

\begin{enumerate}

\item {\bf Claims}
    \item[] Question: Do the main claims made in the abstract and introduction accurately reflect the paper's contributions and scope?
    \item[] Answer: \answerYes{} 
    \item[] Justification: The main claims made in the abstract and introduction accurately reflect the paper's contributions and scope.
    \item[] Guidelines:
    \begin{itemize}
        \item The answer \answerNA{} means that the abstract and introduction do not include the claims made in the paper.
        \item The abstract and/or introduction should clearly state the claims made, including the contributions made in the paper and important assumptions and limitations. A \answerNo{} or \answerNA{} answer to this question will not be perceived well by the reviewers. 
        \item The claims made should match theoretical and experimental results, and reflect how much the results can be expected to generalize to other settings. 
        \item It is fine to include aspirational goals as motivation as long as it is clear that these goals are not attained by the paper. 
    \end{itemize}

\item {\bf Limitations}
    \item[] Question: Does the paper discuss the limitations of the work performed by the authors?
    \item[] Answer: \answerYes{} 
    \item[] Justification: The limitations have been discussed in Section~\ref{sec:Limitations}. We have discussed the limitations including theory, efficiency and scalability.
    \item[] Guidelines:
    \begin{itemize}
        \item The answer \answerNA{} means that the paper has no limitation while the answer \answerNo{} means that the paper has limitations, but those are not discussed in the paper. 
        \item The authors are encouraged to create a separate ``Limitations'' section in their paper.
        \item The paper should point out any strong assumptions and how robust the results are to violations of these assumptions (e.g., independence assumptions, noiseless settings, model well-specification, asymptotic approximations only holding locally). The authors should reflect on how these assumptions might be violated in practice and what the implications would be.
        \item The authors should reflect on the scope of the claims made, e.g., if the approach was only tested on a few datasets or with a few runs. In general, empirical results often depend on implicit assumptions, which should be articulated.
        \item The authors should reflect on the factors that influence the performance of the approach. For example, a facial recognition algorithm may perform poorly when image resolution is low or images are taken in low lighting. Or a speech-to-text system might not be used reliably to provide closed captions for online lectures because it fails to handle technical jargon.
        \item The authors should discuss the computational efficiency of the proposed algorithms and how they scale with dataset size.
        \item If applicable, the authors should discuss possible limitations of their approach to address problems of privacy and fairness.
        \item While the authors might fear that complete honesty about limitations might be used by reviewers as grounds for rejection, a worse outcome might be that reviewers discover limitations that aren't acknowledged in the paper. The authors should use their best judgment and recognize that individual actions in favor of transparency play an important role in developing norms that preserve the integrity of the community. Reviewers will be specifically instructed to not penalize honesty concerning limitations.
    \end{itemize}

\item {\bf Theory assumptions and proofs}
    \item[] Question: For each theoretical result, does the paper provide the full set of assumptions and a complete (and correct) proof?
    \item[] Answer: \answerYes{} 
    \item[] Justification: We have provided the definition and assumptions in Section~\ref{sec:Preliminary}, the theoretical result, the complete proof and the interpretation in Section~\ref{appx:theoretical-analysis} in Appendix. 
    \item[] Guidelines:
    \begin{itemize}
        \item The answer \answerNA{} means that the paper does not include theoretical results. 
        \item All the theorems, formulas, and proofs in the paper should be numbered and cross-referenced.
        \item All assumptions should be clearly stated or referenced in the statement of any theorems.
        \item The proofs can either appear in the main paper or the supplemental material, but if they appear in the supplemental material, the authors are encouraged to provide a short proof sketch to provide intuition. 
        \item Inversely, any informal proof provided in the core of the paper should be complemented by formal proofs provided in appendix or supplemental material.
        \item Theorems and Lemmas that the proof relies upon should be properly referenced. 
    \end{itemize}

    \item {\bf Experimental result reproducibility}
    \item[] Question: Does the paper fully disclose all the information needed to reproduce the main experimental results of the paper to the extent that it affects the main claims and/or conclusions of the paper (regardless of whether the code and data are provided or not)?
    \item[] Answer: \answerYes{} 
    \item[] Justification: We have provided the experiment setting, specific baselines, hyper-parameters, the algorithm details, hardware and software details in the main text and Appendix.
    \item[] Guidelines:
    \begin{itemize}
        \item The answer \answerNA{} means that the paper does not include experiments.
        \item If the paper includes experiments, a \answerNo{} answer to this question will not be perceived well by the reviewers: Making the paper reproducible is important, regardless of whether the code and data are provided or not.
        \item If the contribution is a dataset and\slash or model, the authors should describe the steps taken to make their results reproducible or verifiable. 
        \item Depending on the contribution, reproducibility can be accomplished in various ways. For example, if the contribution is a novel architecture, describing the architecture fully might suffice, or if the contribution is a specific model and empirical evaluation, it may be necessary to either make it possible for others to replicate the model with the same dataset, or provide access to the model. In general. releasing code and data is often one good way to accomplish this, but reproducibility can also be provided via detailed instructions for how to replicate the results, access to a hosted model (e.g., in the case of a large language model), releasing of a model checkpoint, or other means that are appropriate to the research performed.
        \item While NeurIPS does not require releasing code, the conference does require all submissions to provide some reasonable avenue for reproducibility, which may depend on the nature of the contribution. For example
        \begin{enumerate}
            \item If the contribution is primarily a new algorithm, the paper should make it clear how to reproduce that algorithm.
            \item If the contribution is primarily a new model architecture, the paper should describe the architecture clearly and fully.
            \item If the contribution is a new model (e.g., a large language model), then there should either be a way to access this model for reproducing the results or a way to reproduce the model (e.g., with an open-source dataset or instructions for how to construct the dataset).
            \item We recognize that reproducibility may be tricky in some cases, in which case authors are welcome to describe the particular way they provide for reproducibility. In the case of closed-source models, it may be that access to the model is limited in some way (e.g., to registered users), but it should be possible for other researchers to have some path to reproducing or verifying the results.
        \end{enumerate}
    \end{itemize}

\item {\bf Open access to data and code}
    \item[] Question: Does the paper provide open access to the data and code, with sufficient instructions to faithfully reproduce the main experimental results, as described in supplemental material?
    \item[] Answer: \answerNA{} 
    \item[] Justification: Considering the policy limitation from company. We will open access to data and code after some time.  
    \item[] Guidelines:
    \begin{itemize}
        \item The answer \answerNA{} means that paper does not include experiments requiring code.
        \item Please see the NeurIPS code and data submission guidelines (\url{https://neurips.cc/public/guides/CodeSubmissionPolicy}) for more details.
        \item While we encourage the release of code and data, we understand that this might not be possible, so \answerNo{} is an acceptable answer. Papers cannot be rejected simply for not including code, unless this is central to the contribution (e.g., for a new open-source benchmark).
        \item The instructions should contain the exact command and environment needed to run to reproduce the results. See the NeurIPS code and data submission guidelines (\url{https://neurips.cc/public/guides/CodeSubmissionPolicy}) for more details.
        \item The authors should provide instructions on data access and preparation, including how to access the raw data, preprocessed data, intermediate data, and generated data, etc.
        \item The authors should provide scripts to reproduce all experimental results for the new proposed method and baselines. If only a subset of experiments are reproducible, they should state which ones are omitted from the script and why.
        \item At submission time, to preserve anonymity, the authors should release anonymized versions (if applicable).
        \item Providing as much information as possible in supplemental material (appended to the paper) is recommended, but including URLs to data and code is permitted.
    \end{itemize}

\item {\bf Experimental setting/details}
    \item[] Question: Does the paper specify all the training and test details (e.g., data splits, hyperparameters, how they were chosen, type of optimizer) necessary to understand the results?
    \item[] Answer: \answerYes{} 
    \item[] Justification: We have provided the experiment setting, specific baselines, optimizers, hyper-parameters, the algorithm details, hardware and software details in main paper and Appendix.
    \item[] Guidelines:
    \begin{itemize}
        \item The answer \answerNA{} means that the paper does not include experiments.
        \item The experimental setting should be presented in the core of the paper to a level of detail that is necessary to appreciate the results and make sense of them.
        \item The full details can be provided either with the code, in appendix, or as supplemental material.
    \end{itemize}

\item {\bf Experiment statistical significance}
    \item[] Question: Does the paper report error bars suitably and correctly defined or other appropriate information about the statistical significance of the experiments?
    \item[] Answer: \answerNo{} 
    \item[] Justification: We do not report the error bars in our experiments.
    \item[] Guidelines:
    \begin{itemize}
        \item The answer \answerNA{} means that the paper does not include experiments.
        \item The authors should answer \answerYes{} if the results are accompanied by error bars, confidence intervals, or statistical significance tests, at least for the experiments that support the main claims of the paper.
        \item The factors of variability that the error bars are capturing should be clearly stated (for example, train/test split, initialization, random drawing of some parameter, or overall run with given experimental conditions).
        \item The method for calculating the error bars should be explained (closed form formula, call to a library function, bootstrap, etc.)
        \item The assumptions made should be given (e.g., Normally distributed errors).
        \item It should be clear whether the error bar is the standard deviation or the standard error of the mean.
        \item It is OK to report 1-sigma error bars, but one should state it. The authors should preferably report a 2-sigma error bar than state that they have a 96\% CI, if the hypothesis of Normality of errors is not verified.
        \item For asymmetric distributions, the authors should be careful not to show in tables or figures symmetric error bars that would yield results that are out of range (e.g., negative error rates).
        \item If error bars are reported in tables or plots, the authors should explain in the text how they were calculated and reference the corresponding figures or tables in the text.
    \end{itemize}

\item {\bf Experiments compute resources}
    \item[] Question: For each experiment, does the paper provide sufficient information on the computer resources (type of compute workers, memory, time of execution) needed to reproduce the experiments?
    \item[] Answer: \answerYes{} 
    \item[] Justification: We have provided the experiment setting,  hardware and software details in main paper and Appendix.
    \item[] Guidelines:
    \begin{itemize}
        \item The answer \answerNA{} means that the paper does not include experiments.
        \item The paper should indicate the type of compute workers CPU or GPU, internal cluster, or cloud provider, including relevant memory and storage.
        \item The paper should provide the amount of compute required for each of the individual experimental runs as well as estimate the total compute. 
        \item The paper should disclose whether the full research project required more compute than the experiments reported in the paper (e.g., preliminary or failed experiments that didn't make it into the paper). 
    \end{itemize}
    
\item {\bf Code of ethics}
    \item[] Question: Does the research conducted in the paper conform, in every respect, with the NeurIPS Code of Ethics \url{https://neurips.cc/public/EthicsGuidelines}?
    \item[] Answer: \answerYes{} 
    \item[] Justification: This work does not incorporate any ethic concerns of NeurIPS. The datasets and models are commonly used in the community, and the method does not incorporate potential concerns.
    \item[] Guidelines:
    \begin{itemize}
        \item The answer \answerNA{} means that the authors have not reviewed the NeurIPS Code of Ethics.
        \item If the authors answer \answerNo, they should explain the special circumstances that require a deviation from the Code of Ethics.
        \item The authors should make sure to preserve anonymity (e.g., if there is a special consideration due to laws or regulations in their jurisdiction).
    \end{itemize}

\item {\bf Broader impacts}
    \item[] Question: Does the paper discuss both potential positive societal impacts and negative societal impacts of the work performed?
    \item[] Answer: \answerYes{} 
    \item[] Justification: We have discussed the broader impact in Section~\ref{apdx:impact} in Appendix.
    \item[] Guidelines:
    \begin{itemize}
        \item The answer \answerNA{} means that there is no societal impact of the work performed.
        \item If the authors answer \answerNA{} or \answerNo, they should explain why their work has no societal impact or why the paper does not address societal impact.
        \item Examples of negative societal impacts include potential malicious or unintended uses (e.g., disinformation, generating fake profiles, surveillance), fairness considerations (e.g., deployment of technologies that could make decisions that unfairly impact specific groups), privacy considerations, and security considerations.
        \item The conference expects that many papers will be foundational research and not tied to particular applications, let alone deployments. However, if there is a direct path to any negative applications, the authors should point it out. For example, it is legitimate to point out that an improvement in the quality of generative models could be used to generate Deepfakes for disinformation. On the other hand, it is not needed to point out that a generic algorithm for optimizing neural networks could enable people to train models that generate Deepfakes faster.
        \item The authors should consider possible harms that could arise when the technology is being used as intended and functioning correctly, harms that could arise when the technology is being used as intended but gives incorrect results, and harms following from (intentional or unintentional) misuse of the technology.
        \item If there are negative societal impacts, the authors could also discuss possible mitigation strategies (e.g., gated release of models, providing defenses in addition to attacks, mechanisms for monitoring misuse, mechanisms to monitor how a system learns from feedback over time, improving the efficiency and accessibility of ML).
    \end{itemize}
    
\item {\bf Safeguards}
    \item[] Question: Does the paper describe safeguards that have been put in place for responsible release of data or models that have a high risk for misuse (e.g., pre-trained language models, image generators, or scraped datasets)?
    \item[] Answer: \answerNA{} 
    \item[] Justification: This paper poses no such risks.
    \item[] Guidelines:
    \begin{itemize}
        \item The answer \answerNA{} means that the paper poses no such risks.
        \item Released models that have a high risk for misuse or dual-use should be released with necessary safeguards to allow for controlled use of the model, for example by requiring that users adhere to usage guidelines or restrictions to access the model or implementing safety filters. 
        \item Datasets that have been scraped from the Internet could pose safety risks. The authors should describe how they avoided releasing unsafe images.
        \item We recognize that providing effective safeguards is challenging, and many papers do not require this, but we encourage authors to take this into account and make a best faith effort.
    \end{itemize}

\item {\bf Licenses for existing assets}
    \item[] Question: Are the creators or original owners of assets (e.g., code, data, models), used in the paper, properly credited and are the license and terms of use explicitly mentioned and properly respected?
    \item[] Answer: \answerYes{} 
    \item[] Justification: The datasets and baselines, used libraries are properly credited.
    \item[] Guidelines:
    \begin{itemize}
        \item The answer \answerNA{} means that the paper does not use existing assets.
        \item The authors should cite the original paper that produced the code package or dataset.
        \item The authors should state which version of the asset is used and, if possible, include a URL.
        \item The name of the license (e.g., CC-BY 4.0) should be included for each asset.
        \item For scraped data from a particular source (e.g., website), the copyright and terms of service of that source should be provided.
        \item If assets are released, the license, copyright information, and terms of use in the package should be provided. For popular datasets, \url{paperswithcode.com/datasets} has curated licenses for some datasets. Their licensing guide can help determine the license of a dataset.
        \item For existing datasets that are re-packaged, both the original license and the license of the derived asset (if it has changed) should be provided.
        \item If this information is not available online, the authors are encouraged to reach out to the asset's creators.
    \end{itemize}

\item {\bf New assets}
    \item[] Question: Are new assets introduced in the paper well documented and is the documentation provided alongside the assets?
    \item[] Answer: \answerYes{} 
    \item[] Justification: The datasets and baselines, used libraries are well documented and cited.
    \item[] Guidelines:
    \begin{itemize}
        \item The answer \answerNA{} means that the paper does not release new assets.
        \item Researchers should communicate the details of the dataset\slash code\slash model as part of their submissions via structured templates. This includes details about training, license, limitations, etc. 
        \item The paper should discuss whether and how consent was obtained from people whose asset is used.
        \item At submission time, remember to anonymize your assets (if applicable). You can either create an anonymized URL or include an anonymized zip file.
    \end{itemize}

\item {\bf Crowdsourcing and research with human subjects}
    \item[] Question: For crowdsourcing experiments and research with human subjects, does the paper include the full text of instructions given to participants and screenshots, if applicable, as well as details about compensation (if any)? 
    \item[] Answer: \answerNA{} 
    \item[] Justification: The paper does not involve crowdsourcing nor research with human subjects.
    \item[] Guidelines:
    \begin{itemize}
        \item The answer \answerNA{} means that the paper does not involve crowdsourcing nor research with human subjects.
        \item Including this information in the supplemental material is fine, but if the main contribution of the paper involves human subjects, then as much detail as possible should be included in the main paper. 
        \item According to the NeurIPS Code of Ethics, workers involved in data collection, curation, or other labor should be paid at least the minimum wage in the country of the data collector. 
    \end{itemize}

\item {\bf Institutional review board (IRB) approvals or equivalent for research with human subjects}
    \item[] Question: Does the paper describe potential risks incurred by study participants, whether such risks were disclosed to the subjects, and whether Institutional Review Board (IRB) approvals (or an equivalent approval/review based on the requirements of your country or institution) were obtained?
    \item[] Answer: \answerNA{} 
    \item[] Justification: The paper does not involve crowdsourcing nor research with human subjects.
    \item[] Guidelines:
    \begin{itemize}
        \item The answer \answerNA{} means that the paper does not involve crowdsourcing nor research with human subjects.
        \item Depending on the country in which research is conducted, IRB approval (or equivalent) may be required for any human subjects research. If you obtained IRB approval, you should clearly state this in the paper. 
        \item We recognize that the procedures for this may vary significantly between institutions and locations, and we expect authors to adhere to the NeurIPS Code of Ethics and the guidelines for their institution. 
        \item For initial submissions, do not include any information that would break anonymity (if applicable), such as the institution conducting the review.
    \end{itemize}

\item {\bf Declaration of LLM usage}
    \item[] Question: Does the paper describe the usage of LLMs if it is an important, original, or non-standard component of the core methods in this research? Note that if the LLM is used only for writing, editing, or formatting purposes and does \emph{not} impact the core methodology, scientific rigor, or originality of the research, declaration is not required.
    \item[] Answer: \answerNA{} 
    \item[] Justification: LLM is used only for writing and editing.
    \item[] Guidelines:
    \begin{itemize}
        \item The answer \answerNA{} means that the core method development in this research does not involve LLMs as any important, original, or non-standard components.
        \item Please refer to our LLM policy in the NeurIPS handbook for what should or should not be described.
    \end{itemize}

\end{enumerate}

\end{document}